\documentclass{article} 
\usepackage{iclr2025_conference,times}


\usepackage{amsmath,amsfonts,bm}









\def\eqref#1{equation~\ref{#1}}









\def\1{\bm{1}}








\def\vb{{\bm{b}}}

\def\vd{{\bm{d}}}

\def\vg{{\bm{g}}}

\def\vk{{\bm{k}}}

\def\vo{{\bm{o}}}

\def\vq{{\bm{q}}}

\def\vu{{\bm{u}}}
\def\vv{{\bm{v}}}

\def\vx{{\bm{x}}}

\def\vz{{\bm{z}}}
\def\valpha{{\bm{\alpha}}}
\def\vbeta{{\bm{\beta}}}

\def\vtau{{\bm{\tau}}}


\def\mA{{\bm{A}}}
\def\mB{{\bm{B}}}
\def\mC{{\bm{C}}}

\def\mK{{\bm{K}}}

\def\mM{{\bm{M}}}

\def\mO{{\bm{O}}}

\def\mQ{{\bm{Q}}}

\def\mS{{\bm{S}}}

\def\mV{{\bm{V}}}
\def\mW{{\bm{W}}}
\def\mX{{\bm{X}}}
\def\mY{{\bm{Y}}}

\def\mLambda{{\bm{\Lambda}}}

\DeclareMathAlphabet{\mathsfit}{\encodingdefault}{\sfdefault}{m}{sl}
\SetMathAlphabet{\mathsfit}{bold}{\encodingdefault}{\sfdefault}{bx}{n}

\def\gA{{\mathcal{A}}}

\def\gO{{\mathcal{O}}}










\newcommand{\R}{\mathbb{R}}

\newcommand{\softmax}{\mathrm{softmax}}
\newcommand{\sigmoid}{\sigma}
\newcommand{\softplus}{\zeta}



\usepackage[utf8]{inputenc} 
\usepackage[T1]{fontenc}    
\usepackage{hyperref}       
\usepackage{url}            
\usepackage{booktabs}       
\usepackage{amsfonts}       
\usepackage{nicefrac}       
\usepackage{microtype}      
\usepackage{xcolor}         
\usepackage{graphicx}
\usepackage{url}
\usepackage{algorithm}
\usepackage{algpseudocode}
\usepackage{subcaption}
\usepackage{hyperref}
\usepackage{multirow}
\usepackage{makecell}
\usepackage{diagbox}
\usepackage{enumitem}
\usepackage{amsthm}
\usepackage{caption}
\usepackage{comment}
\usepackage{bbding}
\usepackage{wrapfig}
\usepackage{letltxmacro}
\usepackage[normalem]{ulem}

\LetLtxMacro{\originalref}{\ref}


\renewcommand*{\eqref}[1]{(\originalref{#1})}


\newtheorem{lemma}{Lemma}
\newtheorem{proposition}{Proposition}

\newcommand{\yh}[1]{\textcolor{black}{#1}}
\newcommand{\zh}[1]{\textcolor{black}{#1}}
\newcommand{\df}[1]{\textcolor{black}{#1}}

\AtBeginDocument{%
  \setlength\abovedisplayskip{0.48ex}
  \setlength\belowdisplayskip{0.48ex}
  }
\allowdisplaybreaks[4]

\title{Rodimus*: Breaking the Accuracy-Efficiency Trade-Off with Efficient Attentions}

\author{
Zhihao He$^{1,2}$\thanks{Equal contribution. This work was done when Zhihao He was a research intern at Ant Group.}, Hang Yu$^{2*}$, Zi Gong$^{2}$, Shizhan Liu$^{2}$, Jianguo Li$^{2}$\thanks{Corresponding authors.}, Weiyao Lin$^{1\dagger}$
\\
$^{1}$Shanghai Jiao Tong University, $^{2}$Ant Group
}

\iclrfinalcopy 
\begin{document}

\maketitle

\begin{abstract}
Recent advancements in Transformer-based large language models (LLMs) have set new standards in natural language processing. However, the classical softmax attention incurs significant computational costs, leading to a $\gO(T)$ complexity for per-token generation, where $T$ represents the context length. This work explores reducing LLMs' complexity while maintaining performance by introducing Rodimus and its enhanced version, Rodimus$+$. Rodimus employs an innovative data-dependent tempered selection (DDTS) mechanism within a linear attention-based, purely recurrent framework, achieving significant accuracy while drastically reducing the memory usage typically associated with recurrent models. This method exemplifies semantic compression by maintaining essential input information with fixed-size hidden states. Building on this, Rodimus$+$ combines Rodimus with the innovative Sliding Window Shared-Key Attention (SW-SKA) in a hybrid approach, effectively leveraging the complementary semantic, token, and head compression techniques. Our experiments demonstrate that Rodimus$+$-1.6B, trained on 1 trillion tokens, achieves superior downstream performance against models trained on more tokens, including Qwen2-1.5B and RWKV6-1.6B, underscoring its potential to redefine the accuracy-efficiency balance in LLMs. 
Model code and pre-trained checkpoints are open-sourced at \url{https://github.com/codefuse-ai/rodimus}.

\end{abstract}

\vspace{-3ex}
\section{Introduction}
\vspace{-1ex}

Recent advancements have positioned Transformer-based large language models (LLMs) at the forefront of natural language processing, establishing them as state-of-the-art. Their strong capabilities stem from the softmax attention mechanism, which selectively focuses on relevant tokens stored in the key-value (KV) cache when predicting the next token. By maintaining a historical record of tokens, the KV cache allows all pertinent data to remain accessible during inference. However, the demand to store this extensive historical information incurs notable computational costs, leading to $\gO(T)$ complexity for per-token generation, where $T$ represents the length of the context preceding the generated token. Indeed, a 7B Llama~\citep{touvron2023llama}), without inference optimization, takes over a minute to generate 2K-length sequences on an A10-24G~\citep{llm-perf-leaderboard}.

This has sparked research into next-generation foundation models aimed at efficient alternatives to attention. Among them, three main categories stand out that aim to compress the KV cache in the original softmax attention from distinct perspectives: semantic, token, and head compression.

\textbf{Semantic Compression}: The first category, also known as linear attention~\citep{katharopoulos2020transformers}, or linear state space models (SSMs)~\citep{mamba, mamba2}, substitutes the exponential kernel in softmax attention with a simplified inner product of \zh{(transformed)} query and key vectors. This approach results in a recurrently updated hidden state of fixed size that retains historical semantic information like linear RNNs~\citep{qin2024hierarchically, peng-etal-2023-rwkv}, successfully reducing the per-token generation complexity from $\gO(T)$ to $\gO(1)$. However, compressing the softmax attention—characterized by its unlimited capacity—into linear attention with fixed capacity inevitably leads to some information loss. To address this, current methods strive to (i) increase the recurrent state size to enhance memory capacity~\citep{qin2024hgrn} and (ii) utilize the fixed-sized states more effectively \citep{yang2024gated}. Despite these advancements, there remains a trade-off between complexity and performance (refer to Figure~\ref{fig:mem_size}). Blindly expanding the hidden states compromises time and space efficiency, while incorporating data-dependent decay or gating mechanisms to filter out irrelevant past information may hinder parallel training efficiency, as noted in~\citep{yang2024parallelizing}.

\textbf{Token Compression}: This second category introduces sparsity into the attention mask, allowing it to follow predefined patterns and focus on strategically chosen tokens, such as those at the beginning of a sequence or close to the answer~\citep{xiao2024efficient, han2024lm}. Consequently, this method can also achieve the $\gO(1)$ complexity for per-token generation. However, the elimination of masked tokens can lead to a complete loss of information and potentially degrade the overall performance.

\textbf{Head Compression}: The third category reshapes attention by modifying the design of attention heads. This involves grouping heads and sharing keys and values within each group, as seen in multi -query attention (MQA)~\citep{shazeer2019fasttransformerdecodingwritehead} and grouped-query attention (GQA)~\citep{ainslie-etal-2023-gqa}. While these methods reduce the cache size by a constant factor, they are inherently lossy compared to multi-head attention (MHA)~\citep{vaswani2017attention} as the same values are used within each group.

\begin{figure}[t]
    \vspace{-35pt}
    \centering
    \begin{subfigure}[b]{.45\textwidth}
         \centering         
         \includegraphics[width=\textwidth]{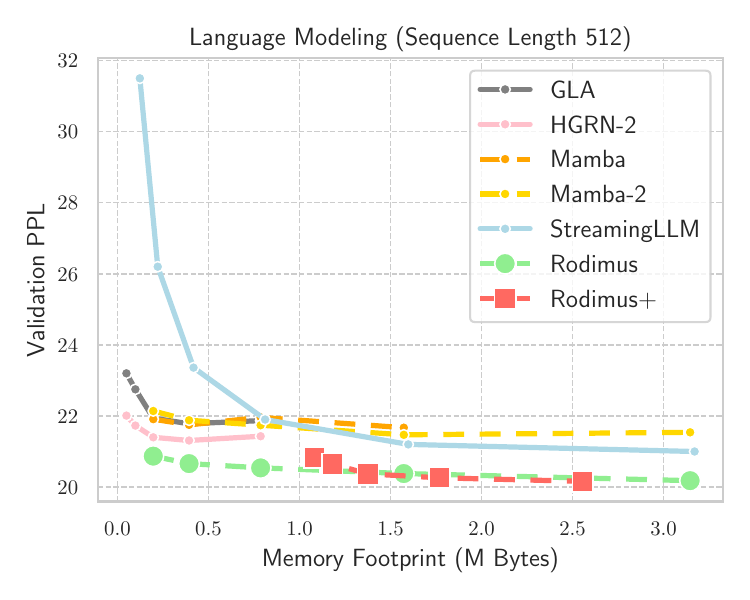}
         \vspace{-21pt}
         \caption{LM Task on the WikiText-103 Dataset.}
         \label{fig:mem_size_wt103}
     \end{subfigure}
     \hspace{10pt}
     \begin{subfigure}[b]{.45\textwidth}
         \centering
         \includegraphics[width=\textwidth]{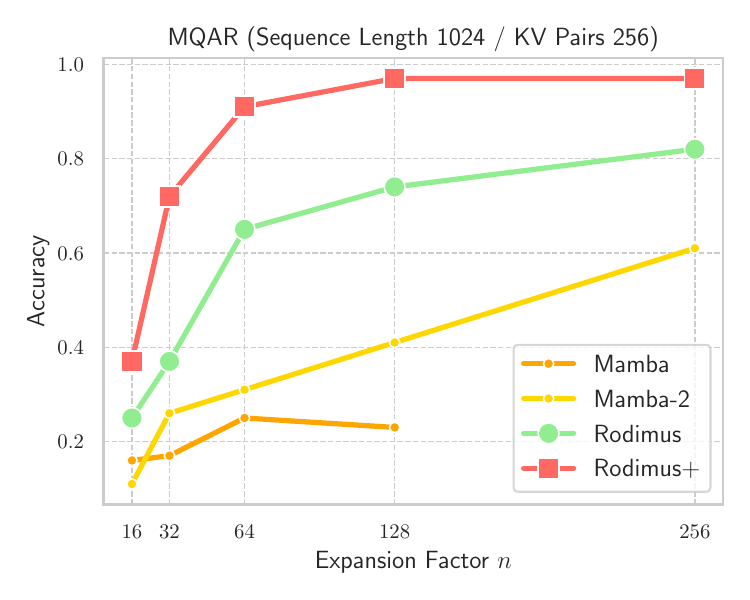}
         \vspace{-21pt}
         \caption{\zh{Multi-Query Associative Recall (MQAR)} Task.}
         \label{fig:mem_size_mqar}
     \end{subfigure}
    \vspace{-7pt}
    \caption{
    Memory Footprint vs. Performance: (a) This experiment is conducted on the WikiText-103 dataset (\df{Details in Appendix~\ref{app:analysis_mem_size}}), and focuses on the recorded best perplexity (PPL). The memory footprint is adjusted by modifying the expansion factor. (b) The model's recall capability is assessed using the MQAR Task \df{(See Appendix~\ref{app:mqar})}, as described in \citet{arora2024zoology}. Among all models evaluated, Rodimus* achieves the optimal balance between space complexity and performance.
    }
    \label{fig:mem_size}
    \vspace{-19pt}
\end{figure}

Ultimately, none of these methods can fully supplant the original softmax attention, prompting an essential question: \emph{Is it feasible to reduce the complexity of LLMs while preserving performance?} Our research affirms this possibility, showing that a linear attention model enhanced with a well-designed data-dependent tempered selection (DDTS) mechanism—an improvement within the first category—and its hybridization with our proposed sliding window shared-key attention (SW-SKA), which integrates all three types of attention compressions, offers viable solutions. We designate the former model as Rodimus and the latter as Rodimus$+$. Remarkably, \textbf{Rodimus} maintains a much smaller hidden state size than most current SOTA recurrent models (e.g., \textbf{half} of the size in Mamba2~\citep{mamba2}), but outperforms softmax attention-based Transformers. Moreover, \textbf{Rodimus$+$}-1.6B (trained on 1 trillion tokens) achieves an average performance that is 0.31\% higher than Qwen2-1.5B (trained on 7 trillion tokens) and 2.3\% higher than RWKV6-1.6B (trained on 1.4 trillion tokens) in downstream task evaluations. Our contributions can be summarized as follows:

\begin{itemize}[leftmargin=*,itemsep=0pt,topsep=0pt,partopsep=0pt]
\item We introduce a linear attention-based, purely recurrent model, Rodimus, effectively overcoming the accuracy-efficiency trade-off found in existing recurrent models. By incorporating the innovative DDTS, Rodimus can autonomously filter out irrelevant information during semantic compression, resulting in improved validation perplexity (PPL) with much smaller memory footprints compared to other SOTA methods, as illustrated by the green curves in Figure~\ref{fig:mem_size}.
\item We present a hybrid model, Rodimus$+$, which combines Rodimus with the innovative SW-SKA. While the recurrent hidden state in Rodimus offers a comprehensive semantic view of the historical context, the sliding window attention highlights the nearby tokens that are most influential in predicting the next token, and SKA further compresses cache size in a lossless manner. As shown by the red curves in Figure~\ref{fig:mem_size}, Rodimus$+$ continues to extend the accuracy-efficiency frontier of existing methods, achieving superior performance with reduced complexity.
\item We validate the effectiveness of Rodimus* (encompassing both Rodimus and Rodimus$+$) through thorough experimentation. In downstream evaluations with model sizes from 130M to 1.3B, Rodimus* demonstrates enhanced language modeling performance relative to current SOTA models of similar sizes. 
\zh{Furthermore, Rodimus* achieves a performance improvement of up to 7.21\% over Mamba2, and outperforms even softmax attention-based Pythia on NeedleBench—a suite of recall-intensive tasks where recurrent models typically underperform~\citep{waleffe2024empirical}.}
\end{itemize}

\vspace{-1ex}
\section{Preliminaries}
\label{sec:preliminaries}
\vspace{-1.2ex}

In this section, we provide a brief introduction to softmax attention in \citet{vaswani2017attention} and derive its recurrent form. We will then derive the aforementioned three attention compression methods, including linear attention for semantic compression, sparse attention for token compression, and sharing-based attention for head compression. The first two methods simplify attention by reducing context length, while the last one makes modifications along the dimension of attention heads.

\textbf{Softmax Attenion and its Recurrent Form}: Suppose that $\mX = \{\vx_1, \dots, \vx_T \} \in \R^{T \times d}$ denotes the input sequence with length $T$ and dimension $d$. Standard autoregressive Transformers~\citep{vaswani2017attention} utilize a softmax attention mechanism to generate the output $\mO\in\R^{T \times d}$, that is,
\begin{align}
    \mQ, \mK, \mV = \mX \mW_Q, \mX \mW_K, \mX \mW_V,  
    \quad \mO =  \softmax\left((\mQ \mK^\top) \odot \mM \right) \mV, \label{eq:attention_parallel}
\end{align}
where \zh{$\mW_Q\in \R^{d \times n},\mW_K\in \R^{d \times n},\mW_V \in \R^{d \times m}$} are learnable weight \df{matrices} associated with the queries $\mQ$, the keys $\mK$, and values $\mV$, respectively. The matrix $\mM\in\{-\infty, 1\}\in\R^{T\times T}$ denotes the upper-triangular attention mask, ensuring that the model does not attend to future tokens. Multi-Head Attention (MHA) further splits the dimensionality $d$ into $h$ heads, computing attention for each head in $\R^{d_h}$ individually with distinct weights. This approach enhances the model's capacity to capture diverse sequential information while reducing the computational costs for matrix operations.

The above parallel form allows for the computation of $\mO = \{\vo_1,\dots,\vo_T\}$ in parallel given the full input $\mX$, facilitating efficient training. In contrast, during inference, Transformers rely on the following recurrent formulation~\citep{katharopoulos2020transformers}:
\begin{align}
    \vq_t, \vk_t, \vv_t = \vx_t\mW_Q, \vx_t\mW_K, \vx_t\mW_V, \quad
    \vo_t = \frac{\sum_{i=1}^t \exp(\vq_t\vk_i^\top)\vv_i}{\sum_{i=1}^t \exp(\vq_t\vk_i^\top)}. \label{eq:recurrent_form}
\end{align}
Here, the query $\vq_t \in \R^{1\times n}$, key $\vk_t\in \R^{1\times n}$, and value $\vv_t\in \R^{1\times m}$ vectors are computed based on the representation of current token $\vx_t \in \R^{1\times d}$. Attention is subsequently performed over the evolving collection of keys $\{\vk_1,\dots,\vk_t\}$ and values $\{\vv_1,\dots,\vv_t\}$ (i.e., the KV cache). Thus, the time and space complexity for generating the next token at time stamp $t$ is $\gO(t)$.

\textbf{Linear Attention for Semantic Compression}: To optimize efficiency, linear attention mechanisms replace the exponential kernel $\exp(\vq_t\vk_i^\top)$ in Eq.~\eqref{eq:recurrent_form} by a kernel $k(\vq_t,\vk_i)$ paired with an associated feature map $\phi$, i.e., $k(\vq_t,\vk_i) = \phi(\vq_t)\phi(\vk_i)^\top$. As a result, the calculation of $\vo_t$ can be simplified as: 
\begin{align}
    \vo_t = \frac{\sum_{i=1}^t \phi(\vq_t)\phi(\vk_i)^\top\vv_i}{\sum_{i=1}^t \phi(\vq_t)\phi(\vk_i)^\top} = \frac{\phi(\vq_t)\sum_{i=1}^t \phi(\vk_i)^\top\vv_i}{\phi(\vq_t)\sum_{i=1}^t \phi(\vk_i)^\top}.
\end{align}
Letting $\mS_t = \sum_{i=1}^t \phi(\vk_i)^\top \vv_i \in\R^{n\times m}$ and $\vz_t = \sum_{i=1}^t \phi(\vk_i)^\top \in\R^{n\times 1}$ be the KV state and the K state respectively, we can rewrite the previous equation as a linear state-space model (SSM) or RNN:
\begin{align} \label{eq:ssm}
    \mS_t = \mS_{t-1} + \phi(\vk_t)^\top \vv_t, \quad \vz_t = \vz_{t-1} + \phi(\vk_t)^\top, \quad \vo_t = \frac{\phi(\vq_t)\mS_t}{\phi(\vq_t)\vz_t}.
\end{align}
We emphasize that $\mS_t$ provides a semantic compression of the historical context up to $t$. The denominator $\phi(\vq_t)\vz_t\in\R$ may introduce numerical instabilities and hinder the optimization~\citep{qin2022devil},
prompting many recent studies to replace it with a normalization~\citep{qin2022devil, sun2023retentive}. Moreover, it is common to use the identity mapping for $\phi$~\citep{sun2023retentive, yang2024gated}, and so Eq.~\eqref{eq:ssm} amounts to
\begin{align}
    \mS_t = \mS_{t-1} + \vk_t^\top \vv_t,\quad \vo_t = \vq_t\mS_t.
\end{align}
More generally, when framed within the context of linear \textbf{SSMs}~\citep{ mamba, mamba2} or \textbf{RNNs}~\citep{ de2024griffin}, these equations can be reformulated as:
\begin{align}  \label{eq:ssm_equivalent}
    \mS_t = \mA_t \odot \mS_{t-1} + \mB_t \odot \vu_t, \quad \vo_t = \mC_t\mS_t,
\end{align}
where $\mA_t = 1$, $\mB_t = 1$, and $\mC_t = \vq_t$ represent the state transition, input, and output matrix respectively, while $\vu_t = \vk_t^\top \vv_t$ signifies the input or control. In stark contrast to softmax attention, the per-token generation complexity at time stamp $t$ is $\gO(1)$, enabling efficient inference. Meanwhile, linear attention can be expressed in a parallel form similar to Eq.~\eqref{eq:attention_parallel}, allowing for training parallelism.

However, this additive formulation of updating the hidden states $\mS_t$ with new key-value pairs at each time step $t$ does not possess the capability to forget irrelevant information, which leads to the phenomenon known as attention dilution~\citep{qin2022cosformer, qin2022devil}. To address this concern, recent works (i) increase the state size (either $n$ or \zh{$d_h$}) to retain more information and (ii) incorporate decay factors or gating mechanisms that enable the state $\mS_t$ to discard irrelevant past information. The first approach typically sacrifices efficiency for performance, whereas the second method focuses on designing $\mA_t$ and $\mB_t$ to manage memory retention and forgetting behaviors within the hidden states. 

We summarize the functional forms of $\mA_t$ and $\mB_t$ used in the SOTA methods in Table~\ref{tab:ab_in_models}. Three major trends can be gleaned from this table. Firstly, it is beneficial for $\mA_t$ and $\mB_t$ to be negatively correlated, as shown in Mamba~\citep{mamba}, Mamba2~\citep{mamba2}, and HGRN2~\citep{qin2024hgrn}, allowing them to collaboratively regulate the deletion and addition of information in the hidden state $\mS_t$. This contrasts with methods like RetNet \citep{sun2023retentive}, gRetNet \citep{sun2024cacheoncedecoderdecoderarchitectures}, and GLA \citep{yang2024gated}, which use $\mA_t$ as a decay factor while keeping $\mB_t$ constant at 1. Secondly, allowing $\mA_t$ and $\mB_t$ to be functions of the input $\vu_t$ enables dynamic adjustments over time in a data-dependent manner. This approach has been validated by the superior performance of GLA~\citep{yang2024gated}, gRetNet~\citep{sun2024cacheoncedecoderdecoderarchitectures} compared to RetNet\citep{sun2023retentive}. 
\zh{Lastly, designs for gating mechanisms must be compatible with GPU acceleration, ensuring that the recurrent expression \eqref{eq:ssm_equivalent} aligns with a parallel format, similarly to \eqref{eq:attention_parallel}. }

In Section~\ref{sec:rodimus}, we further analyze these functional forms in relation to their equivalence with linear attention and propose a novel data-dependent tempered selection (DDTS) mechanism. This mechanism is capable of succinctly compressing historical information within a recurrent hidden state of fixed capacity, all while facilitating parallel training.

\textbf{Sparse Attention for Token Compression}: This group of methods aim to sparsify the attention mask $\mM$ in Eq.~\eqref{eq:attention_parallel}. The goal is to compute only the $(\vq_t,\vk_i)$ pairs associated with nonzero elements of $\mM_{ti}$. Various types of attention masks have been proposed in the literature, including window attention~\citep{child2019generating}, dilated attention~\citep{Beltagy2020Longformer}, and bridge attention~\citep{guo2023longcoder}, among others. Research shows that the softmax attention maps in vanilla Transformer models often exhibit a localized behavior~\citep{qin2022devil, qin2022cosformer, xiao2024efficient}. Thus, we exploit a sliding window (SW) attention to enhance the local context understanding in Rodimus$+$.

\textbf{Sharing-based Attention for Head Compression}: In the original MHA, each head has distinct linear transformations $\mW_Q, \mW_K, \mW_V$ for the input $\mX$, allowing for diverse representations and attention maps across heads. Sharing-based attention seek to compress the MHA by allowing key and value heads to be shared among \zh{multi-query heads}. Specifically, in MQA~\citep{shazeer2019fasttransformerdecodingwritehead}, all heads use \zh{a single} set of key and value weights, which reduces parameters and memory usage but risks diminishing the attention mechanism's expressiveness. As a better alternative, GQA assigns one key and value head for each group of query heads~\citep{ainslie-etal-2023-gqa}. However, it still limits the expressiveness of the original MHA by overly constraining learned relationships. As a remedy, we propose the Shared-Key Attention (SKA) in Section~\ref{sssec:ska}, which compresses the MHA while preserving its expressiveness.

\vspace{-1ex}
\section{Methodology}
\vspace{-1ex}

In this section, we introduce two models: Rodimus and Rodimus$+$, whose architecture is depicted in Figure~\ref{fig:overview}. Rodimus is a purely recurrent model that iteratively compresses historical context into a fixed-size hidden state (i.e., semantic compression) and further exploits this state for the next token prediction. With the equipment of the newly proposed DDTS mechanism, Rodimus effectively filters out irrelevant information, thereby enhancing performance while reducing the size of the hidden state. On the other hand, Rodimus$+$ builds upon Rodimus by integrating the proposed SW-SKA technique. This enhancement improves performance without sacrificing efficiency, allowing for a seamless combination of semantic, token, and head compression methods.

\begin{figure*}[t]
    \vspace{-10pt}
    \centering
    \includegraphics[width=0.95\linewidth]{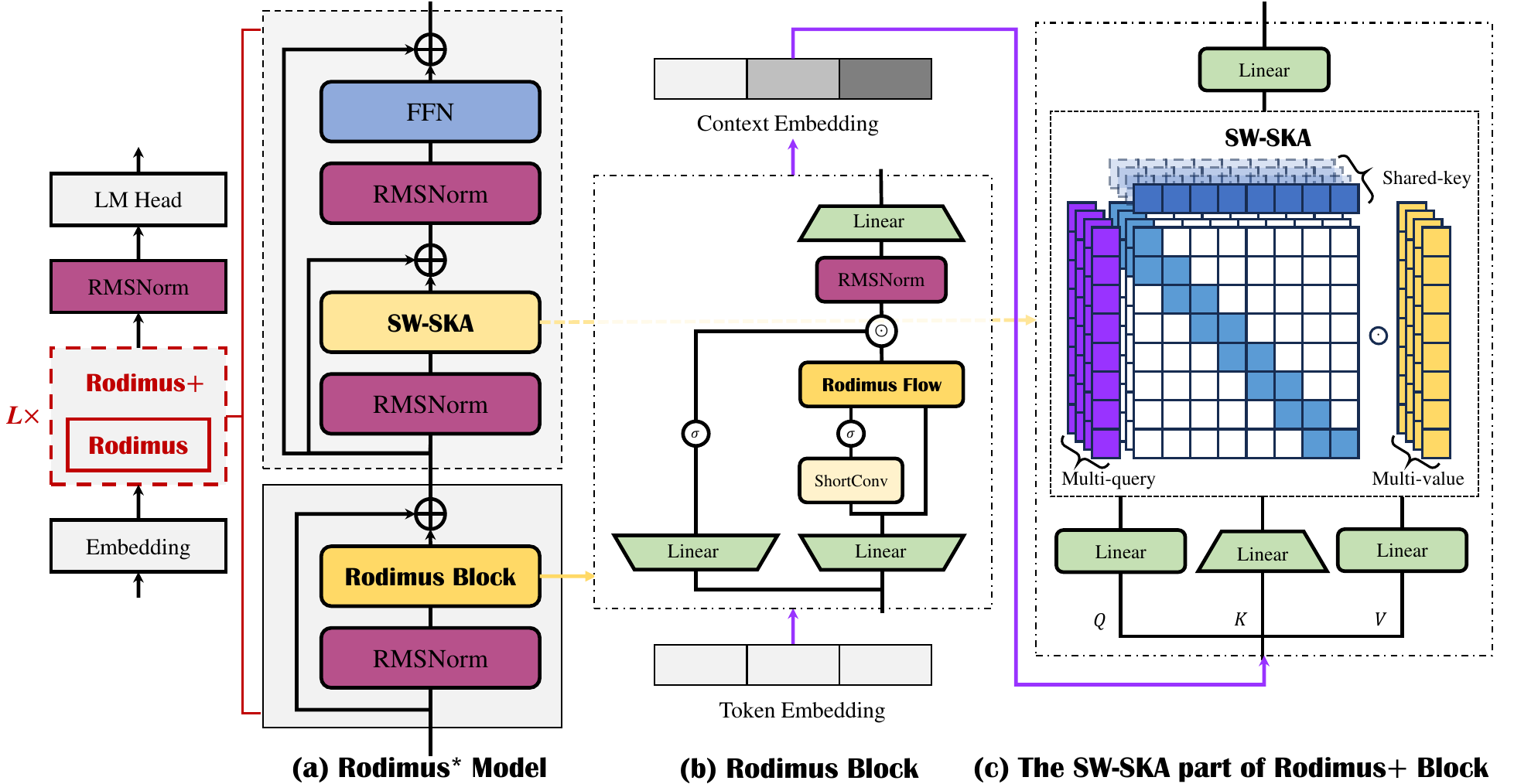}
    \vspace{-5pt}
    \caption{
    Overview of the Proposed Models. The \textbf{Rodimus* Model} serves as a template for both Rodimus and Rodimus$+$. When modules within the gray dashed box are included, it becomes the Rodimus$+$ Model; otherwise, it is the Rodimus Model. The architecture comprises $L$ layers of stacked \textbf{Rodimus* Blocks} along with essential modules for language modeling (e.g., Embedding, RMSNorm, LM Head). Rodimus Flow~\eqref{eq:rodimus_recurrent_form} denotes our proposed recurrent computation method. The purple arrows depict information flow between layers (ignoring RMSNorm). In the Rodimus$+$ Model, the Rodimus block compresses the Token Embedding into a \yh{global semantic} embedding, enhancing SW-SKA's ability to perceive long contexts.
    }
    \label{fig:overview}
    \vspace{-15pt}
\end{figure*}

\vspace{-1ex}
\subsection{Rodimus: Overcoming Performance Bottleneck in Semantic Compression}
\label{sec:rodimus}

\subsubsection{Analysis of Existing Linear Attention Models}
\label{sec:beta_analysis}
\vspace{-3pt}

As can be observed from Table~\ref{tab:ab_in_models}, the state transition equation in all existing linear attention models can be expressed recurrently as:
\begin{align}
    \mS_t = (\valpha_t^\top \vbeta_t)\odot \mS_{t-1} + (\hat\valpha_t^\top \hat\vbeta_t)\odot (\vk_t^\top \vv_t) = (\valpha_t^\top \vbeta_t)\odot \mS_{t-1} + (\hat\valpha_t \odot \vk_t )^\top (\hat\vbeta_t \odot \vv_t), 
    \label{eq:general_state_transition}
\end{align}
where $\valpha_t \in \R^{1\times n}$ and $\vbeta_t \in \R^{1\times m}$ denotes the gating mechanism along the dimension of $n$ and $m$ respectively, and $\hat\valpha_t$ and $\hat\vbeta_t$ are negatively correlated with $\valpha_t$ and $\vbeta_t$, allowing them to select between the current input $\vu_t = \vk_t^\top\vv_t$ and the previous state $\mS_{t-1}$. Substituting Eq.~\eqref{eq:general_state_transition} into Eq.~\eqref{eq:ssm_equivalent} and unfolding the recurrent relation yields:
\begin{align}
    \vo_t =  \sum_{i=1}^t \bigg[\vq_t\big(\prod_{j=i+1}^t\valpha_j\odot\hat\valpha_i\odot\vk_i\big)^\top\bigg]\big(\prod_{j=i+1}^t\vbeta_j\odot\hat\vbeta_i\odot\vv_i\big).
\end{align}
Here, $\prod_{j=i+1}^t\valpha_j$ captures the relative positional information between time stamp $t$ and $i$, specifically between the query $\vq_t$ and the key $\vk_i$. Indeed, this component acts as a more flexible version of relative positional embeddings for softmax attention (e.g., xPos~\citep{sun-etal-2023-length} and ALiBi~\citep{alibi}). In addition, the elements within the vector $\valpha_j$ vary from one another, enabling the capture of high- and low-frequency information akin to sinusoidal positional encoding~\citep{vaswani2017attention} and RoPE~\citep{su2021roformer}. \yh{Notably, $\prod_{j=i+1}^t\valpha_j$ also incorporates absolute positional information since it is also a function of the absolute positions $j = i+1,\dots,t$. Consequently, linear attention models expressed in the form of Eq.~\eqref{eq:general_state_transition} capture first-order dependencies within sequences in a position-aware manner. In contrast, the original softmax attention~\eqref{eq:attention_parallel} captures only second-order dependencies in a position-agnostic framework, necessitating the addition of positional embeddings~\citep{ren2023sparse}.}

On the other hand, $\prod_{j=i+1}^t\vbeta_j$ also regulates the relative positional information between the time stamp $t$ and $i$, but between the query $\vq_t$ and the value $\vv_i$. This aspect has not been witnessed in softmax attention. Moreover, earlier studies~\citep{yang2024gated} and our ablation study (see Table~\ref{tab:beta_type}) indicate that incorporating $\prod_{j=i+1}^t\vbeta_j$ cannot result in improvements compared to setting $\vbeta_j = \mathbf{1}_m$, probably because such positional information has already been described by $\prod_{j=i+1}^t\valpha_j$. 
\zh{Moreover, including $\vbeta_j$ can impede training efficiency, as the dimensionality of $\vbeta_t \in \mathbb{R}^{1 \times m}$ is significantly larger than that of $\valpha_t \in \mathbb{R}^{1 \times n}$. This discrepancy can lead to redundant I/O operations when $\vbeta_j$ are frequently transferred between high bandwidth memory (HBM) and on-chip SRAM during chunkwise parallelism (cf. Appendix~\ref{app:rodimus_chunkwise_form}).}

As a consequence, our focus is on optimizing the design of $\valpha_t$ while setting $\vbeta_t = \mathbf{1}_m$ for all $t$. We retain $\hat\vbeta_t$ to allow for flexible selection among the various elements in the value vector $\vv_t$. 

\vspace{-3pt}
\subsubsection{Design of $\alpha_t$, $\hat\alpha_t$ and $\hat\beta_t$}
\label{sec:gate_design}
\vspace{-3pt}

We propose the following formulations for $\valpha_t$ and $\hat\valpha_t$:
\begin{align}
\label{eq:alpha_def}
    \valpha_t = \exp(-\vg_t \odot \vtau_t),  \quad
    \hat\valpha_t = \vg_t^{\vtau_t},
\end{align}
where $\vg_t = \softplus(\vx_t \mW_g + \vb_g)$ and $\vtau_t = \sigmoid(\vx_t\mW_\tau + \vb_\tau)$. In this design, $\vg_t$ serves as a selection gate, determining whether to retain the previous state $\mS_{t-1}$ or incorporate the current input $\vk_t$. Note that $\exp(-\vg_t) = 1 - \sigmoid(\vx_t \mW_g + \vb_g)$, thus ensuring that $\valpha_t \in [0, 1]$ and preventing complete oblivion of the previous state. In contrast, $\hat\valpha_t$ can assume values greater than 1 due to the softplus function. This \yh{asymmetry} between discarding the previous state and retaining the current introduces greater flexibility into the state transition equation~\citep{mamba}.

Furthermore, different from all existing works, we introduce a temperature gate $\vtau_t$ that governs the sharpness or sensitivity of the selection gate $\vg_t$. As visualized in Figure~\ref{fig:curve_gates_with_temperature} in the appendix, as $\vtau_t$ decreases, both $\valpha_t$ and $\hat\valpha_t$ changes more slowly with $\vg_t$. Note that $\vtau_t$ is a function of the input $\vx_t$, as opposed to being a fixed constant as is the case in gRetNet~\citep{sun2024cacheoncedecoderdecoderarchitectures} and \df{GLA~\citep{yang2024gated}}. This provides an additional degree of freedom in the selection process. Indeed, recent findings indicate that tempered losses can improve robustness against noise during training~\citep{papernot2021tempered,wang2023mumic}. In our case, we find that $\vtau_t$ helps sharpen the original selection gate $\vg_t$ (see Figure~\ref{fig:vis_gates_with_temperature} in the appendix), thereby facilitating more aggressive filtering of irrelevant information.

On the other hand, $\hat\vbeta_t$ can be expressed as:
\begin{align} \label{eq:beta_def}
    \hat\vbeta_t = \sigmoid(\vx_t \mW_{\hat\beta}^1\mW_{\hat\beta}^2 + \vb_{\hat\beta}),
\end{align}
where $\mW_{\hat\beta}^1 \in \R^{m\times \ell}$ and $\mW_{\hat\beta}^2 \in \R^{\ell\times m}$ are two low-rank matrices (i.e., $\ell < m$). This low-rank formulation helps mitigate noise in the input while keeping the overall increase in model parameters manageable, thus providing parameter efficiency~\citep{hu2022lora}. 
\zh{This comprehensive design is referred to as the data-dependent tempered selection (DDTS) mechanism and is proven to be a selection mechanism as demonstrated below:}

\begin{proposition}
\label{proposition:rodimus_is_selection}
Given the specifications for $\mA_t$ and $\mB_t$ in Eqs.~\eqref{eq:ssm_equivalent}, \eqref{eq:general_state_transition}, \eqref{eq:alpha_def}, and \eqref{eq:beta_def}, DDTS can realize the selection between the previous state $\mS_{t-1}$ and the current input $\vu_t$.
\end{proposition}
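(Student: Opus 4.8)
The plan is to make precise the informal notion of ``selection'' and then verify it by a short limiting analysis of the gating vectors. I read ``DDTS realizes the selection between $\mS_{t-1}$ and $\vu_t$'' as the following claim: working coordinatewise, the gated update in \eqref{eq:general_state_transition} can be driven, through admissible data-dependent inputs, to either of two extreme regimes---pure \emph{retention} ($\mS_t \approx \mS_{t-1}$, with the input ignored) and pure \emph{overwrite} (the update dominated by the current input $\vu_t$)---and can interpolate continuously between them. First I would substitute $\vbeta_t = \mathbf{1}_m$ (the choice fixed in Section~\ref{sec:beta_analysis}) into \eqref{eq:general_state_transition}, so that with $\mA_t = \valpha_t^\top \mathbf{1}_m$ and $\mB_t = \hat\valpha_t^\top \hat\vbeta_t$ the $(i,j)$ entry of the state obeys $[\mS_t]_{ij} = [\valpha_t]_i [\mS_{t-1}]_{ij} + [\hat\valpha_t]_i [\hat\vbeta_t]_j [\vu_t]_{ij}$. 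This isolates the selection as a competition, along the key dimension $i$, between the retention gate $[\valpha_t]_i$ and the input gate $[\hat\valpha_t]_i$.

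The core of the argument is then a limiting computation using \eqref{eq:alpha_def}, namely $[\valpha_t]_i = \exp(-[\vg_t]_i [\vtau_t]_i)$ and $[\hat\valpha_t]_i = [\vg_t]_i^{[\vtau_t]_i}$ with $[\vtau_t]_i \in (0,1)$. As $[\vg_t]_i \to 0^+$ one gets $[\valpha_t]_i \to 1$ and $[\hat\valpha_t]_i \to 0$, so $[\mS_t]_{ij} \to [\mS_{t-1}]_{ij}$: the coordinate retains its previous state and discards the input. As $[\vg_t]_i \to \infty$ one gets $[\valpha_t]_i \to 0$ and $[\hat\valpha_t]_i \to \infty$, so the retention term vanishes and the update is dominated by the input term $[\hat\valpha_t]_i [\hat\vbeta_t]_j [\vu_t]_{ij}$: the coordinate overwrites. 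Because $[\valpha_t]_i$ is continuous and strictly decreasing in $[\vg_t]_i$ while $[\hat\valpha_t]_i$ is continuous and strictly increasing---the negative correlation highlighted after \eqref{eq:general_state_transition}---the intermediate value theorem yields every intermediate blend of the two regimes, establishing genuine selection rather than a fixed mixing. I would also record the identity $\exp(-[\vg_t]_i) = 1 - \sigmoid([\vx_t \mW_g + \vb_g]_i)$ to confirm $[\valpha_t]_i \in (0,1)$ at $[\vtau_t]_i = 1$, which explains why retention never collapses to total oblivion.

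It remains to argue that these regimes are \emph{reachable} by the data-dependent parametrization, which is what makes the selection input-driven. Since $\vg_t = \softplus(\vx_t \mW_g + \vb_g)$ maps onto $(0,\infty)^n$, and $\vtau_t = \sigmoid(\vx_t \mW_\tau + \vb_\tau)$ and $\hat\vbeta_t$ depend on $\vx_t$ through \emph{separate} weight matrices, I can fix $\vtau_t$ bounded away from $0$ (e.g.\ via $\vb_\tau$) and then sweep each $[\vg_t]_i$ across $(0,\infty)$ by varying $\vx_t$, realizing both extremes and everything in between; the temperature $\vtau_t$ rescales the sharpness of this sweep, while $\hat\vbeta_t \in (0,1)^m$ refines which components of the value $\vv_t$ are written, per \eqref{eq:beta_def}.

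The main obstacle is not computational but definitional: the statement is informal, so the real work is pinning down a notion of ``selection'' that is faithful to the Mamba-style mechanism it generalizes yet strong enough to be meaningful, and then making the reachability step rigorous despite $\vg_t$ and $\vtau_t$ sharing the input $\vx_t$. I expect to handle this by exhibiting explicit (not merely limiting) parameter choices that separately control retention and overwrite, so that the coexistence of the two regimes is witnessed by genuine inputs rather than only asymptotically.
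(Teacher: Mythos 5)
Your proof is correct, and its mathematical core---that $\valpha_t=\exp(-\vg_t\odot\vtau_t)$ is strictly decreasing while $\hat\valpha_t=\vg_t^{\vtau_t}$ is strictly increasing in the shared gate $\vg_t$---is exactly the negative-correlation fact the paper exploits, but you package it along a genuinely different route. The paper first pins down ``selection'' by a lemma (Lemma~\ref{lemma:selection}): writing both gate maps as functions $\phi_A,\phi_B$ of a shared pre-activation $\boldsymbol{\eta}_t=(\vx_t\mW_g+\vb_g)^\top\mathbf{1}_m$, the mechanism is declared a selection mechanism iff the product of derivatives $\frac{d\phi_A}{d\eta}\cdot\frac{d\phi_B}{d\eta}$ is negative entrywise; the proposition is then dispatched by a one-line computation with $\phi_A=\exp(-\softplus(\boldsymbol{\eta}_t))$ and $\phi_B=\softplus(\boldsymbol{\eta}_t)$, giving $-\sigmoid(\eta)^2\bigl(1-\sigmoid(\eta)\bigr)<0$, and the temperature $\vtau_t$ is explicitly dropped ``for simplicity.'' You instead adopt a stronger, operational definition---reachability of the pure-retention and pure-overwrite regimes plus continuous interpolation via the intermediate value theorem---and verify it globally (limits as $[\vg_t]_i\to 0^+$ and $[\vg_t]_i\to\infty$, strict monotonicity) while keeping $\vtau_t$ in the analysis. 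What the paper's formulation buys is brevity and immunity to the reachability question: the sign condition is checked pointwise at whatever $\boldsymbol{\eta}_t$ the data produces, so it is irrelevant that $\vg_t$ and $\vtau_t$ are driven by the same $\vx_t$. What your formulation buys is a more meaningful conclusion (both extremes are actually attainable by admissible inputs, not merely that the gates move in opposite directions) and coverage of the temperature gate the paper elides; the price is precisely the reachability step you flag at the end, which should be closed by an explicit construction---e.g.\ taking $\mW_\tau=\mathbf{0}$ so that $\vtau_t$ is a constant determined by $\vb_\tau$, or otherwise keeping $\vtau_t$ bounded away from $0$ along the sweep of $\vx_t$---so that the limits of $\vg_t^{\vtau_t}$ and $\exp(-\vg_t\odot\vtau_t)$ are not spoiled by the temperature drifting toward $0$ as the input varies. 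With that construction spelled out, your argument is a complete and strictly stronger proof of the proposition than the one in the paper.
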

\vspace{-15pt}
\begin{proof}
    See Appendix~\ref{app:proof_rodimus_is_selection}.
\end{proof}
\vspace{-10pt}

\vspace{-2pt}
\subsubsection{The Overall Rodimus Block}
\label{sec:overall_rodimus_block}
\vspace{-2pt}

The SSM formulation for the overall Rodimus block can be articulated as follows:
\begin{align}
\label{eq:rodimus_recurrent_form}
    \mS_t = \big(\exp(-\vg_t \odot \vtau_t)^\top \mathbf{1}_m\big)\odot\mS_{t-1} + \big((\vg_t^{\vtau_t})^\top \hat\vbeta_t\big)\odot(\vk_t^\top \vv_t), \quad
    \vo_t = \vq_t\mS_t + \vd_t \odot \vx_t'.
\end{align}
In this formulation, we define $\vq_t = \vx_t\mW_q$, $\vk_t = \vx_t \mW_k$, $\vv_t = \vx_t$. We follow the Mamba series~\citep{mamba2} to set $\vv_t = \vx_t$, and the computation of the control $\vu_t = \vk_t^\top \vv_t$ can be interpreted as the state expansion operation within SSMs. Additionally, we employ $\mathrm{ShortConv}$ to process the original input $\vx_t$ into $\vx_t'$, from which we derive $\vg_t$ and $\vtau_t$. Note that $\mathrm{ShortConv}$ is widely used in recent recurrent models~\citep{mamba, mamba2, yang2024gated}; it enhances local context aggregation and introduces nonlinearity for $\vg_t$ and $\vtau_t$. The learnable weight $\vd_t$ functions as the feedthrough matrix in SSMs. 
To ensure stability during back-propagation, we implement post -normalization after activation. This entails normalizing $\vk_t$ and dividing $\vq_t$ by $\sqrt{n}$~\citep{mamba2}.

The final configuration of the Rodimus block is depicted in Figure~\ref{fig:overview}b. The aforementioned SSM merges into a Gated Linear Unit (GLU)~\citep{dauphin2017language}, allowing simultaneous token and channel mixing compactly, akin to the GAU~\citep{hua2022transformer} and Mamba series~\citep{mamba, mamba2}. The initial linear layers within the GLU increase the size of the hidden states, thereby enhancing the performance of Rodimus. The last linear layer finally reduces the hidden dimension back to its original size $d$. In contrast to the original Transformer block, which mixes tokens and channels separately through softmax attention and Feed-Forward Network (FFN), this compact architecture demonstrates greater parameter efficiency. Additionally, the output gate of the GLU enriches the gating mechanisms of the Rodimus block.

During inference, the Rodimus block retains only a fixed-size hidden state, resulting in $\gO(1)$ time and space complexity. In contrast, during training, the chunkwise parallelization of the Rodimus block ensures sub-quadratic computational complexity, which is explained in the Appendix~\ref{app:rodimus_chunkwise_form}.

\vspace{-1ex}
\subsection{Rodimus$+$: Integration with Token Compression and Head Compression}
\label{sec:attention}
\vspace{-1ex}
Although the hidden states $\mS_t$ in Rodimus provide a comprehensive semantic overview of historical context, their fixed capacity may still limit performance. On the other hand, it has been observed in the literature~\citep{qin2022devil, qin2022cosformer, xiao2024efficient} that the attention map in vanilla Transformer tends to be localized, indicating that nearby tokens are significantly more influential in generating responses. To leverage the strengths of both approaches, we introduce Rodimus$+$, which integrates the Rodimus block with sliding window attention (i.e., token compression). This integration enables the hidden states to act as a robust global representation of the context, incorporating information even from distant tokens relative to the answer, while the sliding window attention sharpens focus on the most influential nearby tokens. Notably, this enhancement occurs without increasing computational complexity. Furthermore, we optimize the KV cache of the sliding window attention by a constant factor by substituting the traditional MHA with our proposed SKA (i.e., head compression).

\vspace{-3pt}
\subsubsection{Shared-Key Attention for Lossless Head Compression}
\label{sssec:ska}
\vspace{-3pt}

Recall from Section~\ref{sec:preliminaries} that the MHA for head $h$ can be expressed as: 
\begin{align}
    \mO^h &= \softmax\Big(\big(\mQ^h{\mK^h}^\top\big)\odot\mM\Big)\mV^h
    = \softmax\Big(\big(\mX\mW_Q^h{\mW_K^h}^\top\mX^\top\big)\odot\mM\Big)\mX\mW_V^h \notag \\
    &= \softmax\Big(\big((\mX\mW_Q^h{\mW_K^h}^\top\tilde\mW_K^{-\top})(\tilde\mW_K^\top\mX^\top)\big)\odot\mM\Big)\mX\mW_V^h,
\end{align}
where $\tilde\mW_K$ is a learnable weight matrix that is invariant with regard to head $h$, and $\tilde\mW_K^{-\top}$ denotes the pseudo inverse of the transposed $\tilde\mW_K$. Thus, we can redefine the query as $\tilde\mQ^h = \mX\mW_Q^h{\mW_K^h}^\top\tilde\mW_K^{-\top}$ and the key as $\tilde\mK = \mX\tilde\mW_K$, which is invariant across heads. This means we can employ a single key for all heads without diminishing the expressiveness of MHA. We refer to this mechanism as Shared-Key Attention (SKA), which achieves lossless compression of the KV cache in MHA by utilizing a single key for all heads. In contrast, approaches like MQA~\citep{shazeer2019fasttransformerdecodingwritehead} and GQA~\citep{ainslie-etal-2023-gqa} also share the values $\mV$ among the same group of query heads, which typically differ for each head in the original MHA formulation. Consequently, MQA and GQA can only be seen as lossy compressions of MHA. Figure~\ref{fig:attn} illustrates the distinctions between MHA, MQA, \zh{GQA}, and SKA. 

We further combine SKA with sliding window attention to create SW-SKA, which introduces a local structure where each token focuses solely on its immediate contextual window, thereby maintaining a constant memory footprint during both training and inference.

\begin{figure}[t]
    \vspace{-5pt}
    \centering
   \includegraphics[width=\textwidth]{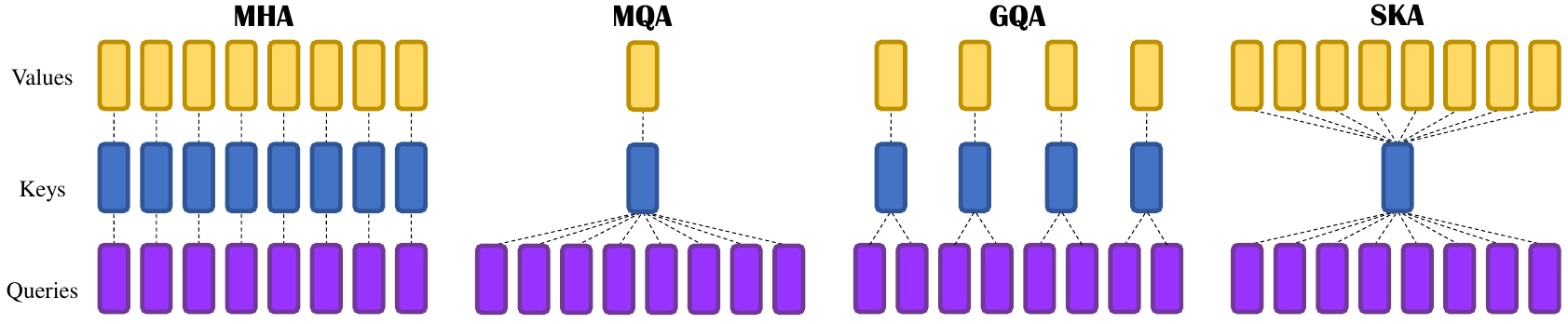}
    \vspace{-19pt}
    \caption{
    Comparison of Different Attention Mechanisms. 
    In MQA and GQA, values are shared among the same group of query heads, resulting in lossy compression compared to the individual head-specific values used in MHA. SKA maintains the multi-value setting of MHA but uses a shared key across all heads. This approach produces a separate attention map for each head, preserving the expressiveness of MHA while reducing the memory footprint.
    }
    \label{fig:attn}
    \vspace{-15pt}
\end{figure}

\vspace{-3pt}
\subsubsection{The Overall Rodimus$+$ Block}
\label{sec:rodimus_plus_block}
\vspace{-3pt}

Recall that the Rodimus block builds upon the foundations established by the Mamba series~\citep{mamba, mamba2} and GAU~\citep{hua2022transformer}, enabling the simultaneous mixing of tokens and channels while offering a unified residual connection for both mixers (see Figure~\ref{fig:overview}b). Specifically, it integrates the SSM block (token mixer) into the FFN layers (channel mixer). This structured arrangement allows for an increase in state size without necessitating additional parameters, as the FFN typically amplifies the dimension before subsequently reducing it. In contrast, a conventional Transformer block alternates between self-attention (token mixer) and FFN (channel mixer), with each component linked by its own residual connection. Notably, self-attention is generally not placed in between the FFN layers due to the linear growth of the KV cache with input sequence length, which can lead to significant memory consumption when key and value dimensions are increased.

When integrating the Rodimus block with the SW-SKA, it is beneficial to have the attention mechanism and FFN closely interdependent, as these components collaboratively deliver a localized understanding of context. However, they should maintain relative independence from the Rodimus block, which is responsible for providing a global perspective. To achieve this balance, we employ a two-hop residual approach, adapted from \citet{ma2024megalodon}, to effectively bind the attention and FFN layers. This ensures that the information derived from the token mixer is cohesively integrated into the channel mixer, preserving the overall integrity of the system. The resulting Rodimus$+$ block is:
\begin{equation}
\label{eq:jetoptimus}
\begin{aligned}
    \mX_{\mathrm{state}}&=\mathrm{Rodimus}(\mathrm{Norm}(\mX))+ \mX,
    \\
    \hat{\mY}&=\mathrm{SW\textstyle{-}SKA}(\mathrm{Norm}(\mX_{\mathrm{state}}))+\mX_{\mathrm{state}},
    \\
    \mY&=\mathrm{FFN}(\mathrm{Norm}(\hat{\mY}))+\mX_{\mathrm{state}},
\end{aligned}
\end{equation}
where $\mathrm{FFN}$ represents the GLU, $\mathrm{Norm}$ denotes RMSNorm \citep{zhang2019root}, and $\mathrm{Rodimus}$ signifies the Rodimus block, generating state information for the subsequent SW-SKA. Taken together, the Rodimus block, SW-SKA, and FFN constitute the final Rodimus$+$ block. In Appendix~\ref{app:relationship_to_hybrid_models}, we further discuss the relationship between Rodimus$+$ and existing hybrid models.

\vspace{-1ex}
\section{Experiments}
\vspace{-1ex}

In this section, we compare Rodimus* with other SOTA methods across various benchmarks, including language modeling and recall benchmarks. A brief introduction to the SOTA methods is presented in Appendix~\ref{app:models_in_experiments}. We conclude this section with a series of ablation studies.

\vspace{-1ex}
\subsection{Language Modeling}
\label{sec:language_modeling}
\vspace{-1ex}

\textbf{WikiText-103}: The \zh{WikiText-103} language modeling dataset~\citep{merity2022pointer} consists of over 100 million tokens derived from verified good and featured articles on Wikipedia. 
In this study, we compare Rodimus* to six other models. Among these, Mamba~\citep{mamba} and Mamba2~\citep{mamba2} utilize purely recurrent architectures, similar to Rodimus. GLA~\citep{yang2024gated} and HGRN2~\citep{qin2024hgrn} serve as linear attention counterparts to the Mamba series. We also include Transformer$++$~\citep{touvron2023llama}  which employs multihead softmax attention, and Llama3~\citep{dubey2024llama}, the grouped-query attention version of Transformer$++$. All models are trained from scratch on the training dataset, each with approximately 44 million parameters. To ensure a fair comparison, we set Rodimus*'s expansion factor $n$ to 32, aligning its parameter count with that of the other models. Other experimental configurations are also the same across all models, including batch size, learning rate, and training iterations (see Appendix~\ref{app:wikitext103}).

The results, summarized in Table~\ref{tab:wikitext103}, present the perplexity (PPL) of the testing data. Notably, Rodimus, a purely recurrent model, surpasses both Transformer$++$ and Llama3 in performance, despite having an inference complexity one power less than that of the latter two models. When compared to Mamba2, which has a state expansion factor of 128, Rodimus employs a smaller expansion factor of 32, leading to a significantly reduced hidden state size while achieving a lower PPL. Furthermore, the performance of Rodimus$+$ even exceeds that of Rodimus. These findings indicate that Rodimus* effectively challenges the traditional accuracy-efficiency trade-off in language modeling. 

\begin{figure}[t]
\vspace{-5pt}
\begin{minipage}[]{0.38\textwidth}
\centering
\renewcommand\arraystretch{0.9}
\captionof{table}{Results on WikiText-103.}
\vspace{-1.5ex}
\resizebox{\linewidth}{!}{
    \begin{tabular}[l]{l|ll}
    \toprule
    Models        & Test PPL & Params(M) \\ \midrule
    Rodimus       & \underline{21.90}     & 46.68     \\
    Rodimus$+$    & \textbf{21.56}    & 48.45     \\
    Mamba         & 22.58    & 46.08     \\
    Mamba2        & 22.78    & 46.38     \\
    Transformer$++$ & 22.02    & 46.24     \\
    Llama3 & 22.12    & 46.48     \\
    GLA           & 22.16    & 44.71     \\
    HGRN2         & 22.00     & 46.20    \\
    \bottomrule
    \end{tabular}
}
\label{tab:wikitext103}

\begin{itemize}[leftmargin=0pt,itemsep=0pt,topsep=1.5pt,partopsep=0pt,label={}]
\scriptsize
\setlength{\baselineskip}{11pt}
\item - The lower the PPL, the better the language modeling effect. \textbf{Bold} indicates the best result, and \underline{underline} indicates the \df{second-best} result.
\end{itemize}

\end{minipage}
\hfill
\begin{minipage}[]{0.6\textwidth}
    \vspace{-7pt}
    \centering
    \includegraphics[width=\linewidth]{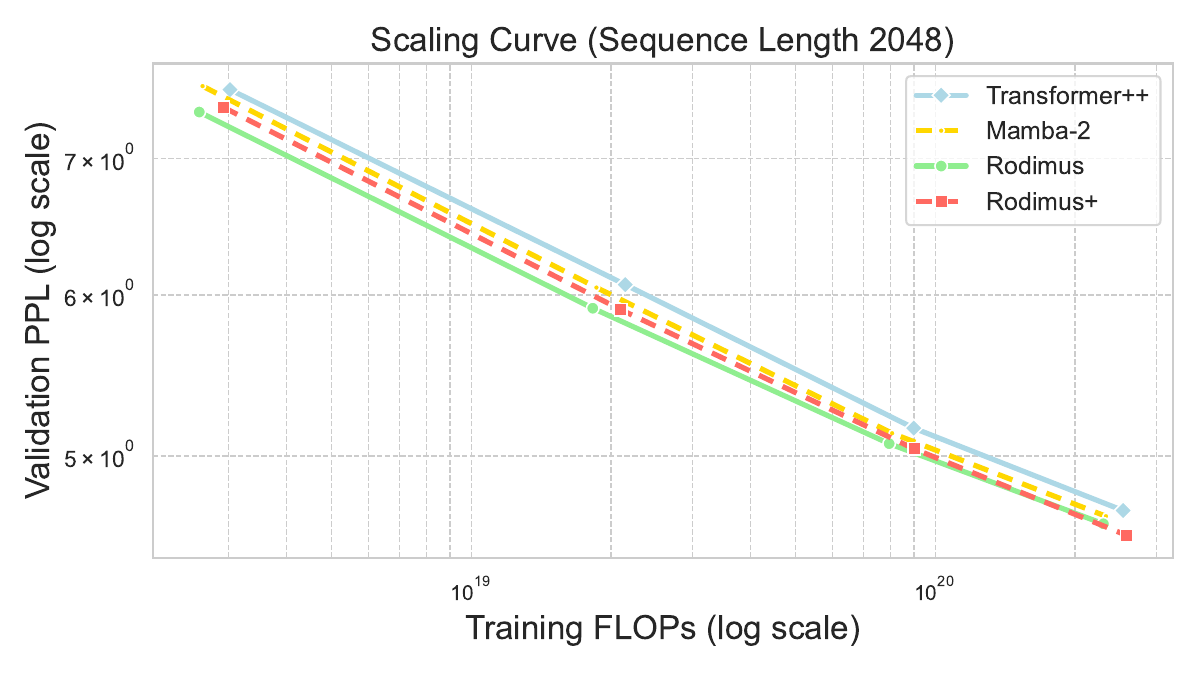}
    \vspace{-25pt}
    \caption{
    Scaling Curve based on Scaling Laws.
    }
    \label{fig:scaling_law}
\end{minipage}
\vspace{-15pt}
\end{figure}

\textbf{Scaling Laws}~\citep{hoffmann2022empirical}: We further expand the number of parameters in our models from approximately 125 million to 1.3 billion, adhering to the architectural guidelines of GPT-3 \citep{brown2020language}. For comparison, we used Mamba2 \citep{mamba2} and Transformer$++$ \citep{touvron2023llama} as baselines. Mamba2 represents the most advanced recurrent models, while Transformer$++$ exemplifies the leading softmax attention models. All models are trained on subsets of Pile~\citep{pile}. Detailed experimental setups are provided in Appendix~\ref{app:scaling}. Figure~\ref{fig:scaling_law} illustrates that both Rodimus and Rodimus$+$ outperform Mamba2 and Transformer$++$ across all model sizes, ranging from 125 million to 1.3 billion parameters. Interestingly, Rodimus$+$, which integrates Rodimus with the SW-SKA, achieves even greater improvements over Rodimus with the increase of the model size. As the number of layers and parameters grows, the benefits of the two-hop residual connections may become more pronounced~\citep{ma2024megalodon}. Consequently, Rodimus$+$ demonstrates superior scaling potential, especially when approaching 1.3 billion parameters.

\begin{table}[t]
\vspace{-1.5ex}
\centering
\renewcommand{\arraystretch}{0.9} 

\caption{Results of Downstream Tasks via Zero-Shot Evaluation.}
\vspace{-2ex}
\resizebox{\textwidth}{!}{
\begin{tabular}{@{}llll|lllrllll|l@{}}
\toprule
Model &
  \begin{tabular}[c]{@{}l@{}}Params\\ \tiny (B)\end{tabular} &
  \begin{tabular}[c]{@{}l@{}}Tokens\\ \tiny (B)\end{tabular} &
  Tokenizer &
  \begin{tabular}[c]{@{}l@{}}ARC-c\\ \tiny acc\_norm $\uparrow$\end{tabular} &
  \begin{tabular}[c]{@{}l@{}}ARC-e\\ \tiny acc $\uparrow$\end{tabular} &
  \begin{tabular}[c]{@{}l@{}}HS\\ \tiny acc\_norm $\uparrow$\end{tabular} &
  \begin{tabular}[c]{@{}r@{}}LMB\\ \tiny ppl $\downarrow$\end{tabular} &
  \begin{tabular}[c]{@{}l@{}}LMB\\ \tiny acc $\uparrow$\end{tabular} &
  \begin{tabular}[c]{@{}l@{}}OBQA\\ \tiny acc\_norm $\uparrow $\end{tabular} &
  \begin{tabular}[c]{@{}l@{}}PIQA\\ \tiny acc $\uparrow$\end{tabular} &
  \begin{tabular}[c]{@{}l@{}}WG\\ \tiny acc $\uparrow$\end{tabular} &
  \begin{tabular}[c]{@{}l@{}}AVG\\ \tiny acc* $\uparrow$\end{tabular} \\ \midrule
Mamba\textsuperscript{\pounds}     & 0.13 & \underline{100}  & NeoX    & \underline{23.72} & \textbf{46.84} & \uwave{33.37} & \underline{19.72} & \underline{40.33} & \underline{29.80} & \underline{65.07} & \textbf{52.17} & \underline{41.61} \\
Mamba2\textsuperscript{\pounds}     & 0.13 & \underline{100}  & NeoX    & \textbf{23.81} & \underline{45.50} & \underline{34.13} & \uwave{20.42} & \uwave{39.55} & \uwave{29.20} & \uwave{64.47} & \underline{51.85} & \uwave{41.22} \\
Rodimus\textsuperscript{\pounds}   & 0.13 & \underline{100}  & NeoX    & \uwave{23.38} & \textbf{46.84} & \textbf{34.22} & \textbf{17.95} & \textbf{42.44} & \textbf{30.00} & \textbf{65.40} & \uwave{51.46} & \textbf{41.96} \\ \midrule
GPT-Neo    & 0.13 & 300  & GPT2    & 23.21 & 43.69 & 30.42 & 30.26 & 37.38 & 26.40 & 62.89 & 50.51 & 39.21 \\
OPT        & 0.13 & 300  & OPT     & 22.78 & 43.52 & 31.35 & 26.02 & 37.90 & 28.00 & 63.00 & 50.36 & 39.56 \\
Pythia     & 0.16 & 300  & NeoX    & \uwave{23.72} & 43.60 & 30.19 & 38.31 & 32.80 & 26.00 & 61.59 & 50.51 & 38.34 \\
Mamba      & 0.13 & 300  & NeoX    & \textbf{24.32} & \underline{47.90} & 35.20 & \uwave{16.05} & \uwave{44.21} & \uwave{28.60} & \underline{64.69} & 52.33 & 42.46 \\
Mamba2     & 0.13 & 300  & NeoX    & \underline{24.23} & 47.35 & \uwave{35.32} & 16.79 & 43.78 & \textbf{30.40} & \textbf{64.85} & \uwave{52.64} & \uwave{42.65} \\ %
RWKV4     & 0.17 & 300  & NeoX    & 23.55 & \uwave{47.56} & 32.27 & 31.73 & 32.64 & 27.80 & 64.25 & 50.99 & 39.87 \\
Rodimus    & 0.13 & 300  & NeoX    & \uwave{23.72} & \textbf{49.07} & \underline{35.45} & \underline{15.75} & \underline{45.00} & \underline{29.20} & \uwave{64.36} & \underline{53.35} & \underline{42.88} \\
Rodimus$+$ & 0.13 & 300  & NeoX    & \underline{24.23} & 47.22 & \textbf{35.49} & \textbf{12.94} & \textbf{48.09} & \underline{29.20} & \uwave{64.36} & \textbf{53.43} & \textbf{43.15} \\ \midrule
OPT        & 0.35 & 300  & OPT     & 23.98 & 44.19 & 36.66 & 16.40 & 45.10 & 28.20 & 64.47 & 52.41 & 42.14 \\
Pythia     & 0.41 & 300  & NeoX    & 24.49 & 52.10 & 40.59 & 10.83 & \uwave{51.45} & 29.40 & 66.97 & 53.59 & 45.51 \\
BLOOM      & 0.56 & 300  & BLOOM    & 23.89 & 47.47 & 36.89 & 28.83 & 34.10 & 28.80 & 64.20 & 52.01 & 41.05 \\
Mamba      & 0.37 & 300  & NeoX    & \uwave{27.82} & 54.92 & 46.49 & \underline{8.14}  & \underline{55.60} & 30.80 & 69.53 & 55.17 & 48.62 \\
Mamba2     & 0.37 & 300  & NeoX    & 26.62 & 54.67 & 46.96 & \textbf{7.98}  & \textbf{55.85} & \uwave{32.60} & \uwave{70.46} & \underline{55.64} & \uwave{48.97} \\
RWKV4     & 0.43 & 300  & NeoX    & 25.26 & 47.18 & 40.77 & 13.38 & 45.39 & 30.60 & 68.12 & 53.28 & 44.37 \\
Qwen2      & 0.5  & 7000 & Qwen2   & \textbf{28.84} & \uwave{54.97} & \underline{49.03} & 11.66 & 50.05 & \underline{33.20} & 69.42 & \textbf{57.62} & \underline{49.02} \\
Rodimus    & 0.46 & \underline{150}  & Rodimus & 27.65 & \underline{55.77} & \uwave{48.78} & \uwave{10.17} & 50.65 & \uwave{32.60} & \underline{70.73} & \uwave{55.41} & 48.80 \\
Rodimus$+$ & \zh{0.47} & \underline{150}  & Rodimus & \underline{28.58} & \textbf{57.28} & \textbf{52.13} & 10.22 & 51.14 & \textbf{35.00} & \textbf{72.91} & 53.59 & \textbf{50.09} \\ \midrule
GPT-Neo    & 1.3  & 300  & GPT2    & 25.77 & 56.14 & 48.91 & 7.50  & 57.21 & 33.60 & 71.22 & 55.01 & 49.69 \\
OPT        & 1.3  & 300  & OPT     & 29.52 & 56.94 & 53.75 & 6.65  & 57.89 & 33.20 & 71.60 & 59.59 & 51.78 \\
Pythia     & \df{1.0}  & 300  & NeoX    & 27.05 & 56.94 & 47.15 & 7.92  & 56.22 & 31.40 & 70.67 & 53.51 & 48.99 \\
Pythia     & 1.4  & 300  & NeoX    & 28.41 & 60.48 & 51.98 & 6.08  & 61.60 & 33.20 & 70.84 & 57.30 & 51.97 \\
BLOOM      & 1.1  & 300  & BLOOM    & 25.51 & 51.47 & 42.97 & 17.28 & 42.64 & 29.40 & 67.19 & 55.01 & 44.88 \\
GLA        & 1.3  & 100  & Mistral   & 27.82 & 55.13 & 48.97 & 15.37 & 46.09 & 33.00 & 69.80 & 53.20 & 47.72 \\
RetNet     & 1.3  & 100  & Mistral   & 26.45 & 57.32 & 48.04 & 16.44 & 43.37 & 32.00 & 69.42 & 53.43 & 47.15 \\
HGRN       & 1.3  & 100  & Mistral   & 25.51 & 55.01 & 48.82 & 20.24 & 37.71 & 31.80 & 69.80 & 50.28 & 45.56 \\
HGRN2      & 1.3  & 100  & Mistral   & 28.16 & 58.16 & 51.74 & 11.38 & 49.97 & 32.80 & 71.27 & 52.17 & 49.18 \\
Mamba      & 1.3  & 300  & NeoX    & 32.85 & 65.49 & 59.08 & \uwave{5.04}  & \uwave{64.87} & 36.40 & 74.10 & \uwave{61.40} & 56.31 \\
Mamba2     & 1.4  & 300  & NeoX    & 33.36 & 64.10 & 59.92 & \underline{5.02}  & \underline{65.61} & \underline{37.80} & 73.29 & 60.93 & 56.43 \\
RWKV4     & 1.5  & 300  & NeoX    & 29.01 & 60.94 & 52.90 & 7.08  & 57.19 & 33.40 & 72.09 & 55.33 & 51.55 \\
RWKV6     & 1.6  & 1420 & RWKV6   & 33.36 & 60.69 & 61.42 & \textbf{4.61}  & \textbf{67.26} & \uwave{37.40} & 73.67 & 60.38 & 56.31 \\
\df{Rec-Gemma}      & 2.0  & 2000  & Gemma   & 29.69 & 48.91 & 61.82 & 9.27 & 53.83 & 29.40 & 67.52 & 57.06 & 49.75 \\
Qwen2      & 1.5  & 7000 & Qwen2   & \uwave{35.75} & \uwave{65.95} & \textbf{65.51} & 5.52  & 63.61 & 36.80 & \underline{75.19} & \textbf{65.27} & \underline{58.30} \\
Rodimus    & 1.4  & 500  & Rodimus & \underline{36.09} & \underline{68.01} & \uwave{62.44} & 6.33  & 59.62 & 35.60 & \uwave{74.32} & 59.75 & \uwave{56.55} \\
Rodimus$+$ & 1.6  & 1000 & Rodimus & \textbf{36.35} & \textbf{68.35} & \underline{64.51} & 5.38  & 63.59 & \textbf{38.80} & \textbf{76.17} & \underline{62.51} & \textbf{58.61} \\ \bottomrule
\end{tabular}
}
\label{tab:downstream_exp}
\vspace{-5pt}
\begin{itemize}[leftmargin=0pt,itemsep=0pt,topsep=1.5pt,partopsep=0pt,label={}]
\scriptsize
\setlength{\baselineskip}{11pt}
\item - Here, \textbf{bold}, \underline{underline}, \uwave{wavy underline} indicates the best, second, and third best result, respectively. For the column ``Tokens'', \underline{underline} indicates models using fewer tokens. The superscript \textsuperscript{\pounds} denotes models trained from scratch using the same configuration for fair comparison.
\end{itemize}
\vspace{-20pt}
\end{table}

\textbf{Downstream Evaluation}: We evaluate Rodimus* across various downstream tasks, as outlined in \citet{mamba}. These tasks include content analysis (LAMBADA~\citep{paperno-etal-2016-lambada}), commonsense reasoning (PiQA~\citep{Bisk2020} and HellaSwag~\citep{zellers2019hellaswag}), coreference resolution (WinoGrande~\citep{sakaguchi2019winogrande}), reading comprehension (OpenBookQA~\citep{OpenBookQA2018}), and professional examinations (ARC-Easy and ARC-Challenge~\citep{Clark2018ThinkYH}). The evaluation results are obtained using the \texttt{lm-evaluation-harness}~\citep{eval-harness}. For comparison, we choose well-known open-source models as baselines, indicating the number of tokens and the tokenizers used, \df{similar approaches are commonly adopted in prior works~\citep{mamba, mamba2}}. Our experimental structure is as follows: (i) We first train both Rodimus and Mamba \zh{series} with 130M parameters from scratch using the same settings on 100B tokens from the Pile dataset (denoted as Rodimus\textsuperscript{\pounds}, Mamba\textsuperscript{\pounds} and \zh{Mamba2\textsuperscript{\pounds}}) for a fair comparison. (ii) We then train Rodimus*-130M on 300B tokens from Pile and compare it to other similarly sized models also trained on 300B tokens from Pile, noting that the training corpora for these models may still differ. (iii) We evaluate Rodimus* with about 460M parameters against other open-source models like BLOOM-560M and Qwen2-500M. Since these models are trained on self-cleaned high-quality data, we also utilize curated datasets, including FineWeb~\citep{penedo2024fineweb}, Pile~\citep{pile}, etc., to train our Rodimus* on 150B tokens. (iv) To fully explore the model's capabilities, we train Rodimus$+$-1.6B on 1T tokens from the curated dataset, while Rodimus-1.4B is trained on 500B tokens due to resource constraints. Both models are compared against other open-source models of comparable sizes, such as BLOOM, RWKV6, and Qwen2, which also use self-cleaned datasets for training. \df{Note that the tokenizers used in the last two experiments are developed based on their respective curated datasets.} The specific results are presented in Table~\ref{tab:downstream_exp}. More details can be found in Appendices~\ref{app:downstream} \df{and~\ref{app:more_downstream_evaluation}}.

In experiment (i), where Rodimus\textsuperscript{\pounds}, Mamba\textsuperscript{\pounds}, and Mamba2\textsuperscript{\pounds} are trained under identical conditions, Rodimus outperforms the Mamba series on downstream tasks, validating the effectiveness of the DDTS design. In experiment (ii), where all models are trained with the same number of tokens from Pile, Rodimus outperforms all existing models across the benchmarks. Moreover, Rodimus$+$ achieves even better results, indicating that the proposed hybrid structure is more effective for downstream tasks. Moving to experiment (iii), we note that the number of training tokens can impact performance. Due to a relatively small token count, Rodimus-460M underperforms compared to Qwen2-500M and Mamba2-370M but still surpasses Mamba-370M, which is trained on 300B tokens. In contrast, \zh{Rodimus$+$-470M} delivers the best average result, even though it is trained on only 150B tokens, further validating the proposed hybrid structure's effectiveness. Lastly, in experiment (iv), training larger models on a larger corpus reinforces Rodimus*'s potential as a practical LLM. Specifically, Rodimus-1.4B outshines all existing purely recurrent models of similar sizes, including those from the Mamba and RWKV series. Furthermore, \df{Rodimus$+$-1.6B surpasses the performance of both Qwen2-1.5B and Recurrent-Gemma-2B, despite being trained on only 1T tokens, in contrast to the 7T and 2T tokens used for the latter two models, respectively.}
In addition, we refer the readers to Appendix~\ref{app:more_benchmark}, where we continue-pretrain Rodimus$+$ as a practical LLM for math and code.

\vspace{-1ex}
\subsection{Recall Benchmarks}
\label{sec:recall_benchmark}
\vspace{-1ex}

In this section, we assess the recall or retrieval capabilities of Rodimus* in the context of the MQAR task and NeedleBench. The MQAR task involves training a small-parameter model from scratch using a specific dataset, while NeedleBench evaluates models trained on a large-scale corpus. Due to space limitations, we offer a brief overview here and direct readers to Appendices~\ref{app:mqar} and~\ref{app:needlebench} for more details. For MQAR, we primarily compare Rodimus* with other recurrent models and find that Rodimus* consistently outperforms existing models as the state expansion factor $n$ (see Figure~\ref{fig:mem_size_mqar}) or the model dimension $d$ (see Figure~\ref{fig:mqar}) increases, thanks to the proposed DDTS mechanism. For NeedleBench, both Rodimus and Rodimus+ even exceed the performance of Pythia, a softmax attention-based model (see Figure~\ref{fig:needlebench}). It is noteworthy that many studies suggest pre-trained recurrent models often struggle with challenging recall-intensive tasks like NeedleBench when compared to models utilizing full attention mechanisms \citep{waleffe2024empirical}. The superior performance of Rodimus and Rodimus$+$ indicates that the Rodimus family effectively balances the accuracy-efficiency trade-off, particularly in enhancing recall ability. \df{More results on other recall benchmarks, such as FDA, SWDE, NQ, SQuAD, TriviaQA, and DROP, can be found in Appendix~\ref{app:simpler_recall_benchmark}.}

\begin{figure}[t]
    \vspace{-8pt}
    \centering
    \begin{subfigure}[b]{.24\textwidth}
         \centering         
         \includegraphics[width=\linewidth,trim={0 0 18ex 0},clip]{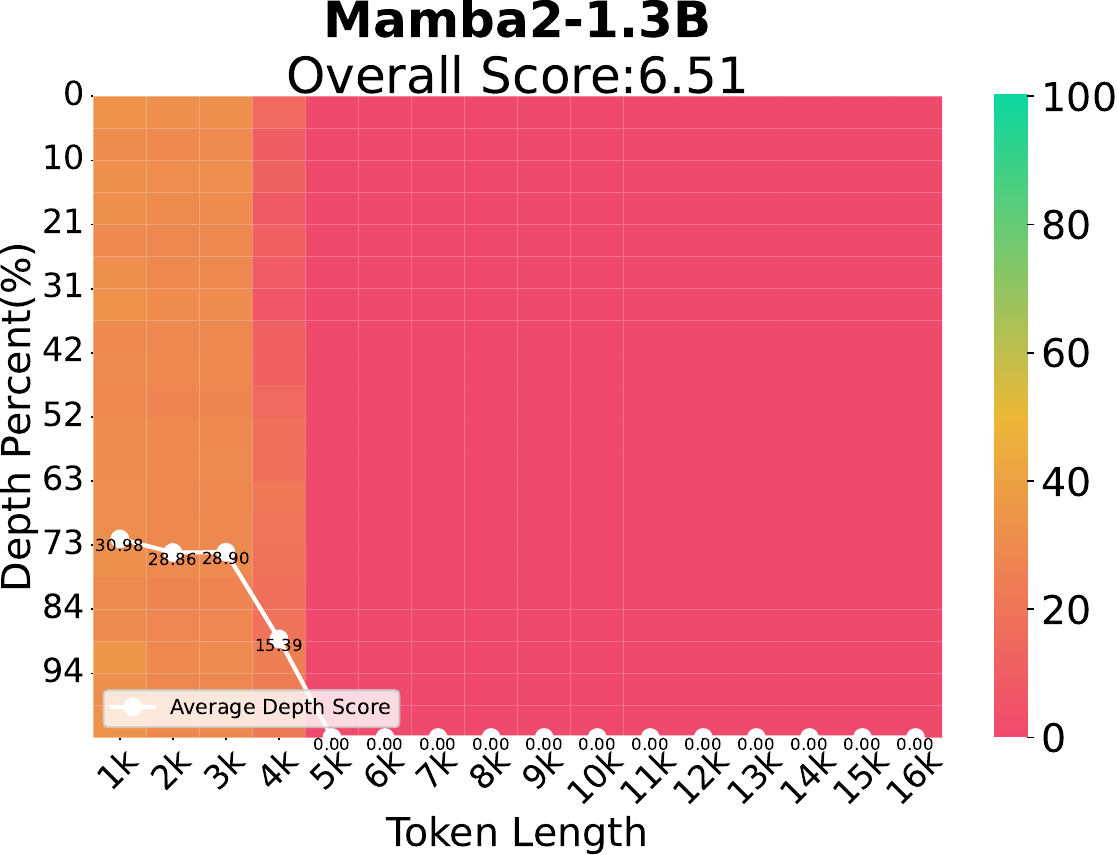}
     \end{subfigure}
     \hfill
     \begin{subfigure}[b]{.23\textwidth}
         \centering
         \includegraphics[width=\linewidth,trim={20pt 0 18ex 0},clip]{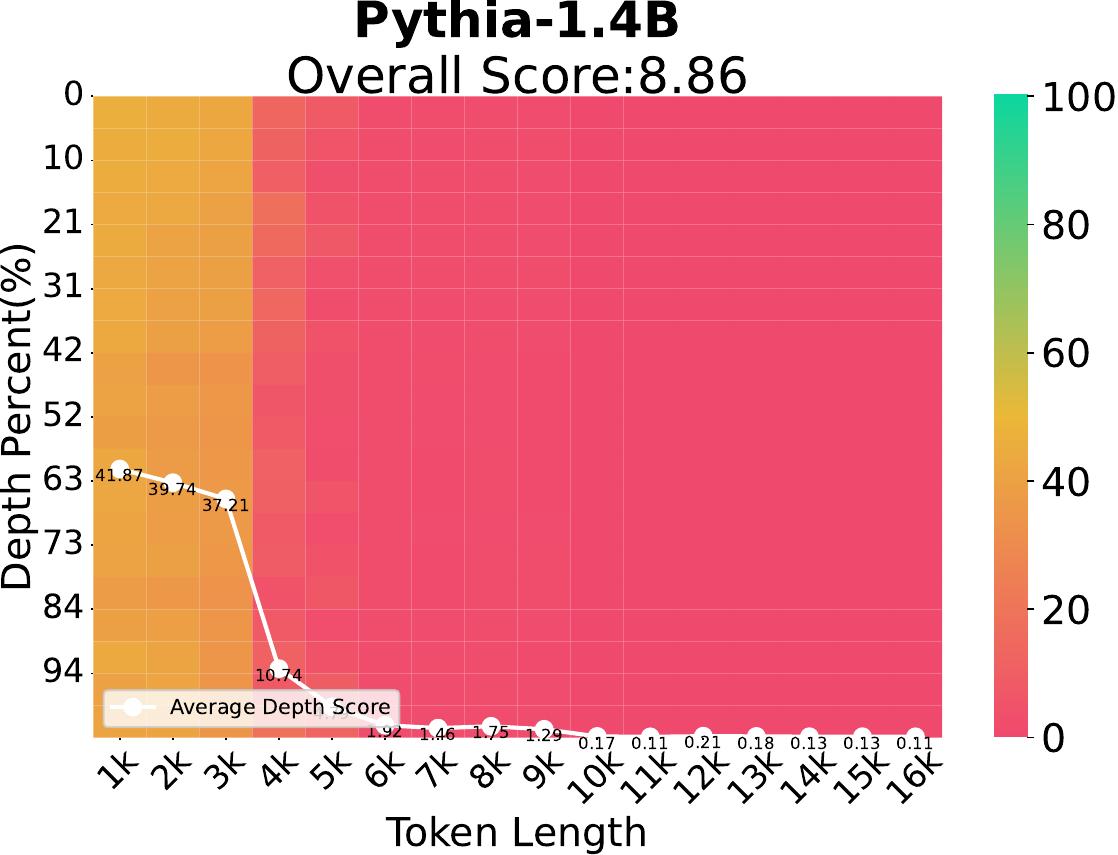}
     \end{subfigure}
     \hfill
     \begin{subfigure}[b]{.23\textwidth}
         \centering
         \includegraphics[width=\linewidth,trim={20pt 0 18ex 0},clip]{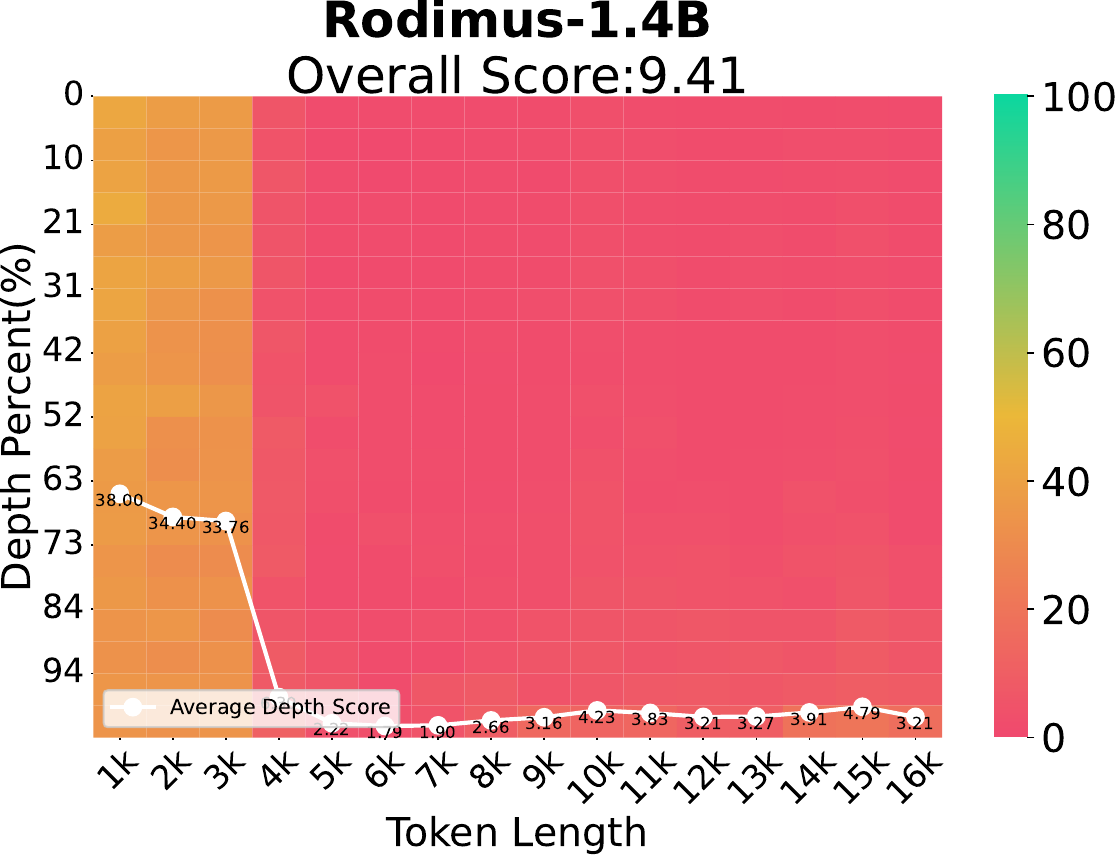}
     \end{subfigure}
     \hfill
     \begin{subfigure}[b]{.27\textwidth}
         \centering
         \includegraphics[width=\linewidth,trim={20pt 0 0 0},clip]{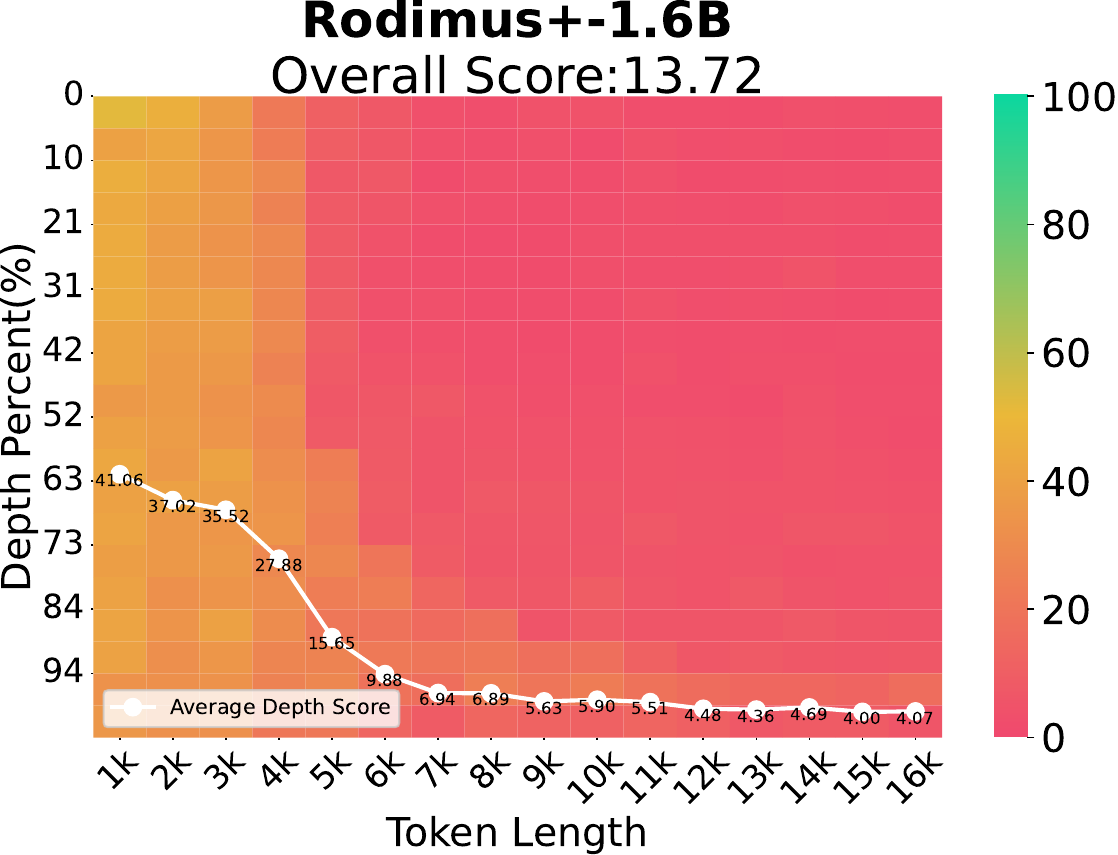}
     \end{subfigure}
         
    \vspace{-8pt}
    \caption{
    Results on NeedleBench. The average accuracy of each subtask of NeedleBench in contexts of different lengths. Overall Score represents the global average score.
    }
    \label{fig:needlebench}
    \vspace{-18pt}
\end{figure}

\vspace{-1ex}
\subsection{Ablation Studies}
\label{sec:ablation_study}
\vspace{-1ex}

In this section, we summarize the key conclusions from our ablation studies. A detailed analysis is provided in Appendix~\ref{app:ablation_study}. (i) Our assessment of the components in Rodimus reveals that the inclusion of $\vg_t$, $\vtau_t$, and $\hat\vbeta_t$ is crucial for the DDTS framework. However, the incorporation of $\vbeta_t$ tends to complicate the training process, leading to suboptimal results. Additionally, the two-hop residual connection is beneficial as the model depth increases. (ii) In terms of head compression methods, we find that SKA offers a balanced compromise between GQA and MHA, delivering superior performance compared to GQA while utilizing fewer parameters relative to MHA, irrespective of the backbone models used. (iii) We have determined that the optimal hyperparameter values are $\ell = 16$ and $n = 64$. Notably, the Rodimus configuration with $n=64$ operates with a state size that is only half that of Mamba2, yet it outperforms Mamba2 on various benchmarks.

\vspace{-1ex}
\section{Conclusion}
\vspace{-1ex}

We introduce here Rodimus and Rodimus$+$, two innovative models that utilize semantic, token, and head compression. \df{We refer the readers to Appendices~\ref{app:discussion_widespread} and~\ref{app:future_work} for more discussion and future work.}

\subsubsection*{Acknowledgments}
The work was supported by the National Natural Science Foundation of China (No. 62325109, U21B2013) and Ant Group.

\bibliography{iclr2025_conference}
\bibliographystyle{iclr2025_conference}

\newpage
\appendix

\section{Main Notation Introduction}
\begin{table}[ht]
    \renewcommand\arraystretch{1.3}
    \centering
    \caption{Meanings of Notations}
    \resizebox{\textwidth}{!}{
    \begin{tabular}{@{}lcl@{}}
\toprule
Notation & Size     & Meaning                    \\ \midrule
$T$      & Constant & The length of the context. \\
$d$      & Constant & The model dimension.      \\ 
$m$ & Constant & The dimensionality of the hidden state corresponding to the token mixer. \\
$n$ & Constant & The expansion factor of matrix hidden state, when is set to 1 in vector hidden state. \\
$d_h$ & Constant & The head dimension in the multi-head mechanism. \\
$\mathbf{1}_x$ & $1 \times x$ & The all-ones vector of dimension $x$. \\
$\mX$ & $T \times d$ & The input sequence of module. \\
$\mO$ & $T \times d$ & The output sequence of module. \\
$\mQ$ & $T \times n$ & The query. \\
$\mK$ & $T \times n$ & The key. \\
$\mV$ & $T \times m$ & The value. \\
$\mW_Q$ & $d \times n$ & The weight matrix of the query. \\
$\mW_K$ & $d \times n$ & The weight matrix of the key. \\
$\mW_V$ & $d \times m$ & The weight matrix of the value. \\
$\mS_t$ & $n \times m$ & The hidden state at stamp $t$. \\
$\mA_t$ & $n \times m$ & The transition matrix at stamp $t$. \\
$\mB_t$ & $n \times m$ & The input matrix at stamp $t$. \\
$\mC_t$ & $1 \times n$ & The output matrix at stamp $t$. \\
$\valpha_t$ & $1 \times n$ & The gating mechanism along the dimension $n$ acts on the transition. \\
$\hat\valpha_t$ & $1 \times n$ & The gating mechanism along the dimension $n$ acts on the input, which is negatively correlated with $\valpha_t$. \\
$\vbeta_t$ & $1 \times m$ & The gating mechanism along the dimension $m$ acts on the transition. \\
$\hat\vbeta_t$ & $1 \times m$ & The gating mechanism along the dimension $m$ acts on the input, which is negatively correlated with $\vbeta_t$. \\
$\vu_t$ & $n \times m$ & The input term of recurrent calculations. \\
$\vx_t^{\prime}$ & $1 \times m$ & The output of $\mathrm{ShortConv}$. \\
$\vg_t$ & Variable & The gating mechanism of selection or decay. In Rodimus, it's a selection gate with the size of $1 \times n$.   \\
$\mW_g$ & $m \times n$ & The weight matrix of $\vg_t$. \\
$\vb_g$ & $1 \times n$ & The bias of $\vg_t$. \\
$\vtau_t$ & $1 \times n$ & The temperature gate in Rodimus. \\
$\mW_{\tau}$ & $m \times n$ & The weight matrix of $\vtau_t$ in Rodimus. \\
$\vb_{\tau}$ & $1 \times n$ & The bias of $\vtau_t$ in Rodimus. \\
$\ell$ & Constant & The low-rank dimensionality of projections. \\
$\mW_{\hat\beta}^1$ & $m \times \ell$ & The down-projection weight matrix of $\hat\vbeta_t$ in Rodimus. \\
$\mW_{\hat\beta}^2$ & $\ell \times m$ & The up-projection weight matrix of $\hat\vbeta_t$ in Rodimus. \\
$\vb_{\hat\beta}$ & $1 \times m$ & The bias of $\hat\vbeta_t$ in Rodimus. \\
$\vd_t$ & $1 \times m$ & The feedthrough matrix in SSMs. \\
\bottomrule
\end{tabular}
}
\label{tab:notations}
\end{table}

\section{Method Details}
\subsection{Similar functional forms of other methods}

We summarize the functional forms of $\mA_t$ and $\mB_t$ used in state-of-the-art methods in Table~\ref{tab:ab_in_models}.
\begin{table}[ht]
    \renewcommand\arraystretch{1.3}
    \centering
    \caption{
    Comparison of $\mA_t$ and $\mB_t$ across various models. For models employing the multi-head mechanism, only the gates for a single head are presented. The term "Gate Dim" represents the dimensionality of the gates $\vg_t$ in $\mA_t$ and $\mB_t$, and "L.A." denotes linear attention without scaling~\citep{katharopoulos2020transformers, qin2022devil}. Elements with the subscript $t$ are data-dependent gates, with most elements being learnable. Notably, the non-learnable constants include $\gamma$ in RetNet, $\tau$ in GLA and gRetNet, and $c$ in Hawk. Although the "State Size" of RWKV6 is smaller than that of Rodimus, the use of DDTS in Rodimus yields superior performance across all metrics (see Tables~\ref{tab:downstream_exp}).
    }
    \resizebox{\textwidth}{!}{

\begin{tabular}{@{}l|ccccrr@{}}
\toprule
\multirow{2}*{Model}                                                    & \multirow{2}*{Gate Dim}    & \multirow{2}*{\shortstack[c]{State Size\\$n\times m$}} & \multirow{2}*{\shortstack[c]{Parallelization\\like L.A.}} & \multirow{2}*{\shortstack[c]{Recurrent\\Type}}      & \multirow{2}*{$\mA_t$}                                                              & \multirow{2}*{$\mB_t$}                                          \\ &&&&&& \\ \midrule
L.A.~\citep{katharopoulos2020transformers}                    & No Gates    & $256 \times d$ & \Checkmark              & No Gates              & $\mathbf{1}_n^\top\mathbf{1}_m$                                      & $\mathbf{1}_n^\top\mathbf{1}_m$                    \\
RetNet~\citep{sun2023retentive}                             & $1\times 1$ & $256 \times d$ & \Checkmark              & Decay                 & $ g \cdot \mathbf{1}_n^\top\mathbf{1}_m$                             & $\mathbf{1}_n^\top\mathbf{1}_m$                    \\
gRetNet~\citep{sun2024cacheoncedecoderdecoderarchitectures} & $1\times 1$ & $256 \times d$ & \Checkmark              & Decay                 & $\big(\sigmoid(g_t)^{1/\tau}\big)\cdot\mathbf{1}_n^\top\mathbf{1}_m$ & $\mathbf{1}_n^\top\mathbf{1}_m$                    \\
GLA~\citep{yang2024gated}                                   & $1\times n$ & $256 \times d$ & \Checkmark              & Decay                 & $\big(\sigmoid(\vg_t)^{1/\tau}\big)^\top\mathbf{1}_m$                & $\mathbf{1}_n^\top\mathbf{1}_m$                    \\
RWKV6~\citep{peng2024eagle}                                 & $1\times n$ & $64 \times d$  & \Checkmark              & Decay                 & $\exp\big(-\exp(\vg_t)\big)^\top\mathbf{1}_m$                        & $\mathbf{1}_n^\top\mathbf{1}_m$                    \\
HGRN2~\citep{qin2024hgrn}                                  & $1\times n$ & $128 \times d$ & \Checkmark              & Selection             & $\big(1-\sigmoid(\vg_t)\big)^\top\mathbf{1}_m$                       & $\big(\sigmoid(\vg_t)\big)^\top\mathbf{1}_m$       \\
Hawk~\citep{de2024griffin}                                  & $1\times d$ & $1 \times d$   & \XSolidBrush            & Selection             & $\sqrt{1-(a^{c\cdot\sigmoid(\vg_t)})^2}$                             & $a^{c\cdot\sigmoid(\vg_t)}$                        \\
S4D~\citep{gu2022parameterization}                          & $1\times d$ & $16 \times d$  & \XSolidBrush            & Selection             & $\exp(-\mA\odot\mathbf{1}_n^\top\vg)$                                & $\mathbf{1}_n^\top\softplus(\vg)$                  \\
Mamba~\citep{mamba}                                         & $1\times d$ & $16 \times 2d$  & \XSolidBrush            & Selection             & $\exp(-\mA\odot\mathbf{1}_n^\top\vg_t)$                              & $\mathbf{1}_n^\top\softplus(\vg_t)$                \\
Mamba2~\citep{mamba2}                                       & $1\times 1$ & $128 \times 2d$ & \Checkmark              & Selection             & $\exp(-a\cdot g_t)\cdot\mathbf{1}_n^\top\mathbf{1}_m$                & $\softplus(g_t)\cdot\mathbf{1}_n^\top\mathbf{1}_m$ \\
\textbf{Rodimus}                                            & $1\times n$ & $64 \times 2d$  & \Checkmark              &  DDTS & $\exp(-\vg_t \odot \vtau_t)^\top\mathbf{1}_m$                        & $(\vg_t^{\vtau_t})^\top \df{\hat{\vbeta_t}}$              \\ \bottomrule
\end{tabular}

    }
    \label{tab:ab_in_models}
\end{table}

\df{As illustrated in the above table, the specific formulation of DDTS provides several notable advantages:
\begin{itemize}[leftmargin=*,itemsep=0pt,topsep=0pt,partopsep=0pt]
    \item \textbf{Dynamic Control of Hidden State Updates}: DDTS allows for data-dependent selection through $\mA_t$ and $\mB_t$ when writing and deleting information, enabling dynamic control of hidden state updates. This feature is discussed in Section~\ref{sec:gate_design}, with a formal proof provided in Appendix~\ref{app:proof_rodimus_is_selection}.
    \item \textbf{Efficient Parallelization}: As DDTS performs data-dependent selection solely along the dimension $n$, it supports the implementation of chunkwise parallel algorithms. This capability enables efficient parallel training while maintaining flexibility, as elaborated in Section~\ref{sec:overall_rodimus_block} and Appendix~\ref{app:rodimus_chunkwise_form}. 
    \item \textbf{Enhanced Flexibility through the Temperature Gate}: The temperature gate $\tau_t$ introduces additional flexibility in the selection mechanism, allowing for more precise control over information compression and representation. This aspect sets our approach apart from existing works, as noted in Section~\ref{sec:gate_design} and detailed in Appendix~\ref{app:gates_with_temperature}. 
\end{itemize}}

\subsection{Related Work}

\textbf{Linear Attention with Gates}:
Linear attention replaces the traditional exponential kernel, $\exp(\vq_t^\top \vk_t)$, with the dot-product feature map $\phi(\vq_t)^\top \phi(\vk_t)$ \citep{katharopoulos2020transformers}. This approach allows inference to be reformulated as a linear RNN with a matrix hidden state. However, without gating mechanisms, it often struggles to effectively capture input and forget historical information, resulting in irrelevant information within the hidden state \citep{van2018unreasonable}. To address this, gating mechanisms such as purely decay (e.g., GLA \citep{yang2024gated}) and selection mechanisms for associating input and decay (e.g., HGRN2 \citep{qin2024hgrn}) have been introduced.

These improvements, however, often lead to the issue of sacrificing hidden state size to enhance performance \citep{qin2024hgrn}, lacking thorough analysis and design of gates, which hinders performance even with larger hidden states. In Section \ref{sec:beta_analysis}, we analyze effective gate design and propose a solution for a more efficient gating mechanism named DDTS.

\textbf{State-Space Models}:
Recent advancements in state-space models based on HiPPO \citep{gu2020hippo} have led to notable results in long sequence modeling, inspiring further work like S4 \citep{gu2022efficiently}, S5 \citep{smith2023simplified}, and H3 \citep{fu2023hungry}. Mamba \citep{mamba} builds on S4D \citep{gu2022parameterization}, the parameterized diagonal version of S4, making state-space model (SSM) parameters data-dependent. Its input-based selection mechanism arises from ZOH discretization. However, Mamba's diagonal state space structure limits calculations with larger hidden states expanded by the expansion factor $n$, restricting memory capacity improvement.

Mamba2 \citep{mamba2} addresses this by using a scalar state space structure instead of the original diagonal state space structure in Mamba. This change allows Mamba2 to be trained with chunkwise parallelization, similar to linear attention \citep{sun2023retentive, ma2022mega}, permitting an expansion factor of 128. However, this approach mirrors linear attention's challenge of over-relying on hidden state size while neglecting gating design. The oversimplified gating mechanism struggles to approximate the original softmax attention performance (see Section \ref{sec:beta_analysis}).

Our method features a similar architecture to Mamba2, enabling chunk-wise parallel calculations, while our gating mechanism, DDTS, better approximates softmax attention.

\textbf{Softmax Attention with Custom Attention Mask}:
Several studies have enhanced the attention mechanism while retaining the softmax function. One approach involves modifying the attention scope by integrating prior knowledge to create diverse custom attention masks. During inference, these masks function similarly to token-level sequence compression, storing unmasked tokens in the KV cache \citep{xiao2024efficient}. Many studies observe local behavior in attention, leading to the use of local attention instead of full attention, as seen in models like LongFormer \citep{Beltagy2020Longformer}. Additionally, some methods employ learnable patterns, such as the Routing Transformer \citep{roy2021efficient}, to construct attention masks in a data-driven manner. However, custom attention masks only support token-level compression, which can result in information loss for masked tokens.

In Rodimus$+$, the Rodimus block can compress long contexts and transform token embeddings into context embeddings by dynamically fusing multiple tokens to create semantic compression, as shown in Figure~\ref{fig:overview}b. This enables the local attention in SW-SKA to capture token information beyond the window.

\subsection{Chunkwise Parallel Representation of the Rodimus Block}
\label{app:rodimus_chunkwise_form}

Let $\bm\gA_t = \prod^t_{j=1} \valpha_j$ denote the cumulative product of gates $\valpha_j$, and define $\bm\gA = \{\bm\gA_1,\dots,\bm\gA_T\} \in \mathbb{R}^{T \times n}$, then $\mLambda_t = \prod^t_{j=1} \vbeta_j$ denote the cumulative product of gates $\vbeta_j$, and define $\mLambda = \{\mLambda_1,\dots,\mLambda_T\} \in \mathbb{R}^{T \times m}$.
After the following derivation, we can obtain the parallel form of the Rodimus block:
\begin{equation}
\begin{aligned}
    \mS_t &= (\valpha_t^\top \vbeta_t)\odot \mS_{t-1} + (\hat\valpha_t^\top \hat\vbeta_t)\odot (\vk_t^\top \vv_t)
    \\
    &= (\valpha_t^\top \vbeta_t)\odot \mS_{t-1} + (\hat\valpha_t \odot \vk_t )^\top (\hat\vbeta_t \odot \vv_t),
    \\
    \vo_t &= \vq_t \mS_t,
\end{aligned}
\end{equation}
 from the derivation in \citep{yang2024gated}, we can have:
\begin{equation}
\begin{aligned}
    \vo_t 
    &= \vq_t \sum^{t}_{i=1} \big( \prod^{t}_{j=i+1} (\valpha_j^\top \vbeta_j) \odot (\hat\valpha_i \odot \vk_i )^\top (\hat\vbeta_i \odot \vv_i) \big)
    \\
    &= \vq_t \sum^{t}_{i=1} \big( (\prod^{t}_{j=i+1} \valpha_j) ^\top (\prod^{t}_{j=i+1} \vbeta_j) \odot (\hat\valpha_i \odot \vk_i )^\top (\hat\vbeta_i \odot \vv_i) \big)
    \\
    &= \vq_t \sum^{t}_{i=1} \big( (\prod^{t}_{j=i+1} \valpha_j \odot \vk_i \odot \hat\valpha_i )^\top (\prod^{t}_{j=i+1} \vbeta_j \odot \vv_i \odot \hat\vbeta_i) \big)
    \\
    &=  \sum_{i=1}^t \bigg[\vq_t\big(\prod_{j=i+1}^t\valpha_j\odot\hat\valpha_i\odot\vk_i\big)^\top\bigg]\big(\prod_{j=i+1}^t\vbeta_j\odot\hat\vbeta_i\odot\vv_i\big)
    \\
    &=  \sum_{i=1}^t \bigg[\big(\vq_t \odot \bm\gA_t\big)\big(\hat\valpha_i\odot\vk_i/\bm\gA_i\big)^\top\bigg]\big(\hat\vbeta_i\odot\vv_i/\mLambda_i\big)\odot\mLambda_t,
\end{aligned}
\end{equation}

\begin{equation}
\label{eq:parallel_form_derivation}
\begin{aligned}
    \tilde{\mQ}&=\mQ\odot\bm\gA, \quad\tilde{\mK}=\mK/\bm\gA\odot\hat\valpha, \quad\tilde{\mV}=\mV/\boldsymbol{\Lambda}\odot\hat\vbeta,
    \\
    \mO&=(\tilde{\mQ}\tilde{\mK}^\top \odot \mM)\tilde{\mV},
    \\
    \tilde{\mO}&=\mO\odot\boldsymbol{\Lambda}.
\end{aligned}
\end{equation}

It follows from Eq.~\eqref{eq:parallel_form_derivation} that the parallel formula can be written as:
\begin{equation}
    \tilde{\mO} = \big((\mQ\odot\bm\gA)(\mK/\bm\gA\odot\hat\valpha)^\top \odot \mM \big)(\mV/\boldsymbol{\Lambda}\odot\hat\vbeta)\odot\boldsymbol{\Lambda}.
\end{equation}
Since we set $\beta_j = 1$ in Section~\ref{sec:beta_analysis}, the above equation can be simplified as:
\begin{equation}
\label{eq:rodimus_parallel_form}
\begin{aligned}
    \mO=\big((\mQ \odot \bm\gA)(\mK/\bm\gA\odot\hat\valpha)^\top \odot \mM\big)(\mV\odot \hat\vbeta),
\end{aligned}
\end{equation}
which the parallel form for the proposed Rodimus block.
However, the complexity of computing $\mO$ via Eq.~\eqref{eq:rodimus_parallel_form} is quadratic. To further reduce the training complexity, we exploit a chunkwise integration of the recurrent and parallel forms, in analogy to other linear attention approaches~\citep{yang2024gated}. Specifically, we first divide the entire sequence into $T/B$ consecutive chunks, each with length $B$. For each chunk, the parallel computation as described in Eq.~\eqref{eq:rodimus_parallel_form} is employed to obtain the intra-chunk output $\mO^\mathrm{intra}$. Subsequently, information between chunks is integrated using a recurrence approach akin to Eq.~\eqref{eq:rodimus_recurrent_form}:
\begin{align}
    &\mS_{[i]}=\mS_{[i-1]}\odot((\textstyle\prod^{iB}_{j=(i-1)B+1}\valpha_j)^\top\mathbf{1})+(\mK_{[i]}\odot \hat\valpha_{[i]}\odot \textstyle \prod^{iB}_{j=(i-1)B+b+1}\valpha_j)^\top(\mV_{[i]}\odot\hat\vbeta_{[i]}), \notag
    \\
    &\mO_{[i]}^{\mathrm{inter}}=(\mQ_{[i]}\odot \textstyle \prod^{(i-1)B+b}_{j=(i-1)B+1}\valpha_j )\mS_{[i]},
\end{align}
where $[i]$ represents the $i$-th chunk. The parallel computations within each chunk can \zh{further} leverage tensor cores to speed up matrix operations~\citep{yang2024gated, mamba2}. Additionally, this sequential form for inter-chunk computation reduces overall computational complexity from quadratic to subquadratic and optimizes memory utilization during training. To further optimize training efficiency, we incorporate \texttt{FlashLinearAttention}\footnote{\url{https://github.com/sustcsonglin/flash-linear-attention}}.

\subsection{Rodimus$+$'s Relationship to Existing Hybrid Models}
\label{app:relationship_to_hybrid_models}
Here we focus on models that integrate self-attention with recurrent architectures. One notable approach is Griffin~\citep{de2024griffin, botev2024recurrentgemma}, which combines sliding window attention with a linear RNN. However, unlike Rodimus$+$, which employs matrix hidden states, the hidden state in Griffin's linear RNN is a vector, limiting its capacity to retain historical information. Another hybrid model, Based~\citep{arora2024simple}, also integrates linear attention and sliding window attention. Unfortunately, it lacks a decay or gating mechanism within its linear attention framework, making it susceptible to the attention dilution problem. On a different note, Samba~\citep{ren2024samba} explores various methods to combine Mamba with self-attention. The original version of Samba inserts MLP (FFN) layers between Mamba and self-attention, which neglects Mamba's channel mixing capabilities and introduces unnecessary parameters. Another variant of Samba, Mamba-SWA-MLP, then removes the MLP between Mamba and self-attention, but it does not implement the two-hop residual connection used in Rodimus$+$. Different from the aforementioned works, Mamba-2-Attention~\citep{mamba2} seeks to enhance retrieval (i.e., recall) capabilities by replacing some of the SSM layers with self-attention. This approach contrasts with the model architecture of Rodimus$+$, which regularly incorporates SWA between SSM layers. Moreover, determining the optimal number of layers to replace with attention requires further experimentation, and the introduction of full attention layers with their quadratic complexity complicates model scaling. Lastly, Jamba~\citep{lieber2024jamba} combines the Transformer block with Mamba and MoE layers, yet it also adds extra MLP layers after Mamba, similar to Samba. Additionally, establishing the optimal number of Transformer blocks remains an unresolved challenge.

\subsection{Models in Experiments}
\label{app:models_in_experiments}

\textbf{Transformer}~\citep{brown2020language}:
The foundational architecture with a causal attention mask used in GPT-3.

\textbf{Transformer$++$}~\citep{touvron2023llama}:
An enhancement of the Transformer, Transformer$++$ incorporates RoPE, GLU, and RMSNorm for improved performance. It is currently the most robust and widely used Transformer configuration~\citep{mamba}.

\textbf{Mamba}~\citep{mamba}:
Mamba makes the original SSM parameters data-dependent and introduces an I/O-aware associative scan.

\textbf{Mamba2}~\citep{mamba2}:
Mamba2 simplifies the diagonal state space structure in Mamba to the scalar state space structure, enhancing computational efficiency through chunkwise parallelism algorithms like linear attention.

\textbf{GPT-Neo}~\citep{gpt-neo}:
Similar to the Transformer, GPT-Neo employs full attention and local attention in alternating layers and uses ALiBi for positional encoding.

\textbf{OPT}~\citep{zhang2022opt}:
OPT mirrors the Transformer architecture but utilizes learnable positional encoding.

\textbf{BLOOM}~\citep{le2023bloom}:
BLOOM is akin to the Transformer, except it uses ALiBi for positional encoding.

\textbf{Pythia}~\citep{biderman2023pythia}:
An improvement upon the Transformer, Pythia uses RoPE for positional encoding.

\textbf{RWKV4}~\citep{peng-etal-2023-rwkv}:
RWKV4 integrates techniques such as token-shift to merge RNN-like states with Transformer-style attention mechanisms.

\textbf{RWKV6}~\citep{peng2024eagle}:
An enhanced version of RWKV4, RWKV6 features a more flexible gating mechanism and introduces multi-headed matrix-valued states to boost the memory capacity of the recurrent model.

\textbf{RetNet}~\citep{sun2023retentive}:
RetNet applies RoPE with a fixed scalar decay $\gamma$ to linear attention.

\textbf{GLA}~\citep{yang2024gated}:
GLA omits positional encoding on linear attention, opting instead for data-dependent decay.

\textbf{HGRN}~\citep{qin2024hierarchically}:
HGRN introduces a decay lower bound based on traditional linear RNNs, employs recurrent calculation in the complex domain, and uses a selection mechanism in the real domain.

\textbf{HGRN2}~\citep{qin2024hgrn}:
HGRN2 removes complex domain calculations from HGRN and introduces state expansion similar to Mamba2 and linear attention.

\textbf{Qwen2}~\citep{yang2024qwen2}:
Built on Transformer$++$, Qwen2 uses GQA to reduce inference costs.

\textbf{Llama3}~\citep{dubey2024llama}:
Llama3's architecture is similar to that of Qwen2.

\textbf{\df{RecurrentGemma}}~\citep{botev2024recurrentgemma}:
RecurrentGemma is a series of open language models built upon Google's innovative Griffin architecture~\citep{de2024griffin}.

\section{Proof}

\subsection{Lemma for Selection Mechanism}
\label{app:proof_lemma_selection}

\begin{lemma}
\label{lemma:selection}
For the recurrent model defined below, 
\begin{equation}
    \mS_{t} = \phi_{A}(\boldsymbol{\eta}_t) \odot \mS_{t-1} + \phi_{B}(\boldsymbol{\eta}_t) \odot \vu_t,
\end{equation}
where $\mS_t\in\mathbb{R}^{n\times m}$,$\vx_t \in \mathbb{R}^{1 \times m}$,$\vu_t =\phi_u(\vx_t)\in \mathbb{R}^{n \times m}$,$\boldsymbol{\eta}_t=\phi_{\eta}(\vx_t)\in\mathbb{R}^{n \times m}$.
If the following condition is met, it is a model with the selection mechanism.
\begin{equation}
\begin{aligned}
    &\quad {\forall} i \in \{0, 1, \dots, n \} \ {\forall} j \in \{0, 1, \dots, m \}, \frac{d \phi_{A}(\boldsymbol{\eta}_{t,i,j})} {d \boldsymbol{\eta}_{t,i,j}} \frac{d \phi_{B}(\boldsymbol{\eta}_{t,i,j})} {d \boldsymbol{\eta}_{t,i,j}} < 0.
\end{aligned}
\end{equation}
\end{lemma}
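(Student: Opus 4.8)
The plan is to exploit the fact that the update rule is \emph{elementwise}: since both occurrences of the gates act through a Hadamard product, the matrix recurrence decouples completely across the $n \times m$ coordinates, so coordinate $(i,j)$ of $\mS_t$ depends only on the corresponding coordinates of $\mS_{t-1}$, $\boldsymbol{\eta}_t$, and $\vu_t$. It therefore suffices to analyze a single scalar channel and then reassemble. Writing $a(\eta) = \phi_A(\eta)$, $b(\eta) = \phi_B(\eta)$ and $\eta = \boldsymbol{\eta}_{t,i,j}$, each channel evolves as $s_t = a(\eta)\, s_{t-1} + b(\eta)\, u_t$, and the hypothesis collapses to the scalar condition $a'(\eta)\, b'(\eta) < 0$ on the (open) range of $\eta$, under the mild smoothness assumption $\phi_A, \phi_B \in C^1$ satisfied by the designs in Eqs.~\eqref{eq:alpha_def} and~\eqref{eq:beta_def}.

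First I would argue that this strict sign condition, combined with continuity of $a'$ and $b'$, forces each derivative to keep a \emph{constant} sign over the whole range of $\eta$: a continuous derivative that never vanishes cannot change sign by the intermediate value theorem, and $a'(\eta)\, b'(\eta) < 0$ precludes either derivative being zero. Hence exactly one of two cases holds, $a' > 0,\ b' < 0$ or $a' < 0,\ b' > 0$, i.e.\ $a$ and $b$ are strictly monotone in \emph{opposite} directions as functions of $\eta$.

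Next I would record the consequence that matters for selection. Since $a$ is a strict monotone bijection onto its image, it is invertible, and the composition $b \circ a^{-1}$ is strictly \emph{decreasing} in both cases (decreasing $\circ$ increasing, or increasing $\circ$ decreasing, each yields decreasing). This is precisely the anti-correlated trade-off characterizing a selection mechanism: any data-driven increase in the retention weight $a = \phi_A$ placed on the carried-over state $s_{t-1}$ is necessarily matched by a strict decrease in the write weight $b = \phi_B$ placed on the fresh input $u_t$, and conversely. Because $\eta = \phi_\eta(\vx_t)$ is a function of the current token, the operating point on this trade-off curve is chosen \emph{data-dependently}, so the model can, per channel and per step, smoothly interpolate between ``retain $s_{t-1}$ and ignore $u_t$'' (large $a$, small $b$) and ``overwrite with $u_t$ and forget the past'' (small $a$, large $b$). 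Assembling these scalar conclusions coordinatewise delivers the claim for the full matrix recurrence.

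The main obstacle is not the analysis but the \emph{formalization}: the phrase ``it is a model with the selection mechanism'' is qualitative, so the crux of the proof is to fix a precise definition of selection---namely the existence of a strictly monotone, data-controlled trade-off between $\phi_A$ and $\phi_B$ in the sense of Mamba~\citep{mamba}---and to verify that the derivative-product condition is exactly what certifies it. A secondary point requiring care is strictness and domain: I must ensure the sign condition holds on the full open domain of $\eta$ so the constant-sign argument applies globally rather than only locally, and note that weakening ``$<0$'' to ``$\le 0$'' would yield only a weak (non-strict) trade-off, permitting flat regions where the gates fail to truly select.
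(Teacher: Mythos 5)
Your proposal is correct and follows essentially the same route as the paper's own proof, which simply observes that the negative product of derivatives makes $\phi_A(\boldsymbol{\eta}_{t,i,j})$ and $\phi_B(\boldsymbol{\eta}_{t,i,j})$ negatively correlated and hence realizes selection. Your write-up is in fact more careful than the paper's one-sentence argument: the coordinatewise decoupling, the intermediate-value/constant-sign step upgrading the pointwise condition to global opposite monotonicity, and the explicit data-dependent trade-off via $b \circ a^{-1}$ are all details the paper leaves implicit.
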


\begin{proof}

Since the product of the gradient of $\phi_{A}(\boldsymbol{\eta}_{t,i,j})$ and $\phi_{B}(\boldsymbol{\eta}_{t,i,j})$ is negative, $\phi_{A}(\boldsymbol{\eta}_{t,i,j})$ is negatively correlated with $\phi_{B}(\boldsymbol{\eta}_{t,i,j})$, satisfying the selection mechanism.

\end{proof}

\subsection{Proof of Proposition~\ref{proposition:rodimus_is_selection}}
\label{app:proof_rodimus_is_selection}

To prove Proposition~\ref{proposition:rodimus_is_selection}, we first verify that Rodimus satisfies Lemma~\ref{lemma:selection}. In Rodimus, $\vu_t = \vk_t^\top \vv_t$, and we define $\boldsymbol{\eta}_t = (\vx_t \mW_g + \vb_g)^\top \mathbf{1}_m$. Consequently, $\vg_t^{\prime} = \mathrm{softplus}(\boldsymbol{\eta}_t)$. For simplicity, we omit temperature $\boldsymbol{\tau}_t^{\prime}$ in this proof. We define $\phi_{A}(\boldsymbol{\eta}_t) = \exp(-\vg_t^{\prime})$ and $\phi_{B}(\boldsymbol{\eta}_t) = \vg_t^{\prime}$, which corresponds to Eq.~\eqref{eq:rodimus_recurrent_form}. Finally, we find that:
\begin{equation}
    \frac{d \phi_{A}(\boldsymbol{\eta}_{t,i,j})} {d \boldsymbol{\eta}_{t,i,j}} \frac{d \phi_{B}(\boldsymbol{\eta}_{t,i,j})} {d \boldsymbol{\eta}_{t,i,j}}=-\sigma (\boldsymbol{\eta}_{t,i,j})^2(1-\sigma (\boldsymbol{\eta}_{t,i,j}))<0,
\end{equation}
where $\sigma$ is the sigmoid function. Thus, Rodimus satisfies the selection mechanism.

\section{Experimental Details}

\subsection{Language Modeling on WikiText-103}
\label{app:wikitext103}

The \df{training} settings are listed in Table \ref{tab:setting_wikitext103_pile_valid}. \df{For the models trained on WikiText-103, the number of layers and dimensions are set to 6 and 512, respectively.} The main difference from \citet{qin2024hierarchically} is that we increase the total batch size, which reduces the training steps to obtain results more quickly.

\begin{table}[htbp]
\centering
\renewcommand\arraystretch{1.2}
\caption{Settings details for WikiText-103 and the Pile subset.}
\begin{tabular}{@{}l|ll@{}}
\toprule
Dataset               & WikiText-103    & Pile subset        \\ \midrule
Tokenizer method      & BPE             & GPT-Neox Tokenizer \\
Vocab size            & 50265           & 50277              \\
Sequence length       & 512             & 2048               \\
Total Batch size & 256             &   256                 \\
Training steps        & 25000                &  13351                  \\
Warmup steps          & 2000                &   572                 \\
Peak learing rate     & 5e-4                &   5e-4                 \\
Min learning rate & 1e-5 & 1e-5 \\
Optimizer             & AdamW           &  AdamW                  \\
Adam $\beta_1$        & 0.9                & 0.9                   \\
Adam $\beta_2$        & 0.95                &  0.95                  \\
Weight decay          & 0.1             &  0.1                  \\ 
Gradient clipping     & 1.0             &  1.0                 \\ 
\bottomrule
\end{tabular}
\label{tab:setting_wikitext103_pile_valid}
\end{table}

\subsection{Language Modeling on Pile}
\label{app:lm_pile}

\df{In addition to WikiText-103, we also train various model architectures, including Transformer$++$ (full attention), Transformers$++$ with sliding window attention (sparse attention), HGRN2 (linear attention), Mamba/Mamba2 (state-space models), Rodimus, and Rodimus$+$, using 7B tokens in the Pile dataset. This dataset is considerably larger than WikiText-103.  This training aims to investigate the language modeling performance across different model architectures, with the training settings adhering to the 350M parameter configuration outlined in Table~\ref{tab:scaling_law}. The results are summarized in Table~\ref{tab:pile}.} 

\df{Consistent with the findings in Table~\ref{tab:wikitext103}, Rodimus demonstrates superior language modeling performance compared to other models, including Transformer$++$ and Mamba2. However, unlike the results from Table~\ref{tab:wikitext103}, we find that Rodimus$+$ and Rodimus exhibit similar performance levels under this scenario. We attribute this similarity to the limited scale of the dataset. Note that Rodimus$+$ needs to distribute the tasks of global and local dependency modeling respectively to the Rodimus block and the SW-SKA, making it more challenging to train than the original Rodimus and thus requiring more data. Despite this, Rodimus$+$ shows greater potential for enhanced language modeling performance compared to Rodimus when deployed at a larger parameter scale, as illustrated in Figure~\ref{fig:scaling_law}.}

\begin{table}[ht]
\centering
\caption{
\df{Language modeling on the Pile dataset.}
}
\renewcommand\arraystretch{1.2}
\begin{tabular}{@{}l|l@{}}
\toprule
Model                    & Valid PPL \\ \midrule
Transformer$++$            & 6.04      \\
Transformer$++$-SWA        & 6.13      \\
HGRN2                    & 6.08      \\
Mamba                    & 6.19      \\
Mamba2                   & 6.02      \\
Rodimus                  & 5.88      \\
Rodimus$+$                 & 5.88      \\ \bottomrule
\end{tabular}
\label{tab:pile}
\end{table}

\subsection{Accuracy-Efficiency Trade-Off}
\label{app:analysis_mem_size}

\df{We first present the accuracy-efficiency trade-off among Rodimus*, recurrent models (GLA, HGRN-2, Mamba, and Mamba-2), and sparse attention models (StreamingLLM) using the WikiText-103 dataset. The experimental settings are detailed in the WikiText-103 column of Table~\ref{tab:setting_wikitext103_pile_valid}, while the results are illustrated in Figure~\ref{fig:mem_size_wt103}. To manipulate the memory footprint (focusing solely on the cache), we adjust the scaling factor $n$ for the recurrent models. In contrast, for StreamingLLM, we modify the number of recent tokens or the size of the sliding window. It is important to note that the input history sequence length is set at 512, which is also the recent token size corresponding to the rightmost endpoint of the StreamingLLM curve (i.e., the blue curve). This configuration aligns with that of the standard Transformer, which attends to the entire historical token sequence. Consequently, the performance of the standard Transformer can be referenced at the rightmost endpoint of the StreamingLLM curve.} 

\df{As demonstrated in Figure~\ref{fig:mem_size_wt103}, Rodimus* exhibits superior modeling performance compared to recurrent models, even when operating under the same fixed memory footprint. Furthermore, Rodimus* also surpasses both the sparse attention model (StreamingLLM) and the standard Transformer baseline. These findings underscore Rodimus*’s capacity to effectively balance accuracy and memory efficiency, achieving a substantial trade-off between the two.}

\df{To further validate the memory efficiency of Rodimus, we conduct experiments using Mamba2, Rodimus, and StreamingLLM on a larger dataset, as shown in Table~\ref{tab:mem_size_pile}. The foundational experimental settings are aligned with the 125M configurations described in Table~\ref{tab:scaling_law}. Once again, we adjust the memory footprint by modifying the expansion factor $n$. The results of PPL summarized in Table~\ref{tab:mem_size_pile} indicate that Rodimus continues to outperform the accuracy-efficiency trade-off when trained at a larger scale compared to WikiText-103.}

\begin{table}[ht]
\centering
\caption{
\df{Modifying the memory footprint to test the LM task of PPL on the Pile dataset.}
}
\begin{tabular}{@{}l|llll@{}}
\toprule
Model        & 1.2M Bytes & 2.3M Bytes & 4.7M Bytes & 9.4M Bytes \\ \midrule
StreamingLLM & 24.76      & 17.54      & 12.57      & 9.39       \\
Mamba2      & 8.16       & 7.91       & 7.81       & 7.61       \\
Rodimus      & 7.60       & 7.47       & 7.35       & 7.33       \\ \bottomrule
\end{tabular}
\label{tab:mem_size_pile}
\end{table}

\subsection{Scaling Laws}
\label{app:scaling}

We train all models on a subset of the Pile, using the parameter settings from GPT-3 \citep{brown2020language}. \zh{The context length is 2048 tokens.} The training steps and total token counts follow the Mamba settings \citep{mamba}. Detailed configurations are provided in Table \ref{tab:scaling_law}. For methods using purely recurrent models that combine GLU and token mixers, such as Mamba series and Rodimus, the number of layers should be doubled to match the GPT-3 specifications.

\df{It is worth noting that In Rodimus$+$, we replace the second Rodimus block in each layer with SW-SKA and FFN, but the overall layer count remains unchanged. To clarify the parameter count: the Rodimus Block consists of $6d^2$ parameters, where $d$ is the model dimension. Since each layer contains two Rodimus blocks, the total parameter count for Rodimus sums to $12d^2$. In contrast, the Rodimus$+$ Block consists of $13d^2$ parameters, comprising one Rodimus block ($6d^2$), alongside multi-head projection, multi-query projection, and output projection in the attention mechanism ($3d^2$), and the FFN ($4d^2$).}

\begin{table}[ht]
\centering
\caption{
Model parameters and training settings used in our scaling experiments. "Steps" indicates the number of training steps, and "Batch Size" refers to the number of tokens used for each update.
}
\resizebox{\textwidth}{!}{
\begin{tabular}{@{}llllllll@{}}
\toprule
Params & n\_layer & d\_model & n\_heads/d\_head & Steps & Learing Rate & Batch Size & Tokens \\ \midrule
125M   & 12       & 768      & 12/64            & 4800  & 6e-4         & 0.5M       & 2.5B   \\
350M   & 24       & 1024     & 16/64            & 13500 & 3e-4         & 0.5M       & 7B     \\
760M   & 24       & 1536     & 16/96            & 29000 & 2.5e-4       & 0.5M       & 15B    \\
1.3B   & 24       & 2048     & 32/64            & 50000 & 2e-4         & 0.5M       & 26B    \\ \bottomrule
\end{tabular}
}
\label{tab:scaling_law}
\end{table}

\subsection{Downstream Evaluation}
\label{app:downstream}

Our default training settings align with those in Table~\ref{tab:scaling_law} of the scaling law experiment. Unless otherwise specified, these settings remain unchanged. All our models use mixed precision and FSDP~\citep{zhao2023pytorch} to improve training speed. To maximize GPU memory usage, we adjust the batch size according to Table~\ref{tab:downstream_setting} and modify the maximum and minimum learning rates.

\zh{The Rodimus Tokenizer uses Hugging Face's byte-level BPE algorithm, featuring a vocabulary of 126,340 tokens, including special tokens, trained on a 60 billion token subset of the curated dataset.}

\zh{Notably, we do not apply weight decay to the biases of linear projections and LayerNorm, nor to the weights of LayerNorm or RMSNorm. Additionally, we set the head size of SW-SKA in Rodimus$+$ to 128 and the window size to half the training context, as described in Samba~\citep{ren2024samba}.}

\begin{table}[ht]
\centering
\renewcommand\arraystretch{1.2}
\caption{The extra training settings of Rodimus* in downstream task.}
\resizebox{\textwidth}{!}{
\begin{tabular}{llllrr}
\toprule
Params (B) & Tokenizer & Training context & Batch size & Max learning rate & Min learning rate \\ \midrule
0.13           & GPT-NeoX  & 2048    & 1.5M       & 3e-3             & 1e-5              \\
0.46           & Rodimus   & 2048    & 0.8M       & 1.5e-3           & 1e-5              \\
1.4            & Rodimus   & 2048    & 6.3M       & 1e-3             & 1e-5             \\ \bottomrule
\end{tabular}
}
\label{tab:downstream_setting}
\end{table}

\subsection{Fair Comparison of Larger Scale Models on Downstream Tasks}
\label{app:more_downstream_evaluation}
\df{In the first block of Table~\ref{tab:downstream_exp}, we noted that the model size was relatively small, consisting of only 130M parameters. To further examine the performance of Rodimus in comparison to Mamba2, we conducted an experiment using larger models. Specifically, we trained both Rodimus and Mamba2 with around 1.4B parameters from scratch under identical conditions, utilizing 100B tokens from the Pile dataset. We follow the configurations outlined in the last row of Table~\ref{tab:scaling_law}. The results are presented in Table~\ref{tab:more_downstream_exp}, where it is evident that Rodimus, trained using the same dataset and tokenizer as Mamba2, consistently outperforms Mamba2. This performance enhancement can be attributed to the innovative design of the DDTS.}

\begin{table}[ht]
\centering
\renewcommand{\arraystretch}{0.9} 
\caption{\df{More Results of Downstream Tasks via Zero-Shot Evaluation.}}
\resizebox{\textwidth}{!}{
\begin{tabular}{@{}llll|lllrllll|l@{}}
\toprule
Model &
  \begin{tabular}[c]{@{}l@{}}Params\\ \tiny (B)\end{tabular} &
  \begin{tabular}[c]{@{}l@{}}Tokens\\ \tiny (B)\end{tabular} &
  Tokenizer &
  \begin{tabular}[c]{@{}l@{}}ARC-c\\ \tiny acc\_norm $\uparrow$\end{tabular} &
  \begin{tabular}[c]{@{}l@{}}ARC-e\\ \tiny acc $\uparrow$\end{tabular} &
  \begin{tabular}[c]{@{}l@{}}HS\\ \tiny acc\_norm $\uparrow$\end{tabular} &
  \begin{tabular}[c]{@{}r@{}}LMB\\ \tiny ppl $\downarrow$\end{tabular} &
  \begin{tabular}[c]{@{}l@{}}LMB\\ \tiny acc $\uparrow$\end{tabular} &
  \begin{tabular}[c]{@{}l@{}}OBQA\\ \tiny acc\_norm $\uparrow $\end{tabular} &
  \begin{tabular}[c]{@{}l@{}}PIQA\\ \tiny acc $\uparrow$\end{tabular} &
  \begin{tabular}[c]{@{}l@{}}WG\\ \tiny acc $\uparrow$\end{tabular} &
  \begin{tabular}[c]{@{}l@{}}AVG\\ \tiny acc* $\uparrow$\end{tabular} \\ \midrule
Mamba\textsuperscript{\pounds}     & 0.13 & \underline{100}  & NeoX    & \underline{23.72} & \textbf{46.84} & \uwave{33.37} & \underline{19.72} & \underline{40.33} & \underline{29.80} & \underline{65.07} & \textbf{52.17} & \underline{41.61} \\
Mamba2\textsuperscript{\pounds}     & 0.13 & \underline{100}  & NeoX    & \textbf{23.81} & \underline{45.50} & \underline{34.13} & \uwave{20.42} & \uwave{39.55} & \uwave{29.20} & \uwave{64.47} & \underline{51.85} & \uwave{41.22} \\
Rodimus\textsuperscript{\pounds}   & 0.13 & \underline{100}  & NeoX    & \uwave{23.38} & \textbf{46.84} & \textbf{34.22} & \textbf{17.95} & \textbf{42.44} & \textbf{30.00} & \textbf{65.40} & \uwave{51.46} & \textbf{41.96} \\ \midrule
Mamba2\textsuperscript{\pounds} & \zh{1.4} & \underline{100}  & NeoX & \underline{26.54} & \underline{56.06} & \underline{48.74} & \underline{8.22} & \underline{55.25} & \underline{33.40} & \underline{70.57} & \textbf{54.14} & \underline{49.24} \\ 
Rodimus\textsuperscript{\pounds} & \zh{1.4} & \underline{100}  & NeoX & \textbf{28.75} & \textbf{56.61} & \textbf{49.51} & \textbf{7.78} & \textbf{56.05} & \textbf{34.00} & \textbf{70.78} & \underline{52.41} & \textbf{49.73} \\ \bottomrule
\end{tabular}
}
\label{tab:more_downstream_exp}
\begin{itemize}[leftmargin=0pt,itemsep=0pt,topsep=1.5pt,partopsep=0pt,label={}]
\scriptsize
\setlength{\baselineskip}{11pt}
\item - Here, \textbf{bold}, \underline{underline}, \uwave{wavy underline} indicates the best, second, and third best result, respectively. For the column ``Tokens'', \underline{underline} indicates models using fewer tokens. The superscript \textsuperscript{\pounds} denotes models trained from scratch using the same configuration for fair comparison.
\end{itemize}
\end{table}

\subsection{MQAR (Multi-Query Associative Recall)}
\label{app:mqar}

The MQAR task~\citep{arora2024zoology} is a benchmark widely used to assess the associative recall capabilities of recurrent models. \df{Note that quadratic softmax attention based models, such as the vanilla Transformer, achieves perfect scores across all configurations, and so these models are excluded from detailed analysis~\citep{yang2024gated, arora2024zoology}.} In this task, input sequences consist of several key-value pairs followed by queries. When presented with a query, the model must retrieve the corresponding key-value pair from earlier in the sequence in order to accurately predict the next token. 

For example, consider the input "A 3 B 2 C 1". The key-value pairs are "A 3", "B 2", and "C 1". Later, when "A ?", "B ?", and "C ?" appear as queries, the correct outputs are "3", "2", and "1", respectively.

Notably, models utilizing full softmax attention tend to achieve perfect scores on this benchmark. Consequently, our study concentrates exclusively on recurrent models, specifically Mamba~\citep{mamba}, Mamba2~\citep{mamba2}, RWKV~\citep{peng-etal-2023-rwkv}, and RWKV6~\citep{peng2024eagle}. 

The experiment in Figure~\ref{fig:mqar} follows the settings from MQAR \citep{arora2024zoology}. We use sequence lengths of 128, 256, and 512, with corresponding key-value pairs of 8, 16, and 32. The total vocabulary size is set to 8192, the number of model layers to 2, and the model dimension $d$ varies as 64, 128, 256, and 512. Learning rates of $10^{-4}$, $10^{-3.3}$, $10^{-2.7}$ and $10^{-2}$ are applied to train for 64 epochs using the cosine annealing learning rate schedule, with the best accuracy recorded as the reported accuracy for each model dimension.
In Figure~\ref{fig:mem_size_mqar}, the sequence length is increased to 1024, the key-value pairs are set to 256, and the model dimension $d$ is fixed at 256. Only the expansion factor $n$ is varied. Learning rates of $10^{-2}$, $10^{-2.5}$ and $10^{-3.5}$ are used to train for 32 epochs with the cosine annealing learning rate schedule, and the best accuracy is recorded as the reported accuracy corresponding to the expansion factor.

We first depict the accuracy as a function of the expansion factor $n$ in Figure~\ref{fig:mem_size_mqar}. In this analysis, we focus on the Mamba series for comparison, as their architecture closely resembles that of Rodimus*, allowing for consistent settings across all models. Our results indicate that Rodimus* exhibits superior recall ability at the same expansion factor $n$. This enhancement can be attributed to the model's DDTS mechanism, which filters out irrelevant past information and current inputs more effectively. 

Next, we present the accuracy as a function of model dimension $d$ in Figure~\ref{fig:mqar}, where the expansion factor $n$ is maintained at its original value across different models. Once again, Rodimus* achieves the highest performance. In comparison, Mamba and RWKV underperform, particularly when the model dimension is small. This suggests that the fixed-capacity state of these models can only deliver reliable recall results when the state size is sufficiently large to retain relevant past information.

\begin{figure*}[ht]
    \centering
    \includegraphics[width=\linewidth]{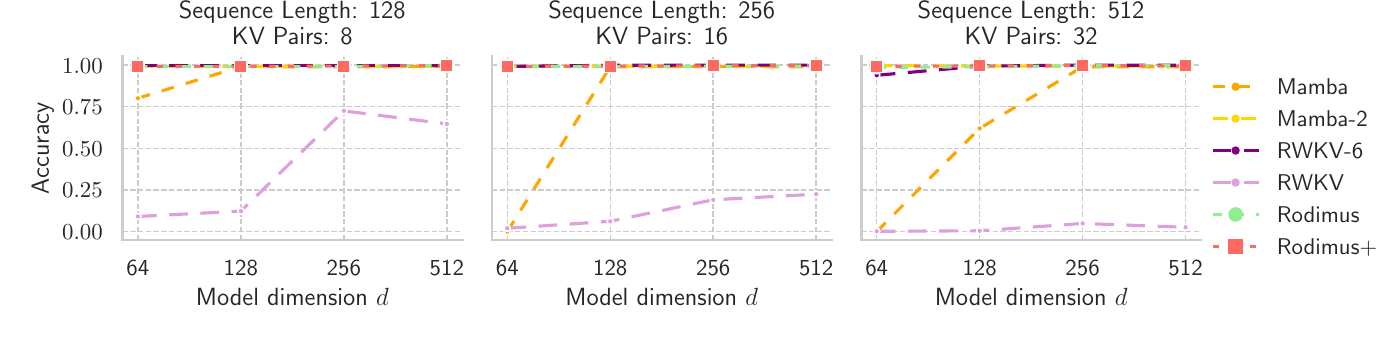}
    \vspace{-25pt}
    \caption{
    Recall accuracy of MQAR Task at different sequence lengths.
    }
    \label{fig:mqar}
    \vspace{-15pt}
\end{figure*}

\subsection{NeedleBench}
\label{app:needlebench}

The NeedleBench \citep{li2024needlebenchllmsretrievalreasoning} is a retrieval evaluation method that randomly inserts key information into long texts to create prompts for LLMs. The primary objective of this test is to assess the ability of LLMs to effectively extract crucial information from extensive documents, thereby evaluating their proficiency in processing and comprehending long-form content. The NeedleBench comprises several tasks: the Single-Needle Retrieval Task (S-RT), the Multi-Needle Retrieval Task (M-RT), the Multi-Needle Reasoning Task (M-RS), and the Ancestry Trace Challenge (ATC). The S-RT focuses on the precise extraction of specific details, while the M-RT evaluates the retrieval of multiple related pieces of information, simulating complex queries. The M-RS task involves synthesizing and understanding interconnected information, and the ATC examines logical reasoning, requiring comprehensive engagement with all provided content to solve intricate problems.

In this study, we evaluate Mamba-1.4B~\citep{mamba}, Mamba2-1.3B \citep{mamba2}, Pythia-1.4B \citep{biderman2023pythia}, Rodimus-1.4B, and Rodimus$+$-1.6B on NeedleBench using \texttt{OpenCompass} \footnote{\url{https://github.com/open-compass/opencompass}} \citep{2023opencompass}, since the sequence length of all these models is 2048 tokens.
The brief evaluation results, featuring Mamba2-1.3B, Pythia-1.4B, Rodimus-1.4B, and Rodimus$+$-1.6B, are presented in Figure \ref{fig:needlebench}. Full evaluation results, including subtasks, are detailed in Figures \ref{fig:more_needlebench:mamba1} (Mamba-1.4B), \ref{fig:more_needlebench:mamba2} (Mamba2-1.3B), \ref{fig:more_needlebench:rodimus_2k} (Rodimus-1.4B), \ref{fig:more_needlebench:pythia} (Pythia-1.4B), and \ref{fig:more_needlebench:rodimus_plus_2k} (Rodimus$+$-1.6B). The "Overall Score" represents the average result of these subtasks.
Additionally, we present the NeedleBench results for Rodimus$+$ during training in Appendix \ref{app:more_benchmark}, where the model has trained on the 4K-length context with 2.5T tokens, and demonstrates superior long-distance retrieval capabilities, as shown in Figure \ref{fig:more_needlebench:rodimus_plus_4k}.

Many studies have suggested that \zh{pre-trained} recurrent models tend to underperform on challenging recall-intensive tasks like NeedleBench when compared to models utilizing full attention mechanisms \citep{fu2023hungry,waleffe2024empirical}. This trend is illustrated by comparisons between Pythia and Mamba2. Nevertheless, Rodimus demonstrates superior recall ability relative to Pythia. While Pythia shines in scenarios involving shorter contexts (0k-4k), Rodimus exhibits better extrapolation capabilities for longer contexts, leading to overall enhanced performance. Furthermore, Rodimus$+$, a hybrid model, further improves recall ability by integrating global semantic-level and local token-level representations. The results for Rodimus and Rodimus$+$ indicate that the Rodimus family effectively navigates the accuracy-efficiency trade-off, particularly in terms of enhancing recall ability.
\zh{In addition, training with longer contexts can effectively enhance the model's recall ability in Needlebench. Rodimus$+$ trained with the 4K-length context performs better on Needlebench than when trained with the standard 2K-length context.}

\subsection{More Recall-Intensive Benchmarks of Pre-trained Language Models}
\label{app:simpler_recall_benchmark}

\df{The NeedleBench benchmark may be excessively demanding for small pre-trained models with a context length limitation of 2K tokens. To mitigate this issue, we introduce several additional, simpler recall benchmarks that are outlined below:
\begin{itemize}[leftmargin=*,itemsep=0pt,topsep=0pt,partopsep=0pt]
    \item \textbf{FDA}: This benchmark involves extracting specific details, such as device codes, classifications, and intended uses, from medical device reports submitted to the FDA.
    \item \textbf{SWDE}: This task focuses on retrieving particular information, such as movie directors and university tuition, from HTML pages across 14 websites in domains like Movies and Universities.
    \item \textbf{SQuADv2}: A question-answering task where models identify answer spans within Wikipedia articles based on given queries.
    \item \textbf{TriviaQA}: This benchmark challenges models to answer questions using information from Wikipedia and other broad web sources, featuring diverse question formats.
    \item \textbf{NQ}: Utilizing real Google search queries, this task requires models to find answers within Wikipedia, necessitating the extraction of relevant text snippets.
    \item \textbf{DROP}: A reasoning-intensive question-answering task where models must carry out operations—including counting, sorting, or performing arithmetic—using information derived from Wikipedia passages.
\end{itemize}}

\df{The same checkpoints used in the NeedleBench section are employed for evaluation, with the accuracy results presented in Table~\ref{tab:simpler_recall_benchmark}. The average scores for both Rodimus and Rodimus$+$ outperform those of Mamba and Mamba2. However, contrary to the trends observed in NeedleBench, Rodimus demonstrates a slight performance lag behind Pythia. This finding supports our earlier analysis in Appendix~\ref{app:needlebench}, which indicates that the softmax attention-based model Pythia surpasses recurrent models in recall tasks within the non-extrapolation interval. Notably, the hybrid model Rodimus$+$, which integrates SW-SKA, successfully exceeds Pythia’s performance, highlighting that the incorporation of this hybrid model effectively addresses the performance limitations of recurrent models.}

\begin{table}[htbp]
\centering
\caption{
\df{Recall results of models via zero-shot evaluation measured by accuracy. The input context length is set to 1024, and the maximum generation length is 512.}
}
\renewcommand\arraystretch{1.2}
\begin{tabular}{@{}l|llllll|l@{}}
\toprule
Model    & SWDE  & FDA   & SQUAD & Drop  & NQ    & TriviaQA & AVG   \\ \midrule
Pythia   & 53.05 & 81.65 & 41.89 & 21.85 & 34.49 & 58.18    & 48.52 \\
Mamba    & 45.36 & 59.85 & 41.28 & 20.84 & 34.43 & 60.43    & 43.70 \\
Mamba2   & 52.67 & 71.12 & 42.19 & 22.86 & 35.79 & 63.57    & 48.03 \\
Rodimus  & 49.77 & 79.47 & 43.10 & 22.76 & 32.56 & 61.85    & 48.25 \\
Rodimus$+$ & 58.76 & 80.84 & 48.74 & 24.58 & 36.27 & 64.63    & 52.30 \\ \bottomrule
\end{tabular}
\label{tab:simpler_recall_benchmark}
\end{table}

\subsection{Ablation Studies Details}
\label{app:ablation_study}

In this section, we present ablation studies designed to achieve four primary objectives: (i) assessing the contributions of various components—such as the gates and the two-hop residual—in Rodimus*, (ii) evaluating performance with different head compression methods, and (iii) determining the optimal hyperparameter values ($\ell$ and $n$), \df{(iv) analyzing the ordering of the Rodimus block and SW-SKA plus FFN in Rodimus$+$}. We begin with experiments conducted on the WikiText-103 dataset, chosen for its relatively small size, which allows for quicker results and makes the impact of removing specific components easily observable. The second set of experiments utilizes a 7B subset of Pile, providing a larger dataset to discern subtle differences arising from hyperparameter adjustments. \df{Unless otherwise specified, the models trained on WikiText-103 and Pile follow the training settings detailed in the second and third columns of Table~\ref{tab:setting_wikitext103_pile_valid}, respectively. Regarding model configurations, for the models trained on WikiText-103, the number of layers and dimensions are set to 6 and 512, respectively. For the models trained on Pile, the various model configurations are provided in Table~\ref{tab:scaling_law}.} For the WikiText-103 dataset, we report both the test PPL and the best test PPL. The test PPL refers to the evaluation of the test set after training on the training dataset for a fixed number of epochs, while the best test PPL reflects the lowest value achieved during early stopping of training. Although the former approach is commonly employed in the literature, it does not always accurately capture model performance.

\textbf{The necessity of $\vg_t$, $\vtau_t$, and $\hat\vbeta_t$}: As presented in Table~\ref{tab:ablation_gates}, the removal of these gates results in decreased performance, indicating their essential role in DDTS.

\textbf{The necessity of $\vbeta_t$}: Table~\ref{tab:ablation_gates} shows that incorporating $\hat\vbeta_t$ does not reduce the PPL further; in fact, it tends to increase it slightly. \zh{As shown in \df{Table}~\ref{tab:beta_type}, }this finding suggests that excluding $\vbeta_t$, is preferable, as it does not provide performance improvements but may complicate the training process and hinder the training \zh{efficiency} (see detailed explanations Section~\ref{sec:beta_analysis}).

\textbf{The necessity of the two-hop residual}: According to Table~\ref{tab:two_hop_residual}, the two-hop residual in Rodimus$+$ can slightly enhance performance when the model deepens with more layers. This mechanism aids in stabilizing training, as noted by~\cite{ma2024megalodon}. Given that modern Large Language Models (LLMs) typically consist of many layers and the two-hop residual incurs no extra computational cost, we implement it in Rodimus$+$.

\textbf{Comparison between SKA, GQA, and MHA}: 
\df{In this experiment, the model settings refer to 125M item in the Table ~\ref{tab:scaling_law}.}
Table~\ref{tab:head_type} indicates that SKA consistently outperforms GQA in terms of PPL across both Rodimus$+$ or Transformer$++$, demonstrating the effectiveness of SKA. However, SKA performs less optimally than MHA, likely due to increased training difficulty, requiring more iterations or data for comparable performance. Nevertheless, since SKA is more parameter-efficient than MHA and surpasses GQA in performance, we opt to utilize SKA in Rodimus$+$. \df{Additionally, we provide an analysis of the memory footprint of the cache, excluding the model weights. SKA demonstrates a substantial reduction in memory footprint compared to MHA, although its cache size is slightly larger than that of GQA.}

\textbf{Sensitivity to $\ell$ and $n$}: Table~\ref{tab:ablation_low_rank} reveals that increasing the rank $\ell$ of the weight matrices in Eq.~\eqref{eq:beta_def} generally improves performance but also raises the parameter count. We find that setting $\ell = 16$ effectively balances performance and parameter quantity, and thus we adopt this value for our experiments. Regarding the state expansion factor $n$, shown in Table~\ref{tab:ablation_mem_size}, we observe that valid PPL decreases as $n$ increases; however, the rate of decrease slows significantly for $n \geq 64$. Consequently, we establish $n=64$ for our experiments. Note that $n=128$ in Mamba2. Thus, Rodimus outperforms Mamba2 in most tests with only half the state size.

\textbf{The ordering of the Rodimus block and SW-SKA plus FFN}: \df{We investigate the impact of the ordering of the Rodimus Block, SW-SKA, and FFN on model performance. The results are detailed in Table~\ref{tab:ablation_ordering_rodimus_wt103_pile}. Specifically, for the Pile subset, the model settings correspond to 350 million items, as referenced in Table~\ref{tab:scaling_law}. Since SW-SKA and FFN are integrated within a two-residual hop structure, we treat them as a single unit and focus on the relative positioning of the Rodimus Block with respect to SW-SKA and FFN. The term "Rodimus+X" denotes the original order presented in Eq.~\eqref{eq:jetoptimus}, while "X+Rodimus" indicates the rearrangement of Rodimus's position relative to SW-SKA and FFN in the same equation.}

\df{The findings in Table~\ref{tab:ablation_ordering_rodimus_wt103_pile} reveal that for shallow models tested on the WikiText-103 dataset, it is essential to place the Rodimus Block before SW-SKA and FFN. In contrast, the results from the Pile dataset suggest that as the number of layers increases, the significance of the Rodimus Block's positioning diminishes. We hypothesize that this occurs because an increase in layers lessens the impact of module types that are closer to the word embedding and output projection on overall model performance.}

\begin{table}[ht]
\centering

\begin{minipage}[t]{0.49\textwidth}
\centering
\renewcommand\arraystretch{0.9}
\caption{
Results of the ablation experiments on the low-rank term.
}
\resizebox{\linewidth}{!}{
\begin{tabular}[l]{@{}l|ll|ll@{}}
\toprule
$\ell$ & Valid PPL & Params (M) & Valid PPL & Params (M)\\ \midrule
1        & 7.41      & 126.3  & 5.84      & 360.8  \\
2        & 7.32      & 126.4  & 5.81      & 361.0  \\
4        & 7.36      & 126.5  & 5.77      & 361.4  \\
8        & 7.30      & 126.8  & 5.76      & 362.2  \\
16       & 7.26      & 127.4  & 5.75      & 363.7  \\
32       & 7.31      & 128.6  & 5.74      & 366.9  \\
64       & 7.24      & 130.9 & 5.71      & 373.2  \\
128      & 7.17      & 135.7  & 5.70      & 385.8  \\
\bottomrule
\end{tabular}
}
\label{tab:ablation_low_rank}
\end{minipage}
\hfill
\begin{minipage}[t]{0.49\textwidth}
\centering
\renewcommand\arraystretch{1.3}
\caption{
Results of ablation experiments on expansion factor $n$.
}
\resizebox{\linewidth}{!}{
\begin{tabular}[l]{@{}l|ll|ll@{}}
\toprule
$n$ & Valid PPL & Params (M) & Valid PPL & Params (M) \\ \midrule
16  & 5.75      & 363.7      & 7.26      & 127.4      \\
32  & 5.66      & 370.0      & 7.13      & 129.8      \\
64  & 5.62      & 382.6      & 7.01      & 134.5      \\
128 & 5.58      & 407.8      & 6.95      & 143.9      \\
256 & 5.57      & 458.1      & 6.9       & 162.8     \\
\bottomrule
\end{tabular}
}
\label{tab:ablation_mem_size}

\end{minipage}
\end{table}

\begin{table}[ht]
\centering
\begin{minipage}[t]{0.55\textwidth}
\centering
\renewcommand\arraystretch{1.1}
\caption{
Results of ablation experiment of MHA, GQA and SKA.
\df{For the Memory Footprint, we log only the cache usage, excluding the contribution of model weights.}
}
\resizebox{\linewidth}{!}{

\begin{tabular}[]{@{}llll@{}}
\toprule
Model           & Head Type & Valid PPL & \df{Memory Footprint (M Bytes)} \\ \midrule
Transformer$++$ & MHA       & 7.25   & 75.5   \\
Transformer$++$ & GQA       & 7.38   & 25.2   \\
Transformer$++$ & SKA       & 7.33   & 44.0   \\
Rodimus$+$      & MHA       & 7.09   & 40.1   \\
Rodimus$+$      & GQA       & 7.19   & 14.9   \\
Rodimus$+$      & SKA       & 7.15   & 24.4   \\ \bottomrule
\end{tabular}

}
\label{tab:head_type}
\end{minipage}
\hspace{4pt}
\begin{minipage}[t]{0.4\textwidth}
\centering
\renewcommand\arraystretch{0.8}
\caption{Results of ablation experiments on gates.}
\resizebox{\linewidth}{!}{
\begin{tabular}[]{@{}lll|ll@{}}
    \toprule
    $\hat\vbeta_t$ & $\vg_t$ & $\boldsymbol{\tau}_t$ & Test PPL & Best Test PPL \\ \midrule
    \XSolidBrush             & \XSolidBrush             & \XSolidBrush                    & 23.23    & 22.66         \\
    \Checkmark            & \XSolidBrush             & \XSolidBrush                    & 23.24    & 22.20          \\
    \XSolidBrush            & \Checkmark             & \XSolidBrush                    & 22.34    & 21.33          \\
    \XSolidBrush            & \Checkmark             & \Checkmark                    & 22.14    & 21.15          \\
    \Checkmark            & \Checkmark            & \XSolidBrush                    & 22.24    & 21.17         \\
    \Checkmark            & \Checkmark            & \Checkmark                   & \textbf{21.90}     & \textbf{20.84}        \\
    \bottomrule
\end{tabular}
}
\label{tab:ablation_gates}
\end{minipage}
\end{table}

\begin{table}[ht]
\centering
\begin{minipage}[t]{0.3\textwidth}
\centering
\caption{Results of ablation experiment on type of $\vbeta$.}
\renewcommand\arraystretch{1.4}
\scriptsize  
\setlength{\tabcolsep}{3pt}
\begin{tabular}[l]{@{}l|ll@{}}
\toprule
$\vbeta_t=?$  & Test PPL & Best Test PPL \\ \midrule
$\prod^{t}_{j=i+1} \vbeta_j$            & 24.36    & 23.32         \\
$\mathbf{1}_m$ & 23.24    & 22.20         \\ \bottomrule
\end{tabular}
\label{tab:beta_type}
\end{minipage}
\hspace{1pt}
\begin{minipage}[t]{0.65\textwidth}
\centering
\caption{Results of ablation experiment on models with or without two-hop residual.}
\renewcommand\arraystretch{0.5}
\scriptsize  
\setlength{\tabcolsep}{10pt} 
\begin{tabular}[l]{@{}ll|ll@{}}
\toprule
$L$ & THR & Test PPL & Best Test PPL \\ \midrule
6         & \Checkmark     & 21.56    & 20.46         \\
6         & \XSolidBrush               & 21.77    & 20.37         \\
24        & \Checkmark              & 21.50     & 20.85         \\
24        & \XSolidBrush               & 21.52    & 20.95        \\ \bottomrule
\end{tabular}
\label{tab:two_hop_residual}

\begin{itemize}[leftmargin=-5.5pt,itemsep=0pt,topsep=1.5pt,partopsep=0pt,label={}]
\centering
\scriptsize
\setlength{\baselineskip}{11pt}
\item "THR" denotes the two-hop residual, and $L$ represents the number of layers.
\end{itemize}
\end{minipage}
\end{table}

\begin{table}[ht]
\centering
\caption{
\df{Results of ablation experiment on the ordering of Rodimus, SW-SKA, and FFN on WikiText-103 and Pile.}
}
\begin{tabular}{@{}l|l|l@{}}
\toprule
\multirow{2}*{Configuration} & \multirow{2}*{\shortstack[c]{WikiText-103 PPL \\ \scriptsize{n\_layer=6}}} & \multirow{2}*{\shortstack[c]{Pile PPL \\ \scriptsize{n\_layer=24}}} \\ && \\ \midrule
Rodimus + X   & 21.56                 & 5.9            \\
X + Rodimus   & 21.95                 & 5.9           \\ \bottomrule
\end{tabular}
\label{tab:ablation_ordering_rodimus_wt103_pile}
\end{table}

\section{More Experimental Results}

\subsection{Throughputs, FLOPs, and Memory}
\label{app:throughput}

\df{In this section, we present the FLOPs and throughputs (words per second, wps) of Rodimus*, Mamba2, and the Transformer model, both during training and inference. Additionally, we provide an analysis of memory usage during inference, excluding model weights to ensure a fair evaluation of space complexity. The specific configurations examined correspond to the 1.3B settings outlined in Table~\ref{tab:scaling_law}. For the Transformer model, we evaluate its attention mechanism implemented in both PyTorch (denoted as "torch") and Flash Attention 2 (denoted as "F.A."). The PyTorch implementation adheres to the theoretical time and space complexity of softmax attention, while Flash Attention significantly optimizes computation speed and reduces memory usage. Our analysis of FLOPs and throughputs focuses on scenarios with a batch size of one, with results detailed in Tables~\ref{tab:throughput_training} and~\ref{tab:throughput_inference}.}

\df{The results reveal that the FLOPs and memory footprint of Rodimus are the smallest among all models tested, indicating the time and space efficiency of the proposed architecture. Particularly, Rodimus consumes only half the memory of Mamba2, making it an attractive option for resource-constrained environments, such as edge devices or low-power systems. Conversely, Rodimus$+$ exhibits a slight increase in FLOPs but a significant rise in memory consumption due to the integration of SW-SKA and FFN components. Overall, both Rodimus and Rodimus$+$ demonstrate greater time and space efficiency than the Transformer model, with Rodimus being more efficient than Mamba2 as well. Notably, Rodimus, Rodimus$+$, and Mamba2 can manage long sequences (e.g., pre-filling 24K tokens) more effectively than Transformers.}

\df{However, it is noteworthy that the throughputs of Mamba2 and the Transformer (F.A.) is significantly higher than that of Rodimus*. This discrepancy arises because Rodimus* does not incorporate the I/O-aware optimizations that are implemented in these baselines. In future work, we aim to address these throughputs limitations by integrating I/O-aware optimizations to enhance Rodimus*'s efficiency during both training and inference.}

\begin{table}[ht]
\caption{
\df{Throughputs and FLOPs during training.}
}
\centering
\renewcommand\arraystretch{1.2}
\begin{tabular}{@{}l|ll@{}}
\toprule
\multirow{2}*{Model} & \multicolumn{2}{c}{4K-length contexts} \\ \cline{2-3}
                     & FLOPs $\times 10^{13}$ & Throughputs (wps) \\ \midrule
Transformer (torch)      & 4.46                 & 2726.64                            \\
Transformer (F.A.) & -              & 8547.29                             \\
Mamba2 & 3.65              & 10147.10                             \\
Rodimus                               & 3.60              & 8592.59                             \\
Rodimus$+$                              & 4.29              & 9410.38                             \\ \bottomrule
\end{tabular}
\label{tab:throughput_training}
\end{table}

\begin{table}[ht]
\caption{
\df{Throughputs, FLOPs, and Memory Footprint during inference.}
}
\centering
\renewcommand\arraystretch{1.2}
\resizebox{\linewidth}{!}{
\begin{tabular}{@{}l|lll|lll@{}}
\toprule

\multirow{2}*{Model} & \multicolumn{3}{c|}{4K-length contexts} & \multicolumn{3}{c}{24K-length contexts} \\ 
\cline{2-7} & FLOPs $\times 10^{9}$ & Throughputs (wps) & Memory Footprint (M Bytes) & FLOPs $\times 10^{9}$ & Throughputs (wps) & Memory Footprint (M Bytes) \\ \midrule

Transformer (torch)      & 3.6              & 2.4            & 805                          & 7.7              & OOM            & 4831.8                          \\
Transformer (F.A.) & -           & 49.4            & -                        & -          & 11.0             & -                     \\
Mamba2 & 3.0           & 59.3            & 0.8                        & 3.0          & 59.3             & 0.8                     \\
Rodimus & 2.9           & 21.4            & 0.4                        & 2.9           & 21.4            & 0.4                        \\
Rodimus$+$ & 3.5           & 19.6            & 207.8                      & 3.5           & 19.6            & 207.8                     \\ \bottomrule
\end{tabular}
}
\label{tab:throughput_inference}
\end{table}

\subsection{Gates with Temperature $\tau_t$}
\label{app:gates_with_temperature}

We further visualize the effect of gates influenced by the temperature gate $\vtau_t$. In Figure~\ref{fig:curve_gates_with_temperature}, we present the curves of the decay term $\exp(\vg_t \odot \vtau_t)$ and the input term $\vg_t^{\vtau_t}$ as the temperature varies. As $\vtau_t$ decreases, the lower bounds of both terms gradually rise, enhancing information retention.

In Figure~\ref{fig:vis_gates_with_temperature}, we compare the gates of a layer in models from Table~\ref{tab:ablation_gates} with and without the temperature gate $\vtau_t$. By comparing Figure~\ref{fig:input_wo_tau} with Figure~\ref{fig:input_w_tau}, and Figure~\ref{fig:decay_wo_tau} with Figure~\ref{fig:decay_w_tau}, we observe that the inclusion of temperature increases the flexibility of the gates in selecting information.

\begin{figure}[ht]
    \centering
    \begin{subfigure}[b]{.45\textwidth}
         \centering         \includegraphics[width=\textwidth]{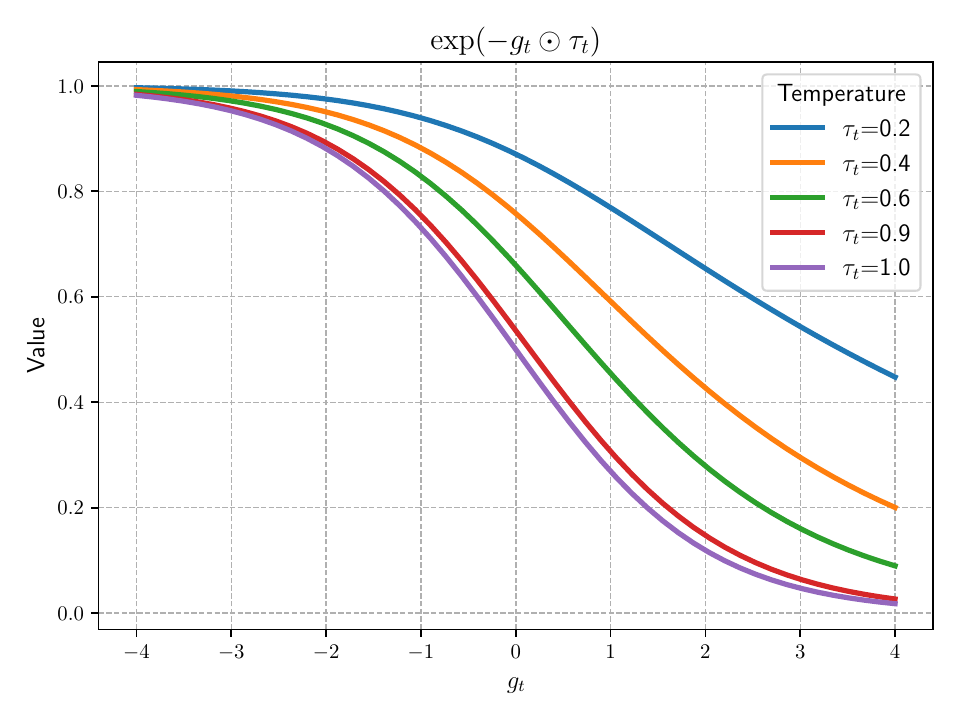}
         \vspace{-4ex}
         \caption{Decay term.}
         \label{fig:sigmoid_tau}
     \end{subfigure}
     \begin{subfigure}[b]{.45\textwidth}
         \centering
         \includegraphics[width=\textwidth]{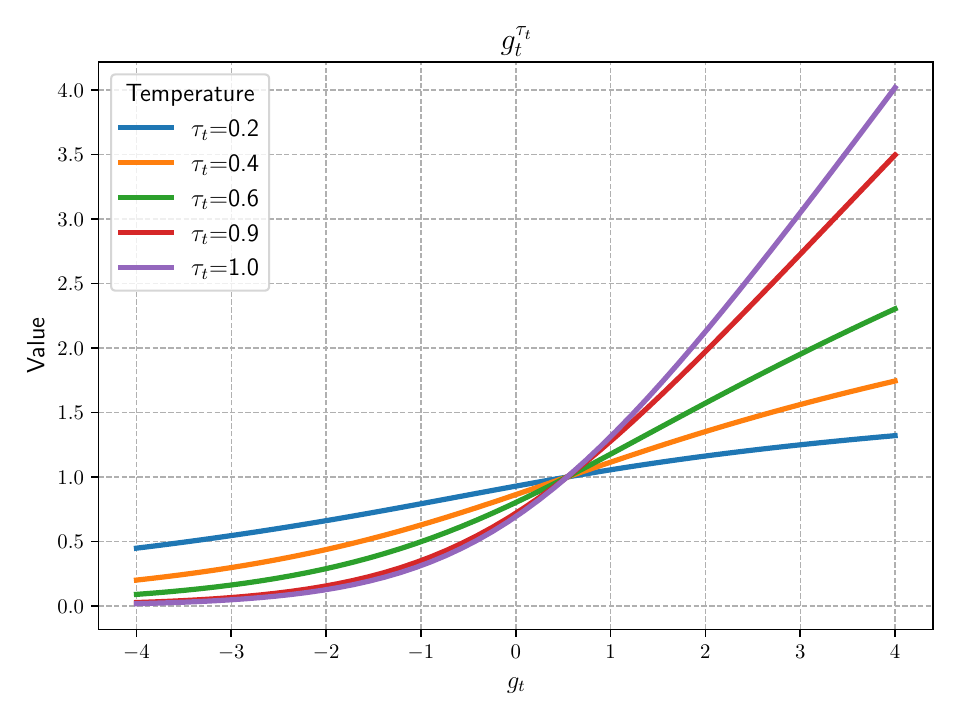}
         \vspace{-4ex}
         \caption{Input term.}
         \label{fig:softplus_tau}
     \end{subfigure}
    \caption{
    The curves of the gates with temperature gate $\vtau_t$.
    }
    \label{fig:curve_gates_with_temperature}
\end{figure}

\begin{figure}
    \centering
    \begin{subfigure}[b]{.48\textwidth}
         \centering         \includegraphics[width=\textwidth]{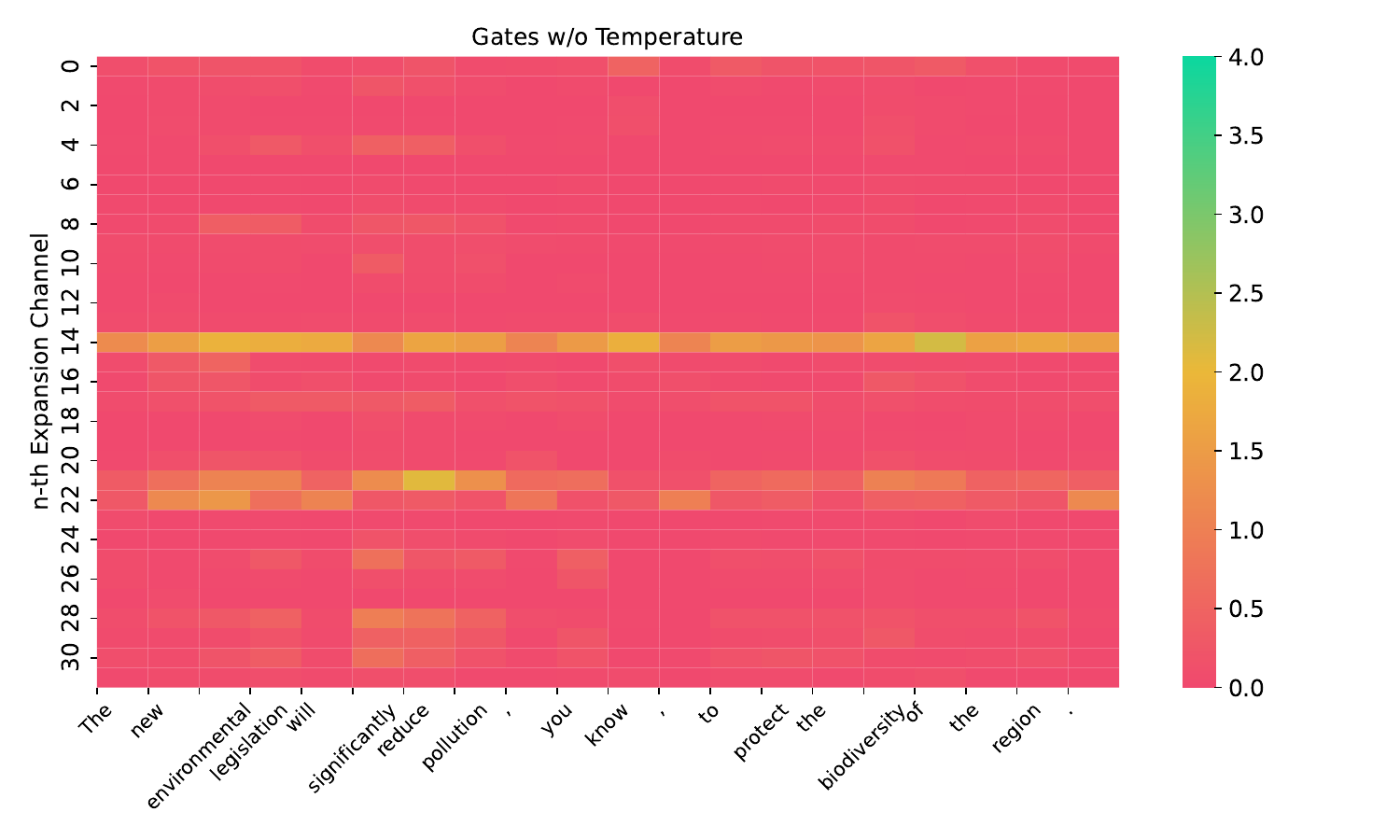}
         \vspace{-4ex}
         \caption{Input w/o $\tau$.}
         \label{fig:input_wo_tau}
     \end{subfigure}
     \begin{subfigure}[b]{.48\textwidth}
         \centering
         \includegraphics[width=\textwidth]{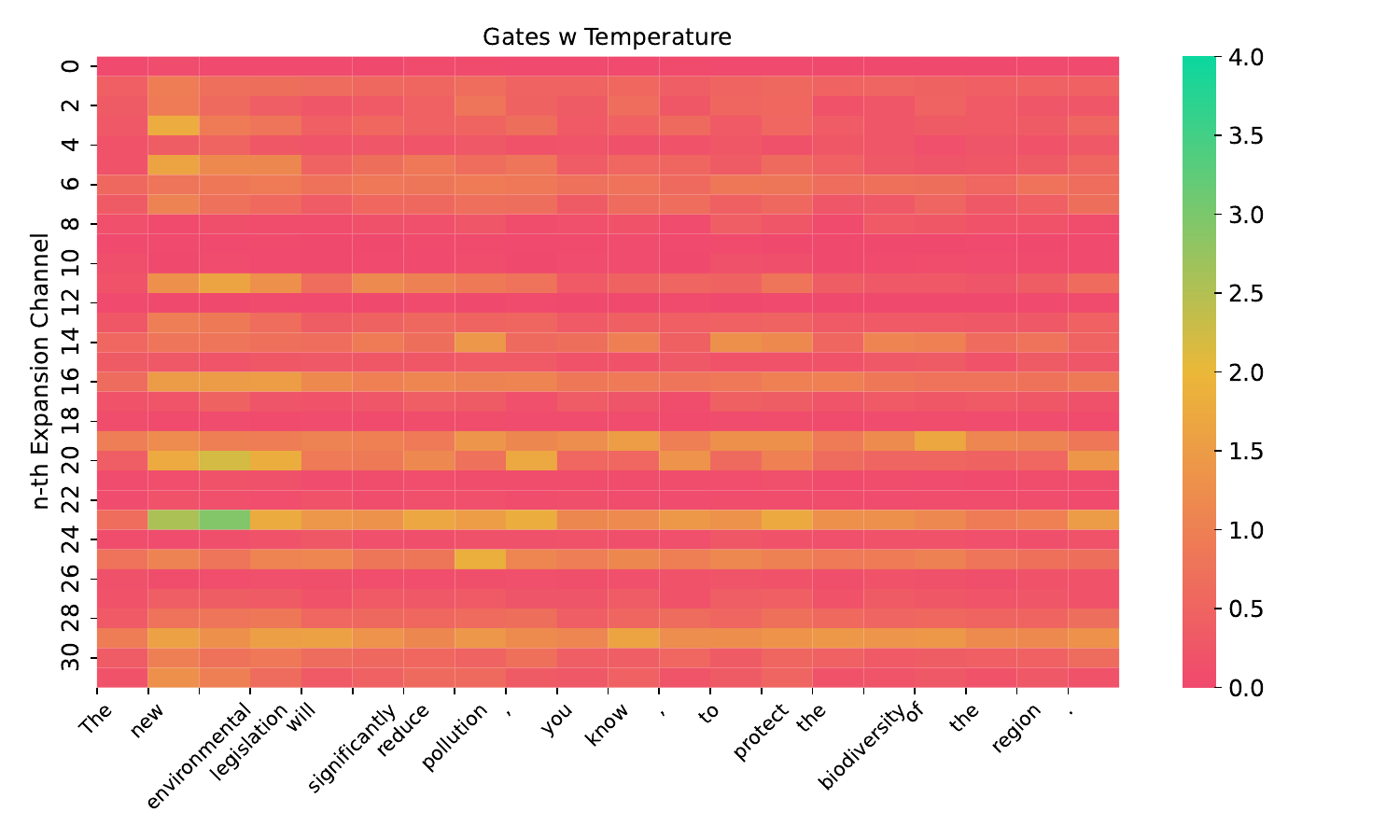}
         \vspace{-4ex}
         \caption{Input w $\tau$.}
         \label{fig:input_w_tau}
     \end{subfigure}
    \\
    \begin{subfigure}[b]{.48\textwidth}
         \centering         \includegraphics[width=\textwidth]{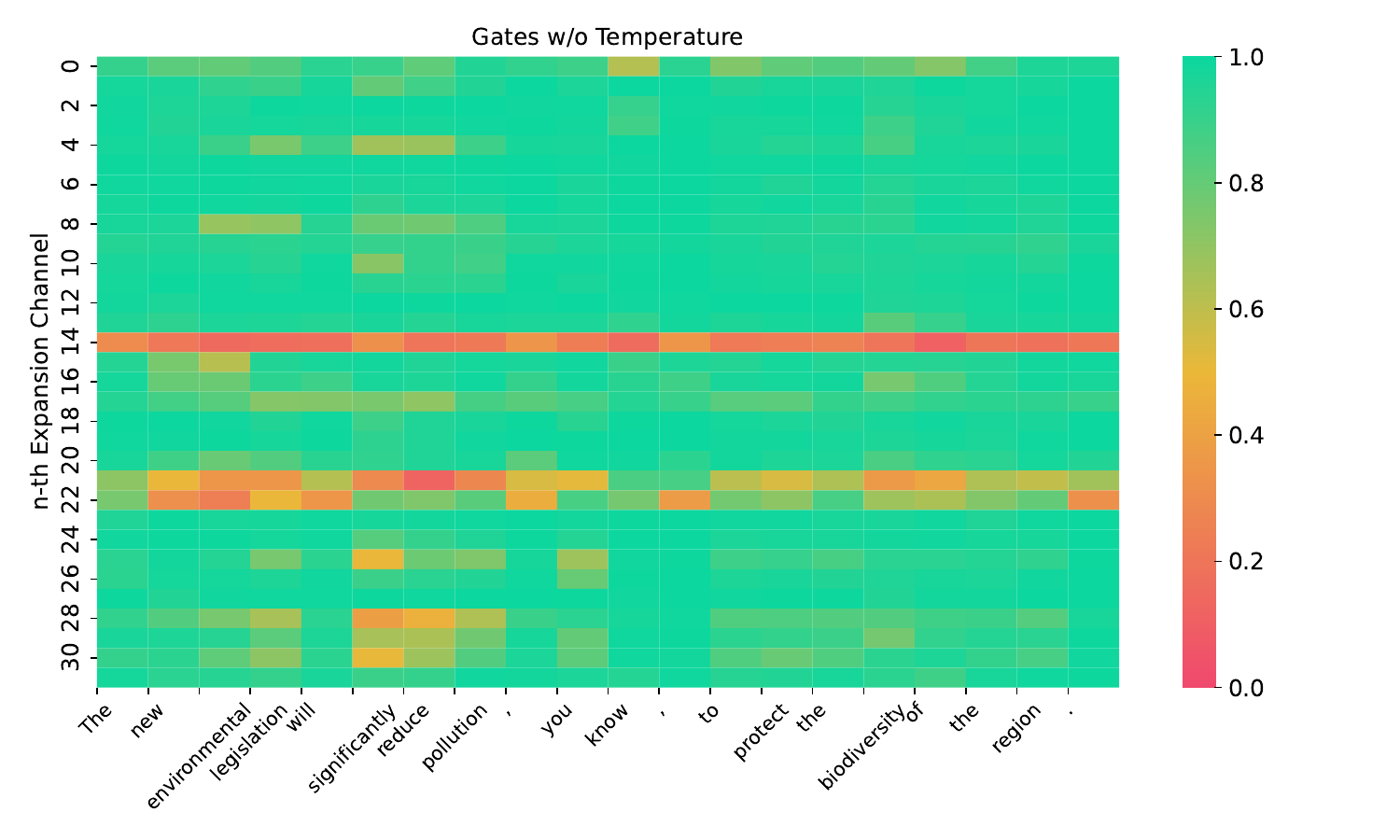}
         \vspace{-4ex}
         \caption{Decay w/o $\tau$.}
         \label{fig:decay_wo_tau}
     \end{subfigure}
     \begin{subfigure}[b]{.48\textwidth}
         \centering
         \includegraphics[width=\textwidth]{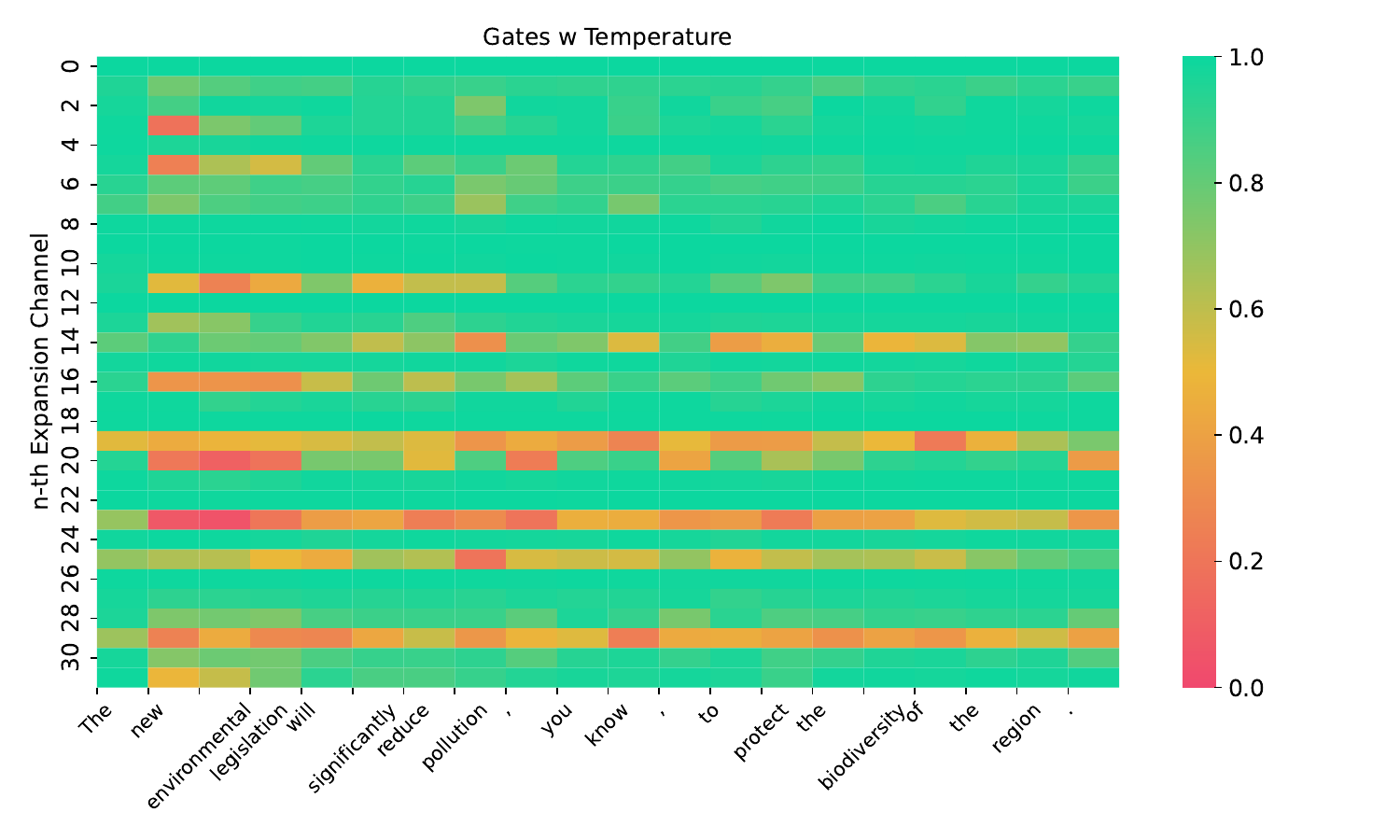}
         \vspace{-4ex}
         \caption{Decay w $\tau$.}
         \label{fig:decay_w_tau}
     \end{subfigure}
    \caption{
    The visualization results of the gates are shown. The y-axis represents the n-th dimension of the gates, and the x-axis displays the sub-words after tokenization. (a) and (b) show the heatmap visualizations of the input gate $\vg_t$ and $\vg_t^{\tau_t}$. (c) and (d) present the corresponding results of $\exp(\vg_t)$ and $\exp(\vg_t \odot \tau_t)$.
    }
    \label{fig:vis_gates_with_temperature}
\end{figure}

\subsection{More Practical LLM: Multi-stage training of Rodimus$+$}
\label{app:more_benchmark}

To evaluate the practical capabilities of Rodimus$+$ and develop it into a functional LLM, we develop Rodimus$+$-Coder, a lightweight code generation model available in 1.6B and 4B parameter versions. Both variants have achieved SOTA performance in their respective size. The training process for both the 1.6B and 4B models begins from scratch and follows a three-stage pre-training approach:

\begin{itemize}[leftmargin=*,itemsep=3pt,topsep=1pt,parsep=2pt]
    \item \textbf{Initial:} Training on 6T tokens of primarily natural language data (20\% code).
    \item \textbf{Coding:} Continued training on 1.5T tokens of code-focused data (60\% code).
    \item \textbf{Annealing:} Annealing with 500B tokens (more than 50\% synthetic data).
\end{itemize}

The total pre-training dataset comprises about 8T tokens. The post training pipeline includes Supervised Fine-Tuning (SFT) and Direct Preference Optimization (DPO), utilizing the same dataset as for Ling-Coder-lite~\citep{codefuse2025samplemattersleveragingmixtureofexperts}. The model weights have been open-sourced~\footnote{\url{https://huggingface.co/collections/codefuse-ai}}.

We conduct a comprehensive evaluation of Rodimus$+$-Coder models on 19 diverse benchmarks, including 13 code-related tasks (HumanEval~\citep{chen2021evaluating}, HumanEval+~\citep{evalplus}, MBPP~\citep{austin2021program}, MBPP+~\citep{evalplus}, LiveCodeBench~\cite{jain2024livecodebench}, BigCodeBench~\citep{zhuo2024bigcodebench}, MultiPL-E~\citep{cassano2022multiplescalableextensibleapproach}, MBXP-PLUS~\citep{mbxp_athiwaratkun2022}, CRUXEval~\citep{gu2024cruxeval}, HumanEvalFix~\citep{muennighoff2023octopack},Spider~\citep{yu2019spider}), 4 general knowledge assessments (C-Eval (5-shot)~\citep{huang2023ceval}, CMMLU (5-shot)~\citep{li2023cmmlu}, MMLU (5-shot)~\citep{hendryckstest2021}, and BBH (3-shot)~\citep{suzgun2023challenging}), and 2 mathematical reasoning challenges (GSM8K (4-shot)~\citep{cobbe2021training} and MATH (4-shot)~\citep{hendrycksmath2021}).

We systematically compare performance against various scales of contemporary models, including Qwen2.5-Coder~\citep{hui2024qwen2}, Gemma~\citep{team2024gemma,team2025gemma}, and Phi-4-Mini~\citep{abouelenin2025phi}, to establish a comprehensive analysis of coding proficiency, general knowledge comprehension, and mathematical problem-solving capabilities. To ensure direct comparability, we re-evaluate Qwen series models using the same methodology as Rodimus$+$-Coder. Metrics for other models (Gemma, Phi-4-Mini) are taken from their original publications. All evaluation scripts are publicly available in the CodeFuse-Evaluation repository~\footnote{\url{https://github.com/codefuse-ai/codefuse-evaluation}} for transparency and reproducibility.

\subsubsection{Performance on Base Model}

\begin{table}[t]
    \caption{Comprehensive evaluation results on the base models.}
    \centering
    \renewcommand{\arraystretch}{1.1}
    \resizebox{\linewidth}{!}{
    \begin{tabular}{lccc|ccc|c}
    \toprule
    \multirow{2}{*}{\textbf{Datasets}} & \textbf{Qwen2.5-} & \textbf{Rodimus+-} & \textbf{Gemma2-} & \textbf{Qwen2.5-} & \textbf{Rodimus+-} & \textbf{Gemma3-} & \textbf{Qwen2.5-} \\
     & \textbf{Coder-1.5B} & \textbf{Coder-1.6B} & \textbf{2B-PT} & \textbf{Coder-3B} & \textbf{Coder-4B} & \textbf{4B-PT} & \textbf{Coder-7B} \\
    \midrule
    \multicolumn{8}{c}{\textit{Coding Tasks}} \\
    \midrule
    HumanEval & 41.5 & 51.2 & 19.5 & 51.8 & \pmb{60.4} & 36.0 & \pmb{60.4} \\
    HumanEval+ & 34.8 & 45.1 & - & 40.9 & \pmb{52.4} & - & 50.6 \\
    MBPP & 57.2 & 51.2 & 31.0 & 62.6 & 64.6 & 46.0 & \pmb{70.0} \\
    MBPP+ & 66.1 & 62.2 & - & 65.9 & \pmb{71.4} & - & 70.1 \\
    BCB$_{\text{COMPLETION}}$ & 21.6 & 17.9 & - & 26.2 & \pmb{30.8} & - & 30.4 \\
    MultiPL-E & 46.1 & 52.5 & - & 49.4 & \pmb{60.7} & - & 56.9 \\
    CRUXEval & 38.5 & 45.1 & - & 44.6 & 56.4 & - & \pmb{56.8} \\

    \textbf{Coding Avg.} & 43.7 & 46.5 & - & 48.8 & \pmb{56.7} & - & 56.4 \\
    \midrule
    \multicolumn{8}{c}{\textit{General Tasks}} \\
    \midrule
    C-EVAL & 55.2 & 56.7 & - & 65.3 & 70.2 & - & 69.1 \\
    CMMLU & 54.5 & 52.3 & - & 65.4 & 68.3 & - & 72.7 \\
    MMLU & 55.5 & 51.1 & 52.2 & 63.3 & 62.6 & 59.6 & 70.5 \\
    BBH & 21.8 & 46.8 & 42.4 & 32.5 & 61.9 & 50.9 & 67.3 \\
    \textbf{General Avg.} & 46.8 & 51.7 & - & 56.6 & 65.8 & - & 69.9 \\
    \midrule
    \multicolumn{8}{c}{\textit{Mathematics Tasks}} \\
    \midrule
    GSM8K & 60.4 & 68.7 & 25.0 & 72.1 & 78.5 & 38.4 & 83.4 \\
    MATH & 23.7 & 29.0 & 16.4 & 31.9 & 37.0 & 24.2 & 42.2 \\
    \textbf{Mathematics Avg.} & 41.9 & 48.9 & 20.7 & 52.0 & 57.8 & 31.3 & 62.8 \\
    \midrule
    \textbf{Overall} & 44.4 & 48.4 & - & 51.7 & \pmb{59.6} & - & \pmb{61.6} \\
    \bottomrule
    \end{tabular}}
    \label{tab:rodimus_plus_coder_base_all_evaluation}
\end{table}

Overall assessment demonstrates that Rodimus$+$-Coder base models outperform similarly sized Qwen2.5-Coder and Gemma models across benchmarks. In particular, Rodimus$+$-Coder-4B achieves a comparable average coding performance to Qwen2.5-Coder-7B, although it has fewer parameters. Furthermore, even after extensive code-specific training, Rodimus$+$-Coder maintains competitive performance in general natural language understanding and mathematical reasoning tasks, indicating effective preservation of its general capabilities alongside domain-specific enhancement.

\subsubsection{Performance on chat or instruct Models.}

\begin{table}[t]
    \caption{Comprehensive evaluation results on the Chat model.}
    \centering
    \renewcommand{\arraystretch}{1.1}
    \resizebox{\linewidth}{!}{
    \begin{tabular}{lccc|cccc|c}
    \toprule
    \multirow{2}{*}{\textbf{Datasets}} & \textbf{Qwen2.5-Coder-} & \textbf{Rodimus+-} & \textbf{Gemma2-} & \textbf{Qwen2.5-Coder-} & \textbf{Phi-4-} & \textbf{Rodimus+-} & \textbf{Gemma3-} & \textbf{Qwen2.5-Coder-} \\
     & \textbf{1.5B-Instruct} & \textbf{Coder-1.6B-Chat} & \textbf{2B-IT} & \textbf{3B-Instruct} & \textbf{Mini-3.8B} & \textbf{Coder-4B-Chat} & \textbf{4B-IT} & \textbf{7B-Instruct} \\
    \midrule
    \multicolumn{9}{c}{\textit{Coding Tasks}} \\
    \midrule
    HumanEval & 64.6 & 76.8 & 20.1 & 79.9 & 74.4 & 86.6 & 71.3 & 87.2 \\
    HumanEval+ & 63.4 & 73.8 & - & 80.5 & 68.3 & 82.9 & - & 82.3 \\
    MBPP & 51.0 & 59.0 & 36.6 & 59.2 & 65.3 & 68.0 & 63.2 & 75.8 \\
    MBPP+ & 53.0 & 66.4 & - & 61.9 & 63.8 & 68.5 & - & 75.1 \\
    LCB$_{(24.08-24.11)}$ & 4.0 & 10.9 & - & 13.0 & - & 13.9 & - & 22.8 \\
    BCB$_{\text{INSTRUCT}}$ & 10.8 & 21.5 & - & 21.7 & 33.8 & 26.6 & - & 30.6 \\
    HumanEval-Mul & 50.8 & 57.3 & - & 67.4 & - & 70.6 & - & 76.1 \\
    MBPP-Mul & 43.4 & 52.4 & - & 53.4 & - & 59.6 & - & 61.4 \\
    MBXP-EN & 55.8 & 75.5 & - & 76.0 & - & 87.3 & - & 87.7 \\
    MBXP-CN & 48.8 & 75.0 & - & 68.7 & - & 84.3 & - & 83.5 \\
    CRUXEval & 28.6 & 55.0 & - & 51.6 & - & 63.2 & - & 69.3 \\
    HumanEvalFix & 38.9 & 52.6 & - & 55.5 & - & 68.8 & - & 69.3 \\
    Spider & 61.2 & 71.4 & - & 71.8 & 42.2 & 73.5 & - & 82.0 \\

    \textbf{Coding Avg.} & 44.2 & 57.5 & - & 58.5 & - & \pmb{65.7} & - & \pmb{69.5} \\
    \midrule
    \multicolumn{9}{c}{\textit{General Tasks}} \\
    \midrule
    C-EVAL & 51.5 & 50.8 & - & 62.0 & - & 61.6 & - & 66.4 \\
    CMMLU & 45.2 & 50.5 & - & 60.1 & - & 62.0 & - & 64.9 \\
    MMLU & 52.0 & 49.3 & 56.1 & 61.7 & 67.3 & 57.5 & 58.1 & 66.1 \\
    BBH & 24.2 & 58.7 & 41.4 & 57.3 & 70.4 & 63.7 & 72.2 & 59.1 \\
    \textbf{General Avg.} & 43.2 & 52.3 & - & 60.3 & - & 61.2 & - & 64.1 \\
    \midrule
    \multicolumn{9}{c}{\textit{Mathematics Tasks}} \\
    \midrule
    GSM8K & 54.4 & 68.5 & 62.6 & 73.5 & 88.6 & 79.2 & 89.2 & 79.5 \\
    MATH & 38.1 & 33.5 & 27.2 & 44.1 & 64.0 & 44.1 & 75.6 & 60.8 \\
    \textbf{Mathematics Avg.} & 46.2 & 51.0 & 44.9 & 58.8 & 68.8 & 61.7 & 82.4 & 70.1 \\
    \midrule
    \textbf{Overall} & 44.2 & 55.8 & - & 58.9 & - & \pmb{64.3} & - & \pmb{68.4} \\
    \bottomrule
    \end{tabular}}
    \label{tab:rodimus_plus_coder_chat_all_evaluation}
\end{table}

Overall assessment reveals that Rodimus$+$-Coder-Chat models maintain superior performance compared to models of similar size. Remarkably, Rodimus$+$-Coder-1.6B-Chat and Rodimus$+$-Coder-4B-Chat achieve coding performance comparable to their larger counterparts, Qwen2.5-Coder-3B-Instruct and Qwen2.5-Coder-7B-Instruct respectively, demonstrating efficient parameter utilization while delivering competitive performance with larger instruction-tuned models.


\section{Discussion on Widespread Usage of Recurrent and Hybrid Models}
\label{app:discussion_widespread}

\df{The technical challenges associated with our proposed models, particularly in comparison to the widespread adoption of softmax attention-based large language models (LLMs), are well-recognized. However, the evolving landscape of machine learning research suggests a shift in this dynamic. The growing interest in recurrent and hybrid models within both academic and industrial domains is beginning to address these barriers, paving the way for broader exploration and application.}

\df{Recent developments illustrate this trend, as numerous new recurrent and hybrid models have emerged. Notable examples include Mamba, Mamba2, RWKV, RWKV6, RetNet, HGRN, HGRN2, GLA, Hawk, and Griffin. Moreover, Mistral has successfully scaled Mamba to 7B parameters and has made it open-source. Similarly, Google has scaled Griffin to an impressive scale of 9B and open-sourced Recurrent-Gemma. These advancements indicate growing confidence in the performance of recurrent and hybrid models, often achieving comparable results compared to traditional softmax attention-based LLMs while maintaining lower complexity.}

\df{From an inference perspective, it is indeed worth highlighting that recurrent and hybrid models are inherently designed for efficiency. The per-token generation complexity for these models is theoretically $\gO(1)$, in stark contrast to the $\gO(T)$ complexity of softmax-based models, where $T$ denotes the context length. As a result, even without optimization, recurrent and hybrid models demonstrate rapid inference capabilities. Furthermore, state-of-the-art inference optimization frameworks such as vLLM, TensorRT-LLM, and OLlama have already begun to support the inference of models like Mamba and Mamba2, further paving the way for their more widespread usage.}

\section{Future Work}
\label{app:future_work}

Rodimus* has delivered excellent results across benchmarks; however, there are areas for improvement. Due to limited computing resources, we have not expanded its parameters to match open-source models like RWKV6-14B and Qwen2-72B. Additionally, Rodimus* lacks the highly I/O-aware optimization found in models such as Mamba and Mamba2. There's potential to enhance its performance by designing I/O-aware multi-head scalar decay and integrating it with Rodimus's DDTS to broaden gating mechanisms without significantly impacting training efficiency. Furthermore, for Rodimus$+$, the memory usage of SW-SKA can be further reduced while achieving better practical application performance. We aim to address these issues in future work.

\newpage
\begin{figure}[htbp]
    \centering
    \begin{subfigure}[t]{.48\textwidth}
         \centering         
         \includegraphics[width=\textwidth]{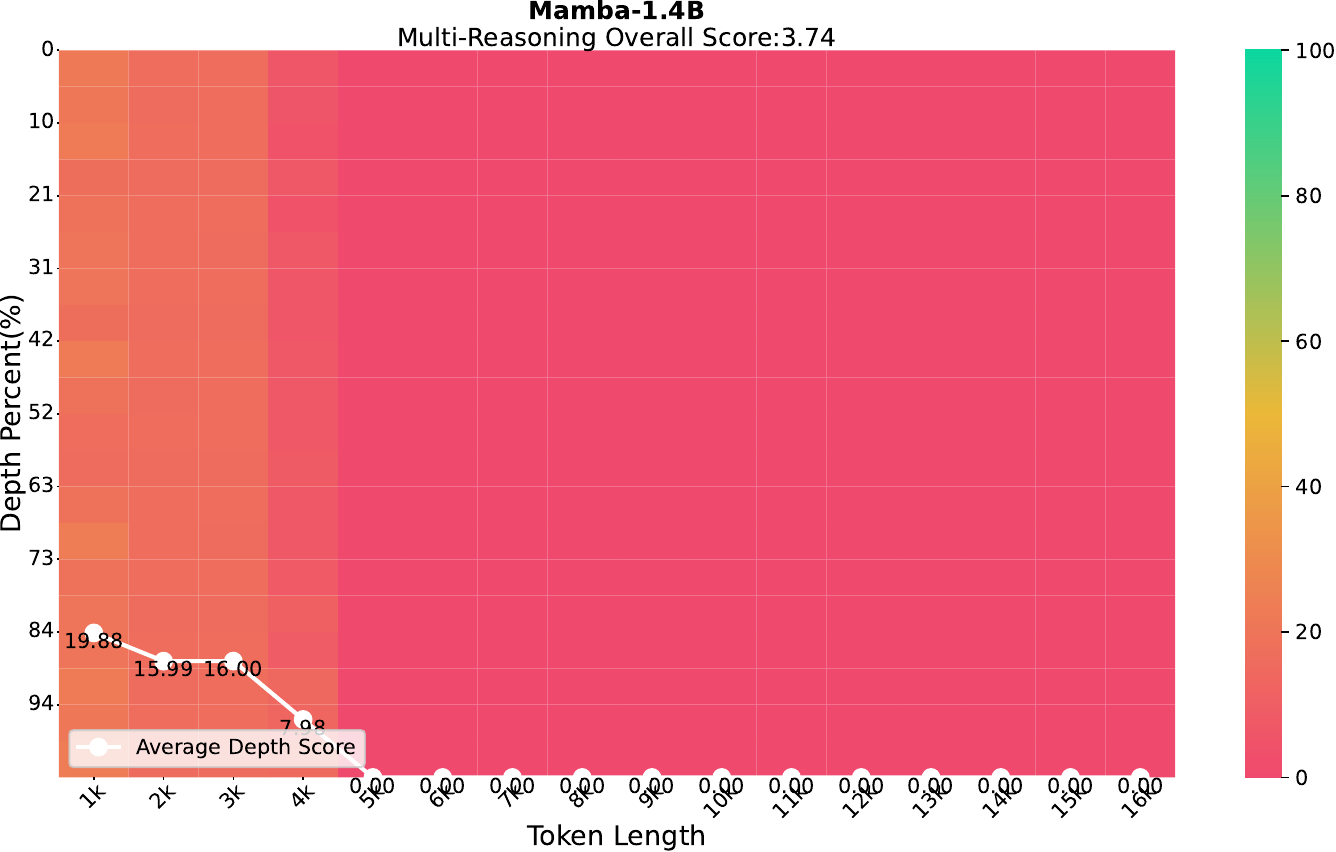}
     \end{subfigure}
     \hfill
     \begin{subfigure}[t]{.48\textwidth}
         \centering
         \includegraphics[width=\textwidth]{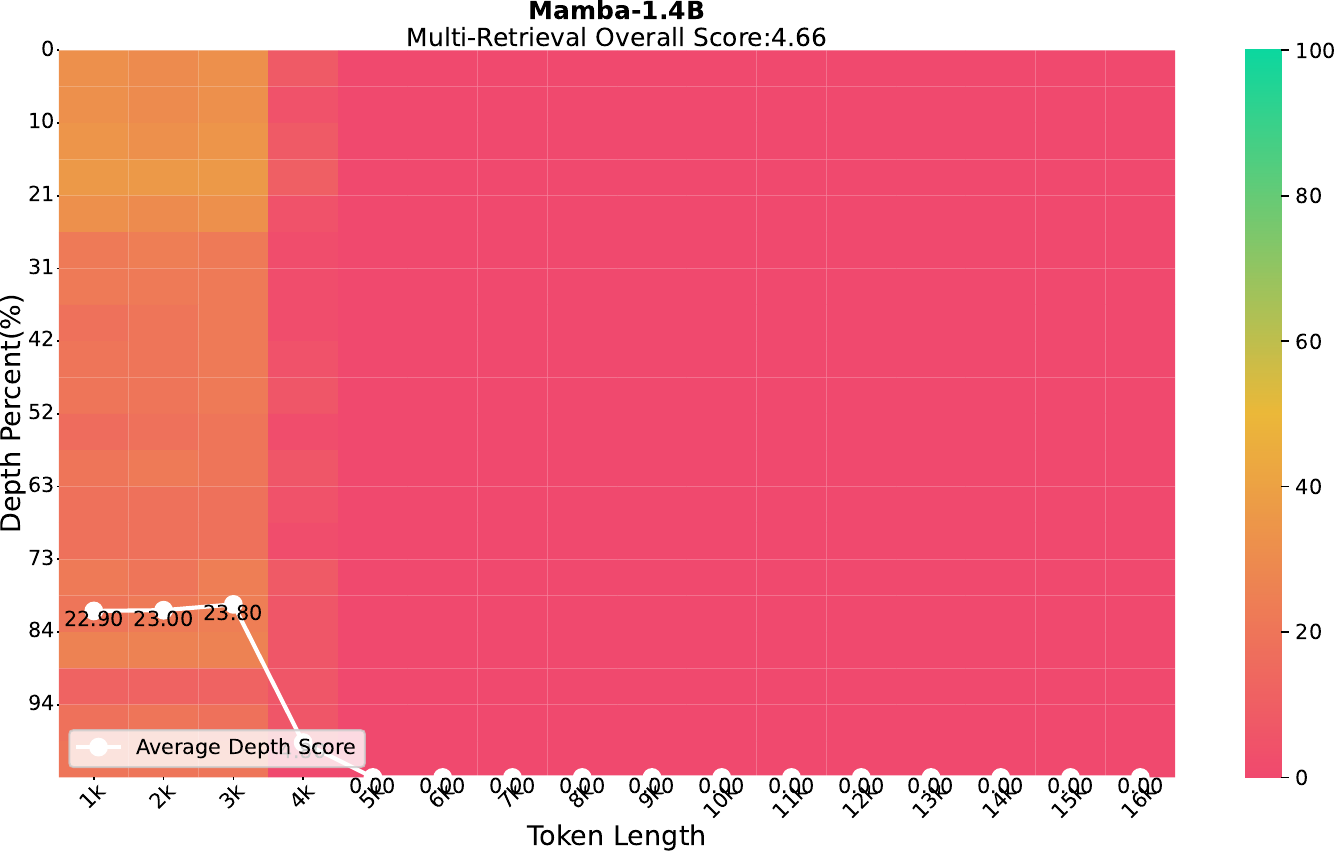}
     \end{subfigure}
    \\
    \centering
    \begin{subfigure}[t]{.48\textwidth}
         \centering         
         \includegraphics[width=\textwidth]{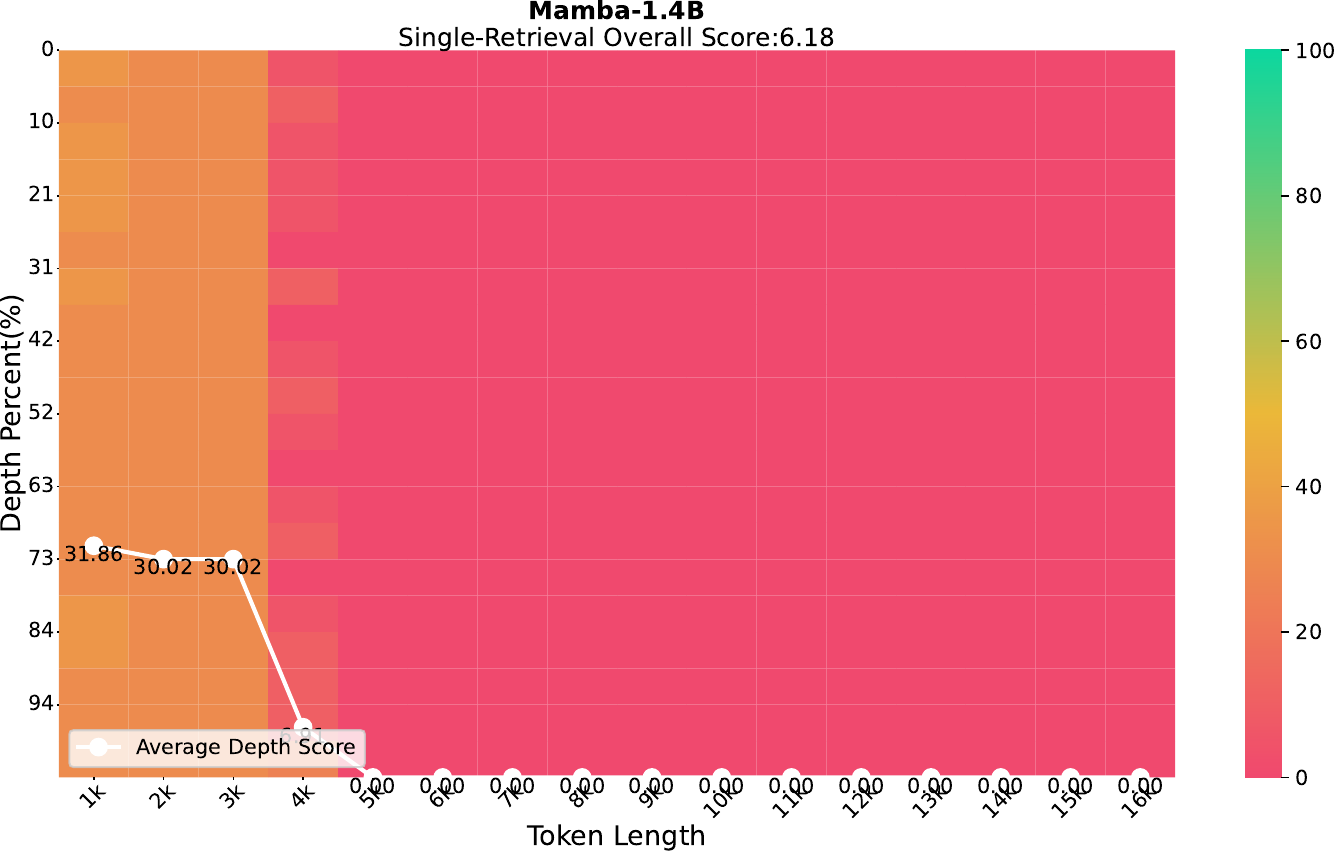}
     \end{subfigure}
     \hfill
     \begin{subfigure}[t]{.48\textwidth}
         \centering
         \includegraphics[width=\textwidth]{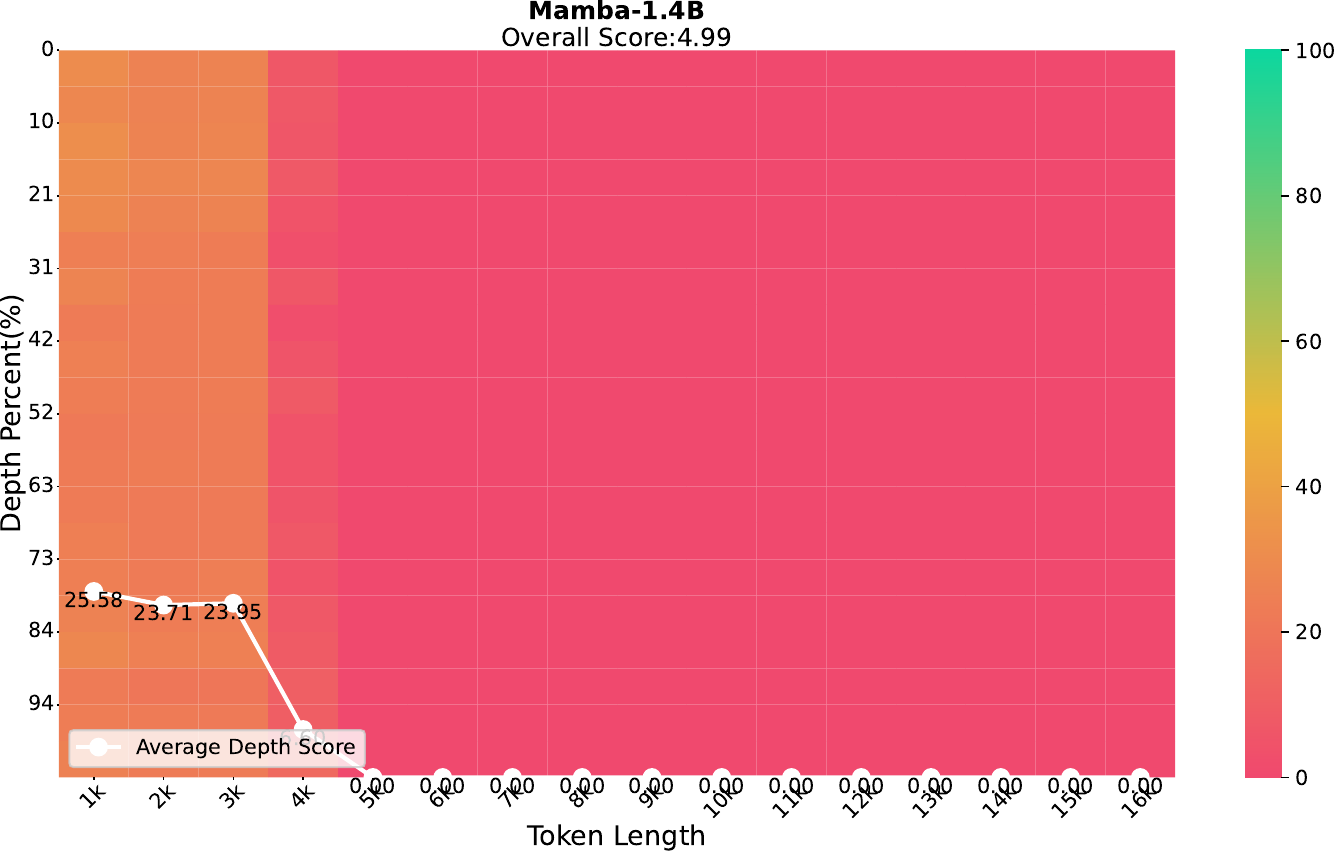}
     \end{subfigure}
    
    \vspace{-5pt}
    \caption{ Mamba-1.4B
    }
    \label{fig:more_needlebench:mamba1}
\end{figure}

\begin{figure}[htbp]
    \centering
    \begin{subfigure}[t]{.48\textwidth}
         \centering         
         \includegraphics[width=\textwidth]{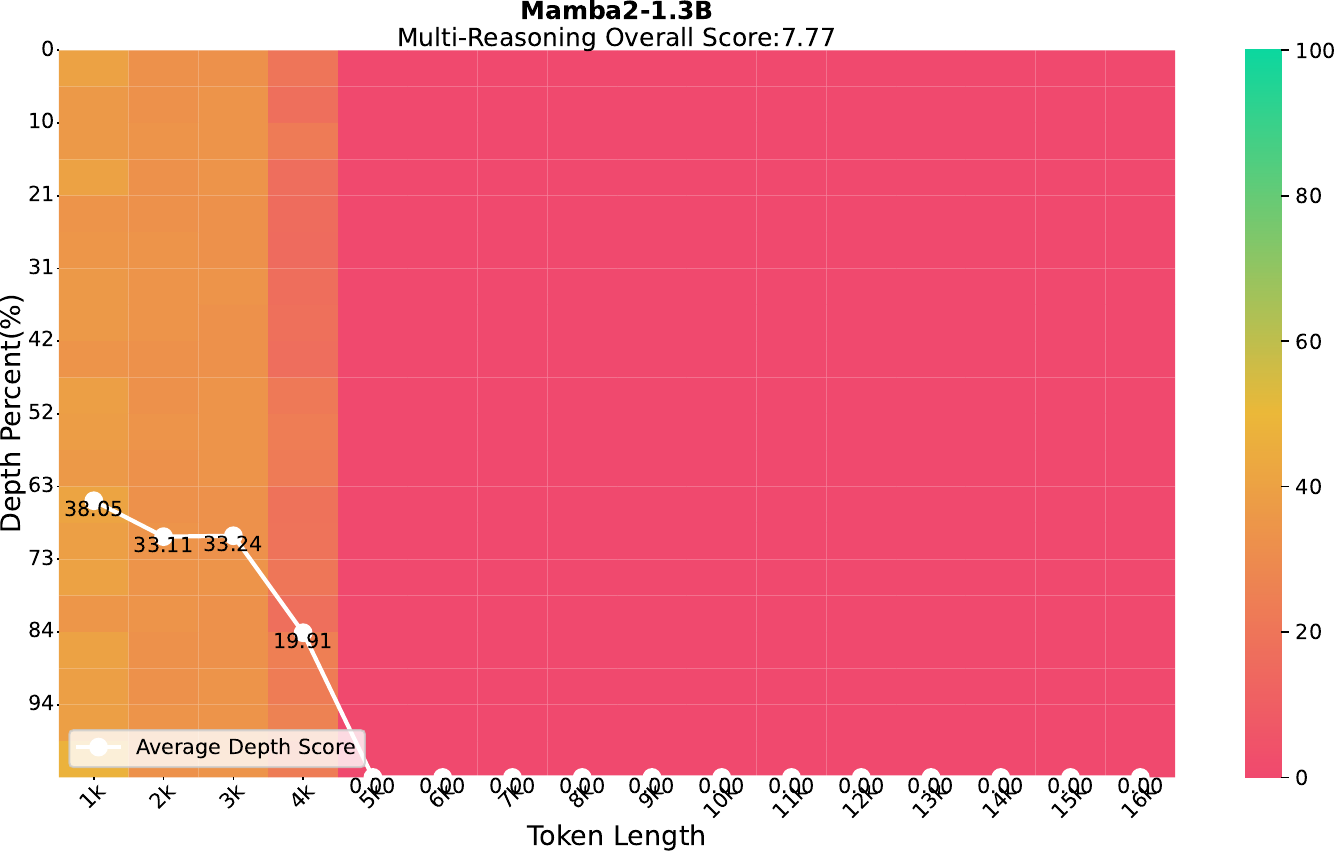}
     \end{subfigure}
     \hfill
     \begin{subfigure}[t]{.48\textwidth}
         \centering
         \includegraphics[width=\textwidth]{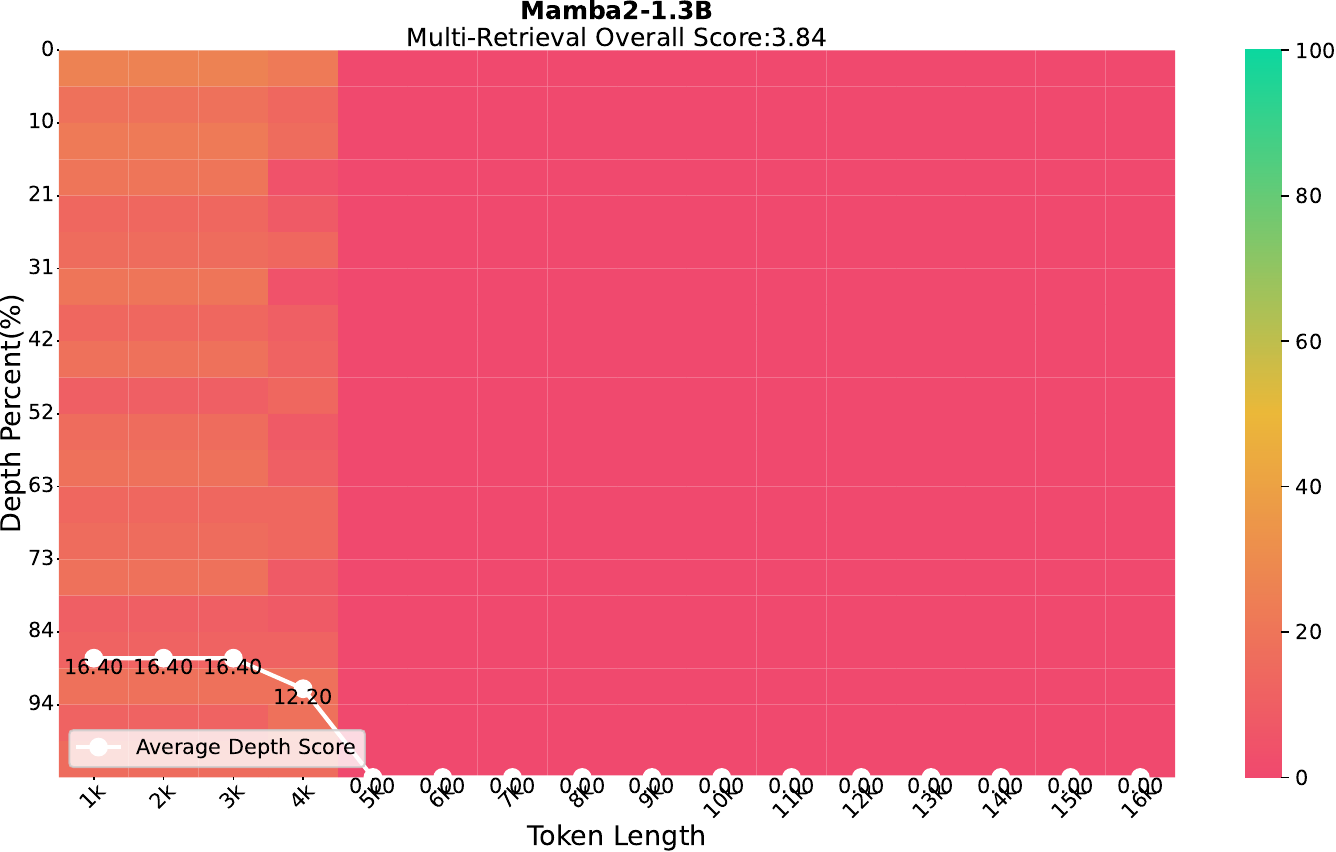}
     \end{subfigure}
    \\
    \centering
    \begin{subfigure}[t]{.48\textwidth}
         \centering         
         \includegraphics[width=\textwidth]{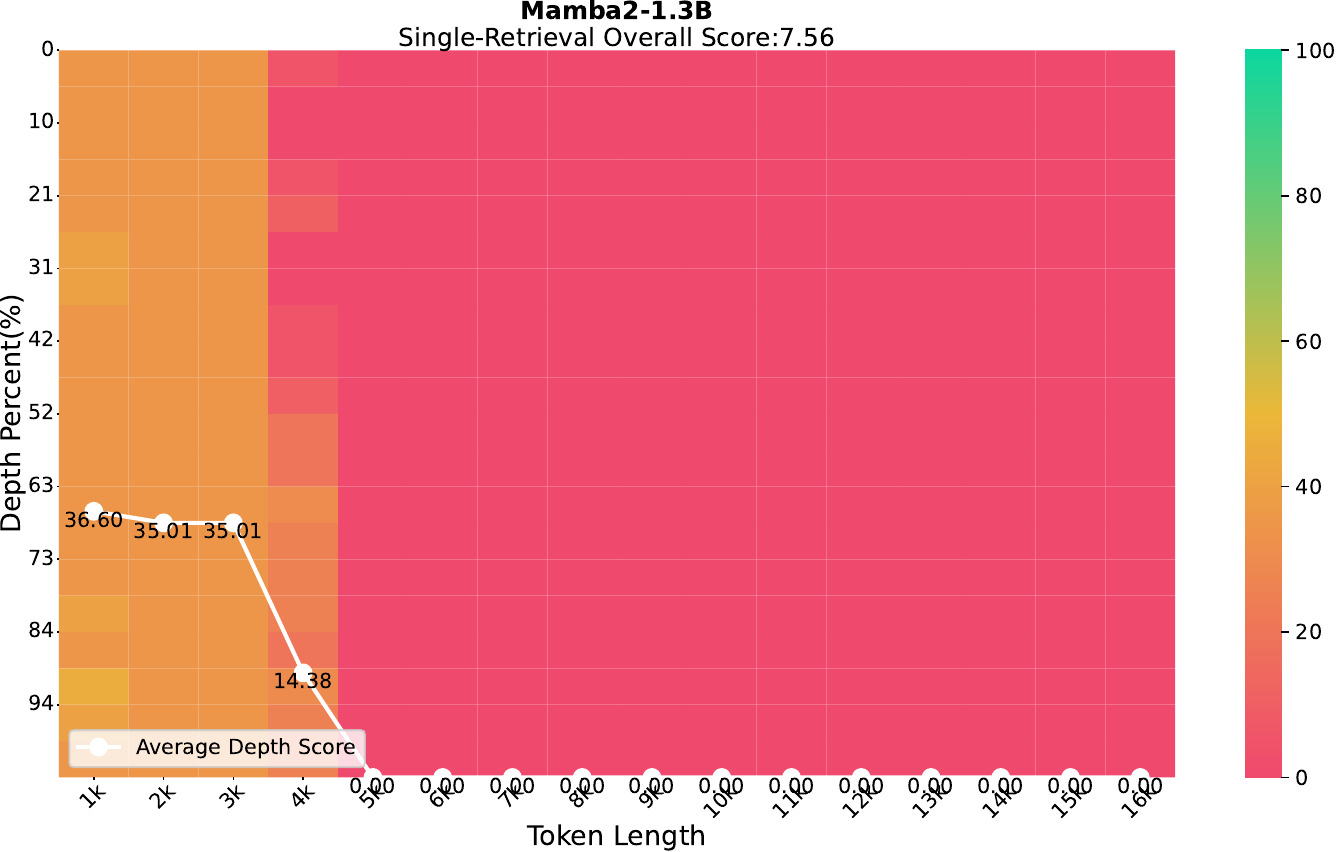}
     \end{subfigure}
     \hfill
     \begin{subfigure}[t]{.48\textwidth}
         \centering
         \includegraphics[width=\textwidth]{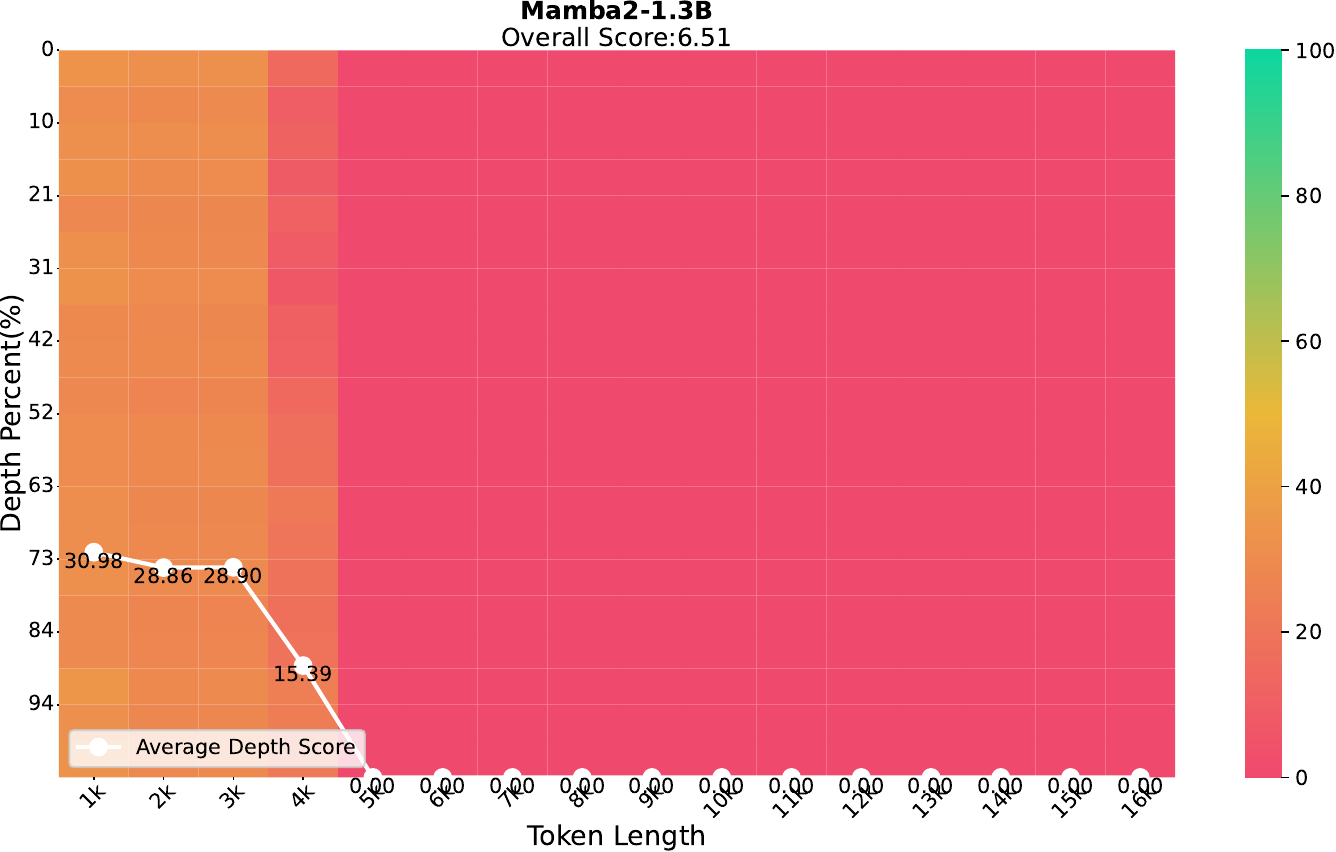}
     \end{subfigure}
    
    \vspace{-5pt}
    \caption{ Mamba2-1.3B
    }
    \label{fig:more_needlebench:mamba2}
\end{figure}

\begin{figure}[htbp]
    \centering
    \begin{subfigure}[t]{.48\textwidth}
         \centering         
         \includegraphics[width=\textwidth]{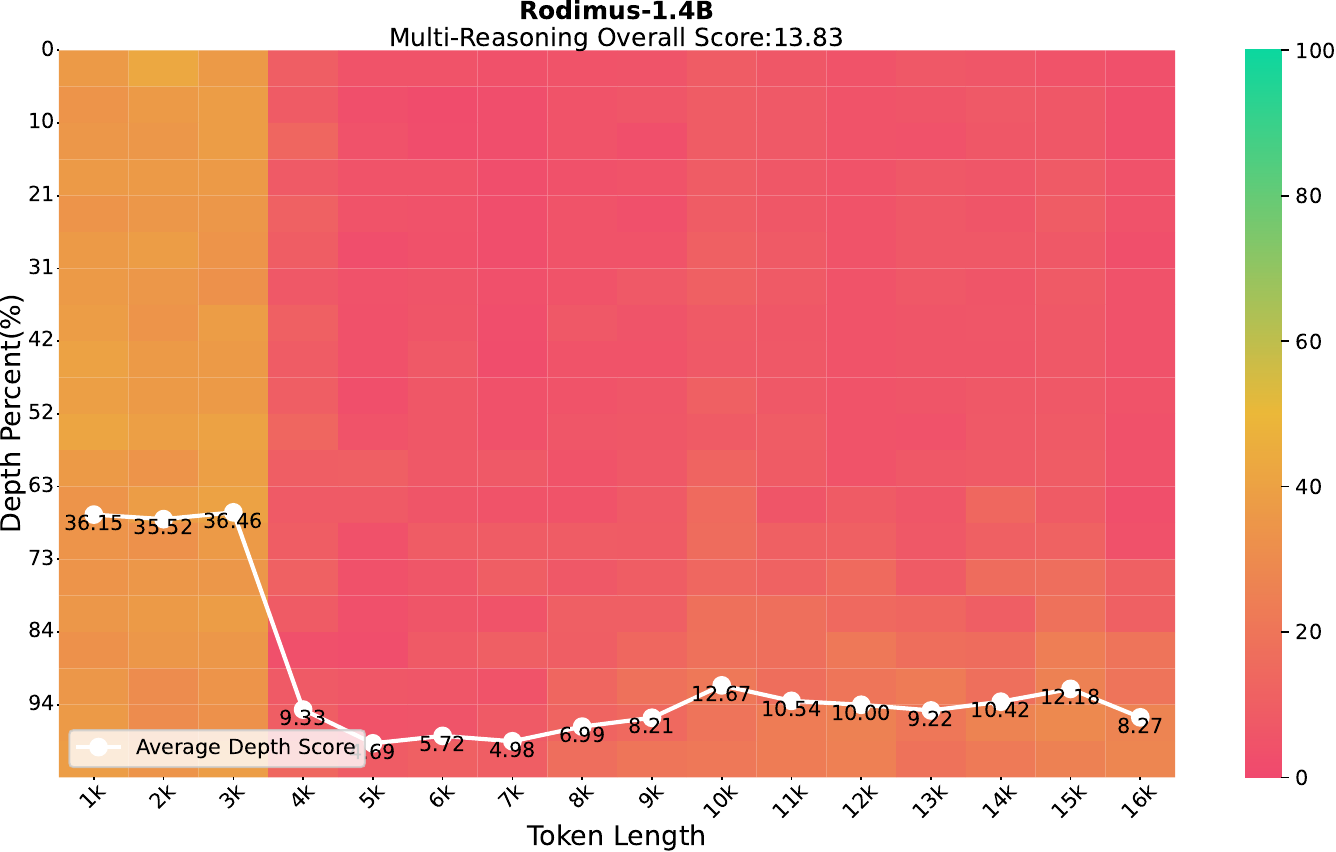}
     \end{subfigure}
     \hfill
     \begin{subfigure}[t]{.48\textwidth}
         \centering
         \includegraphics[width=\textwidth]{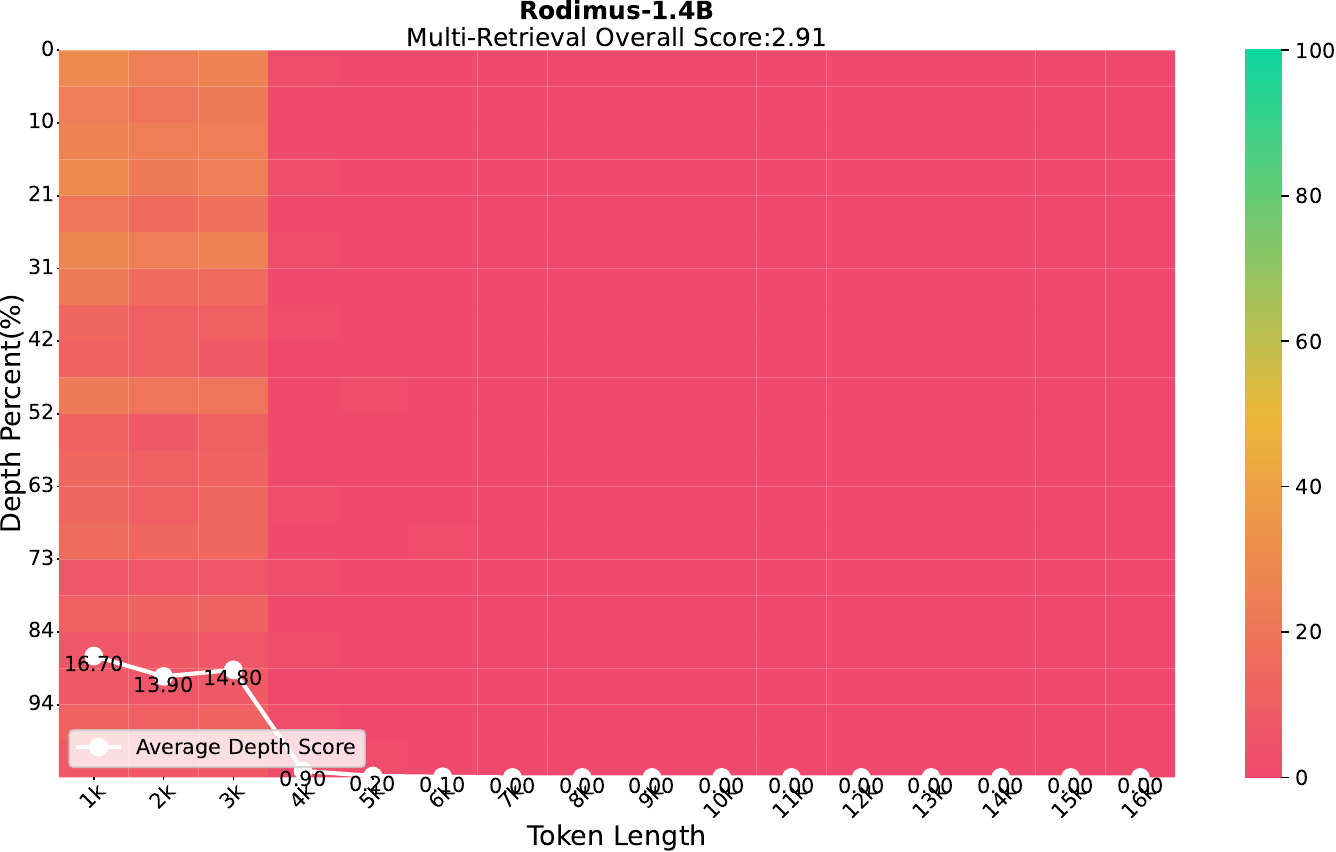}
     \end{subfigure}
    \\
    \centering
    \begin{subfigure}[t]{.48\textwidth}
         \centering         
         \includegraphics[width=\textwidth]{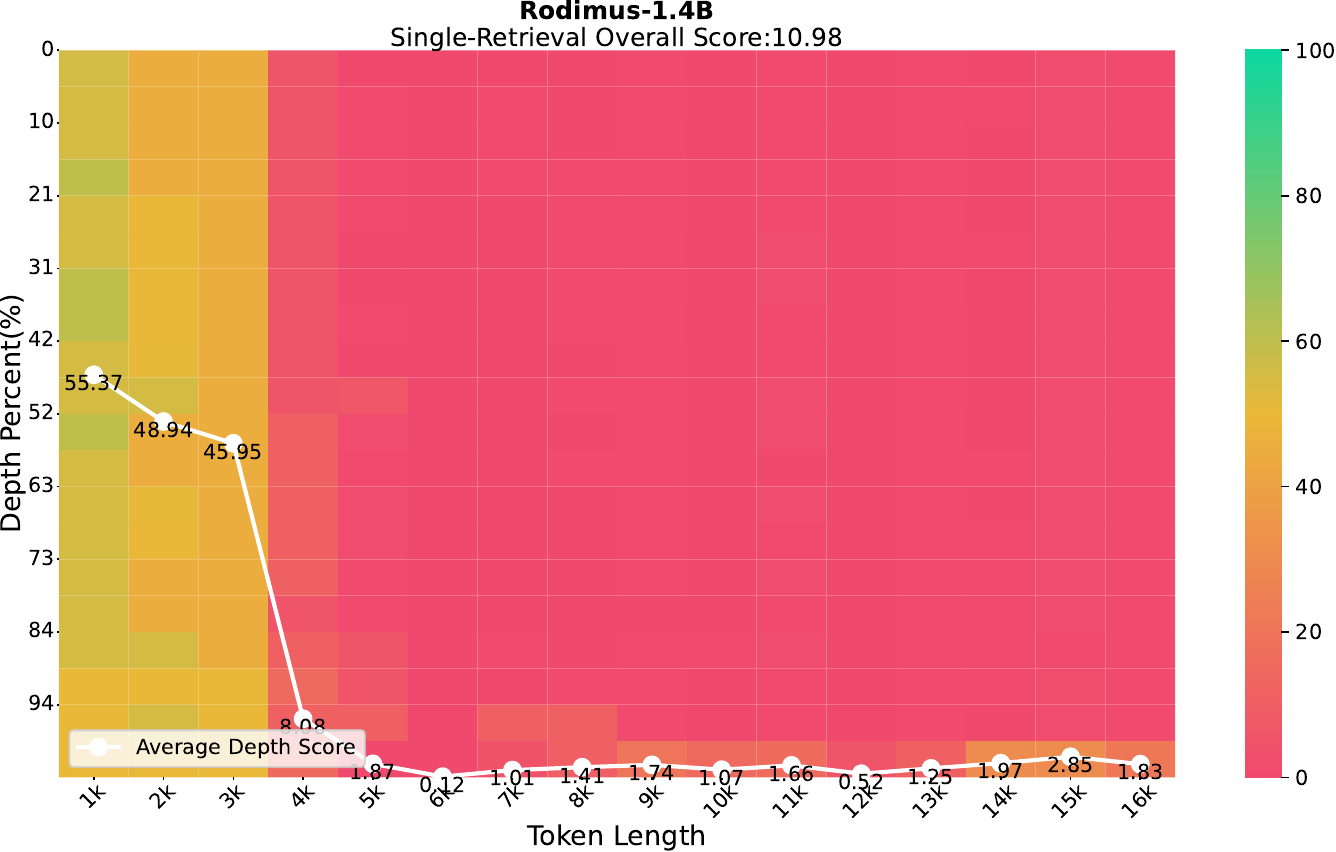}
     \end{subfigure}
     \hfill
     \begin{subfigure}[t]{.48\textwidth}
         \centering
         \includegraphics[width=\textwidth]{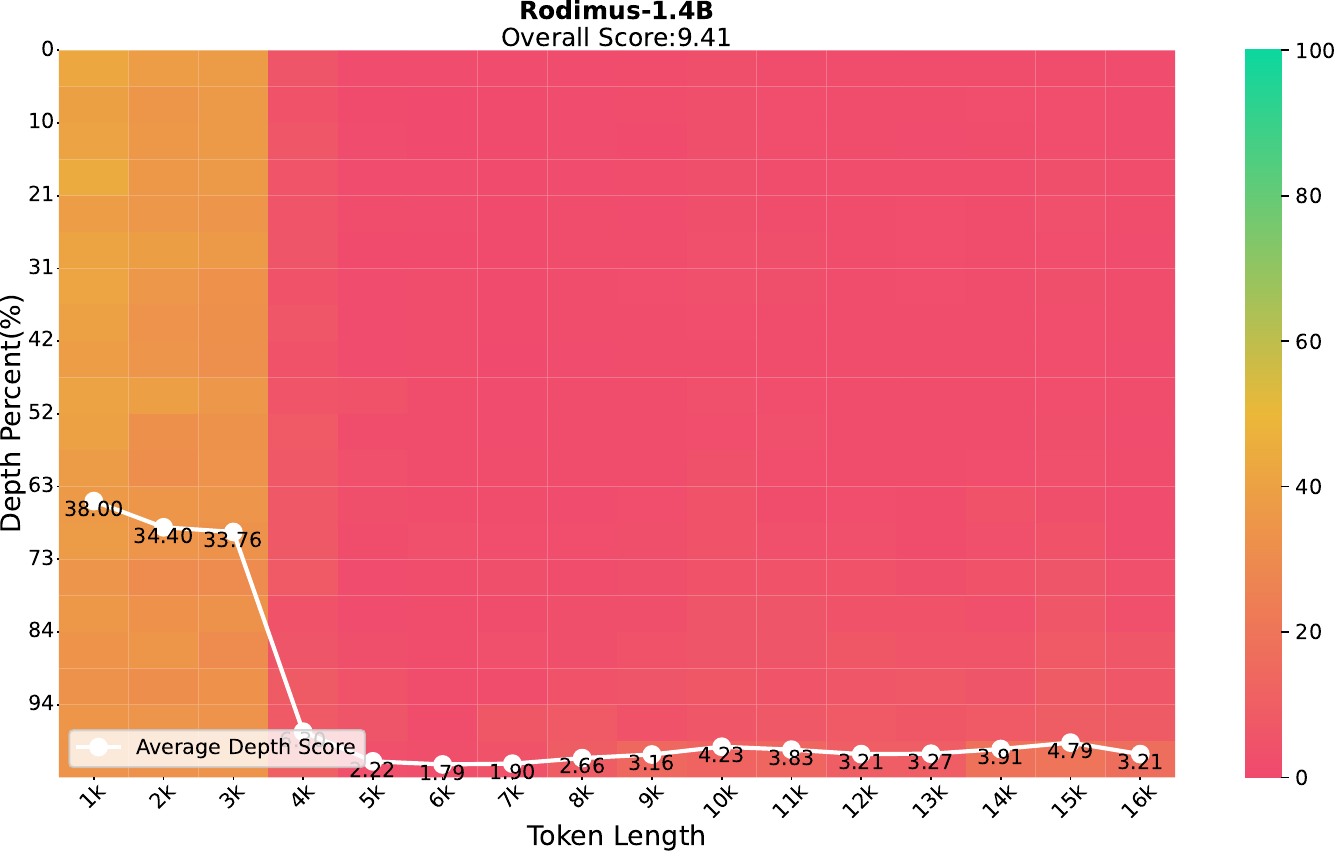}
     \end{subfigure}
    
    \vspace{-5pt}
    \caption{Rodimus-1.4B
    }
    \label{fig:more_needlebench:rodimus_2k}
\end{figure}

\begin{figure}[htbp]
    \centering
    \begin{subfigure}[t]{.48\textwidth}
         \centering         
         \includegraphics[width=\textwidth]{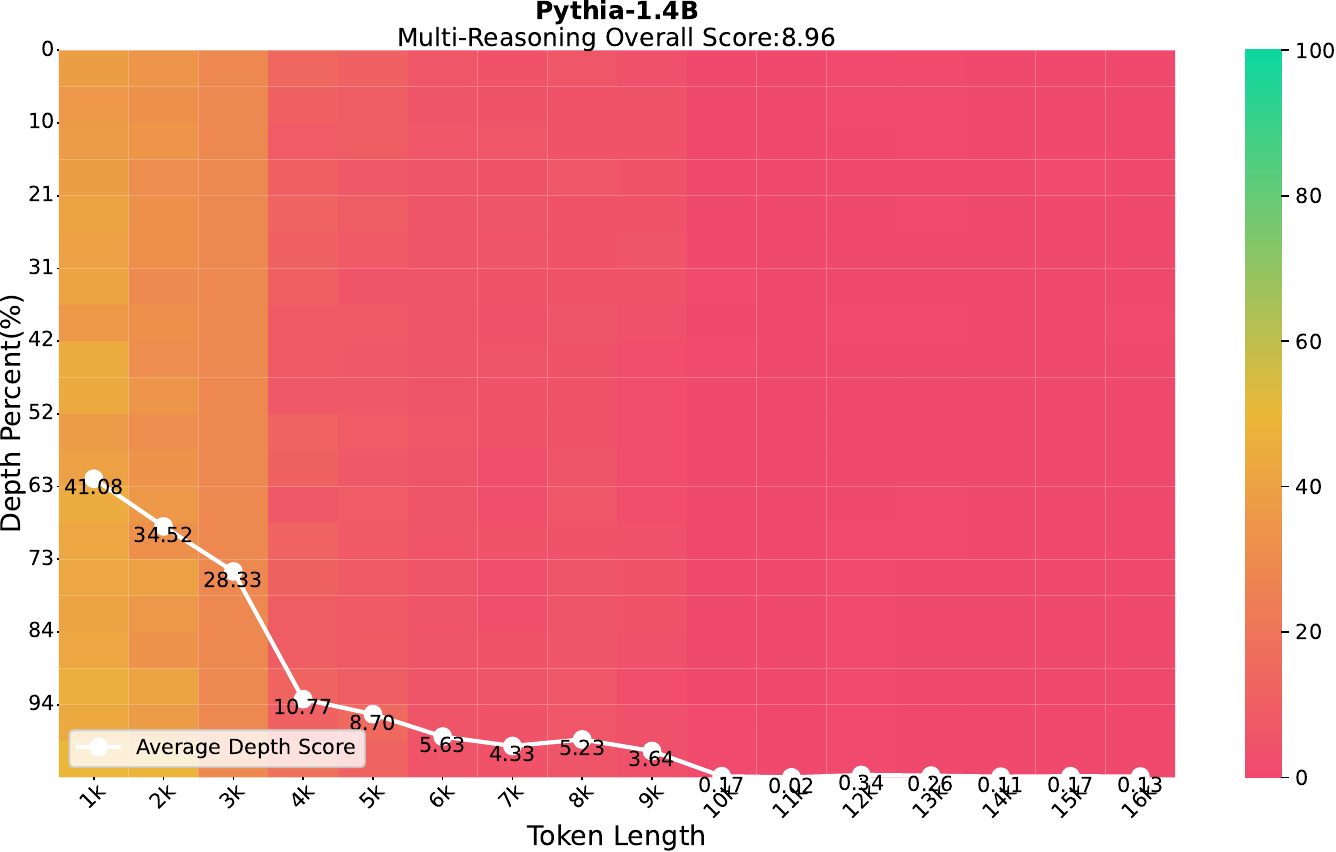}
     \end{subfigure}
     \hfill
     \begin{subfigure}[t]{.48\textwidth}
         \centering
         \includegraphics[width=\textwidth]{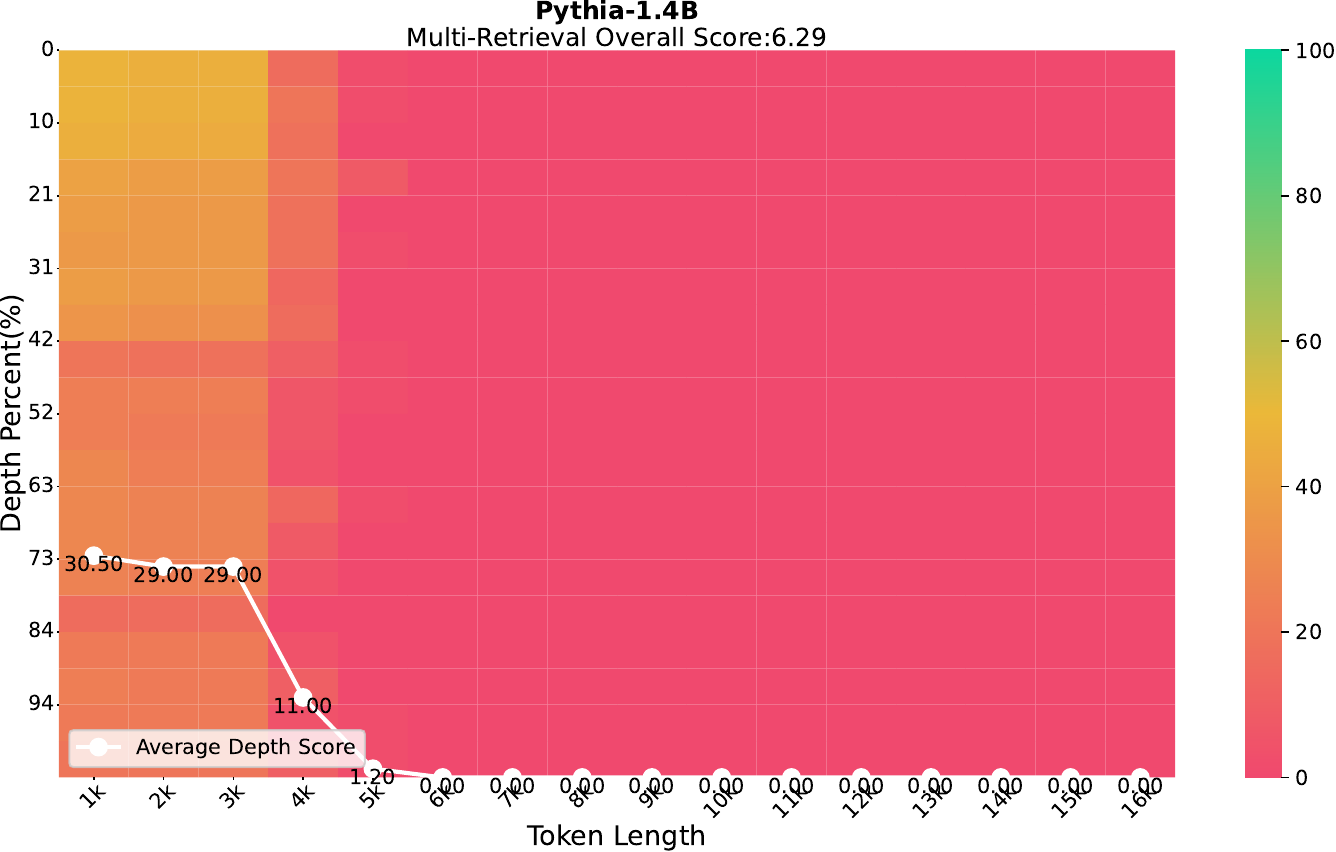}
     \end{subfigure}
    \\
    \centering
    \begin{subfigure}[t]{.48\textwidth}
         \centering         
         \includegraphics[width=\textwidth]{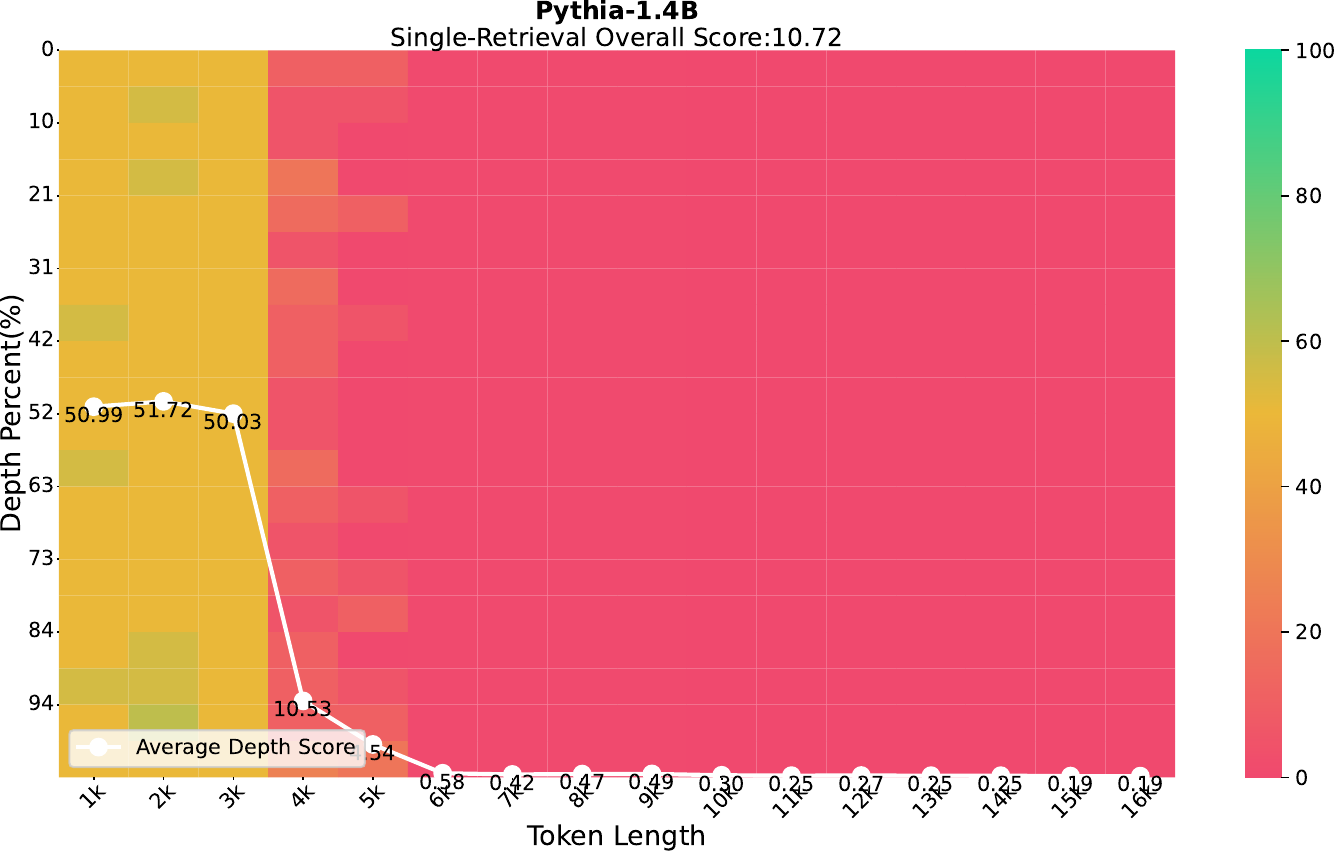}
     \end{subfigure}
     \hfill
     \begin{subfigure}[t]{.48\textwidth}
         \centering
         \includegraphics[width=\textwidth]{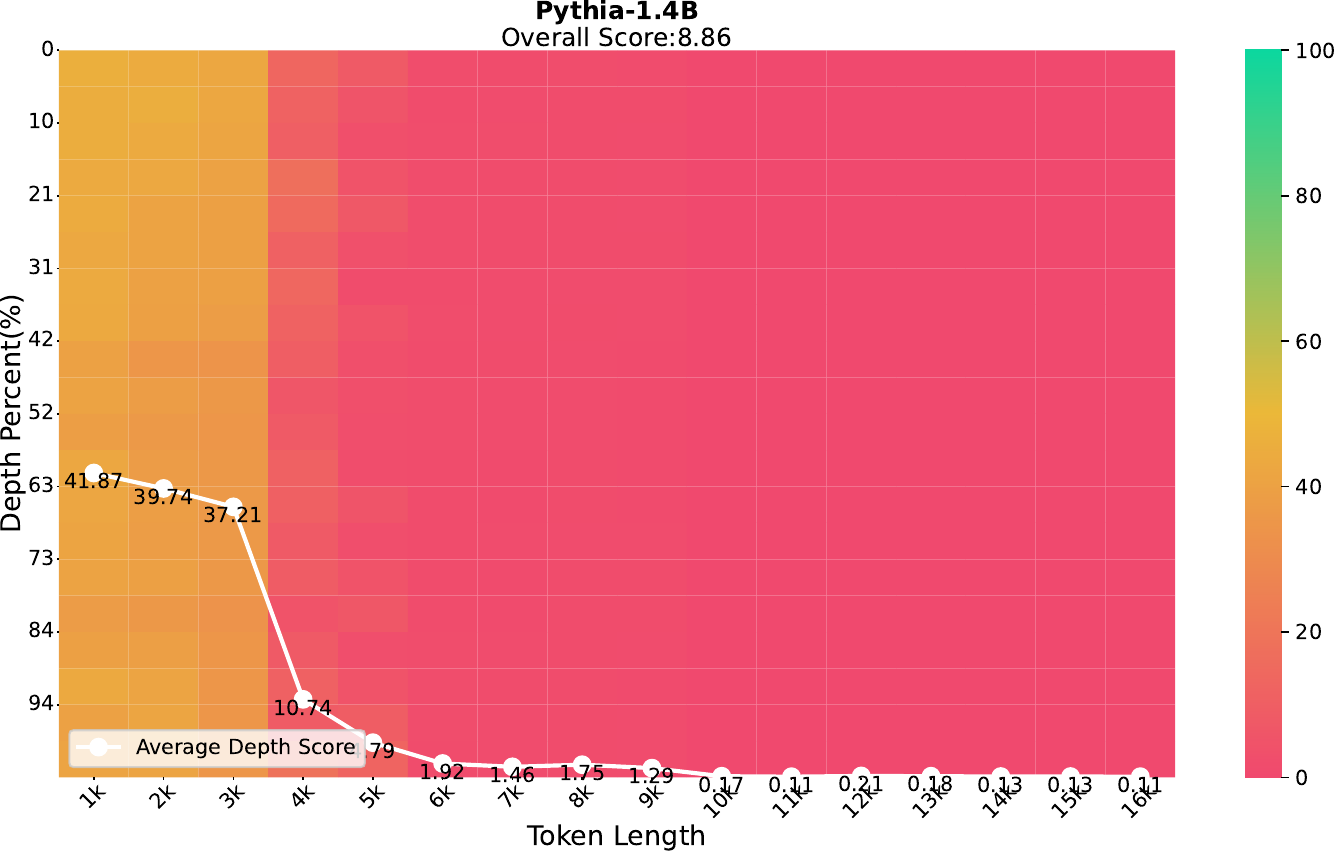}
     \end{subfigure}
    
    \vspace{-5pt}
    \caption{Pythia-1.4B
    }
    \label{fig:more_needlebench:pythia}
\end{figure}

\begin{figure}[htbp]
    \centering
    \begin{subfigure}[t]{.48\textwidth}
         \centering         
         \includegraphics[width=\textwidth]{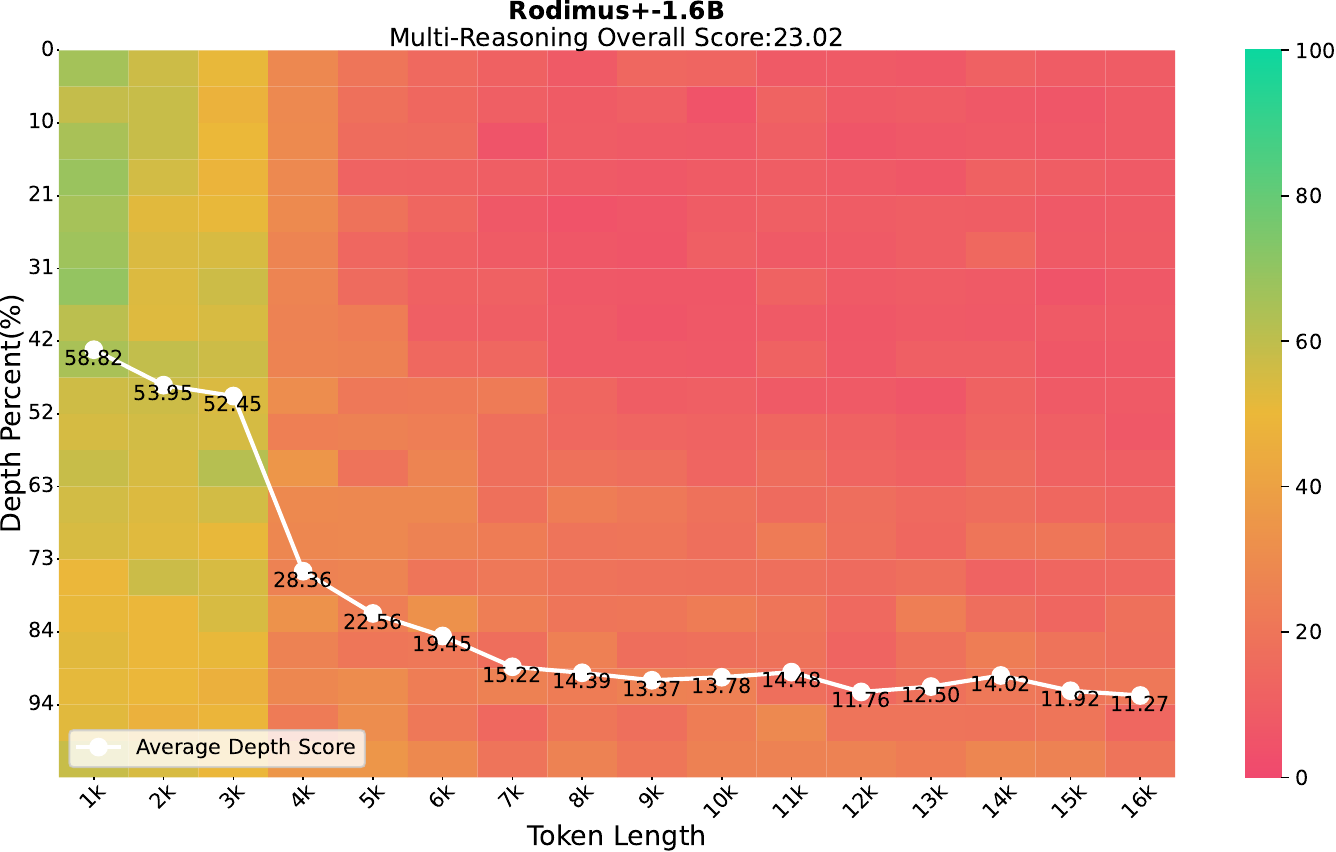}
     \end{subfigure}
     \hfill
     \begin{subfigure}[t]{.48\textwidth}
         \centering
         \includegraphics[width=\textwidth]{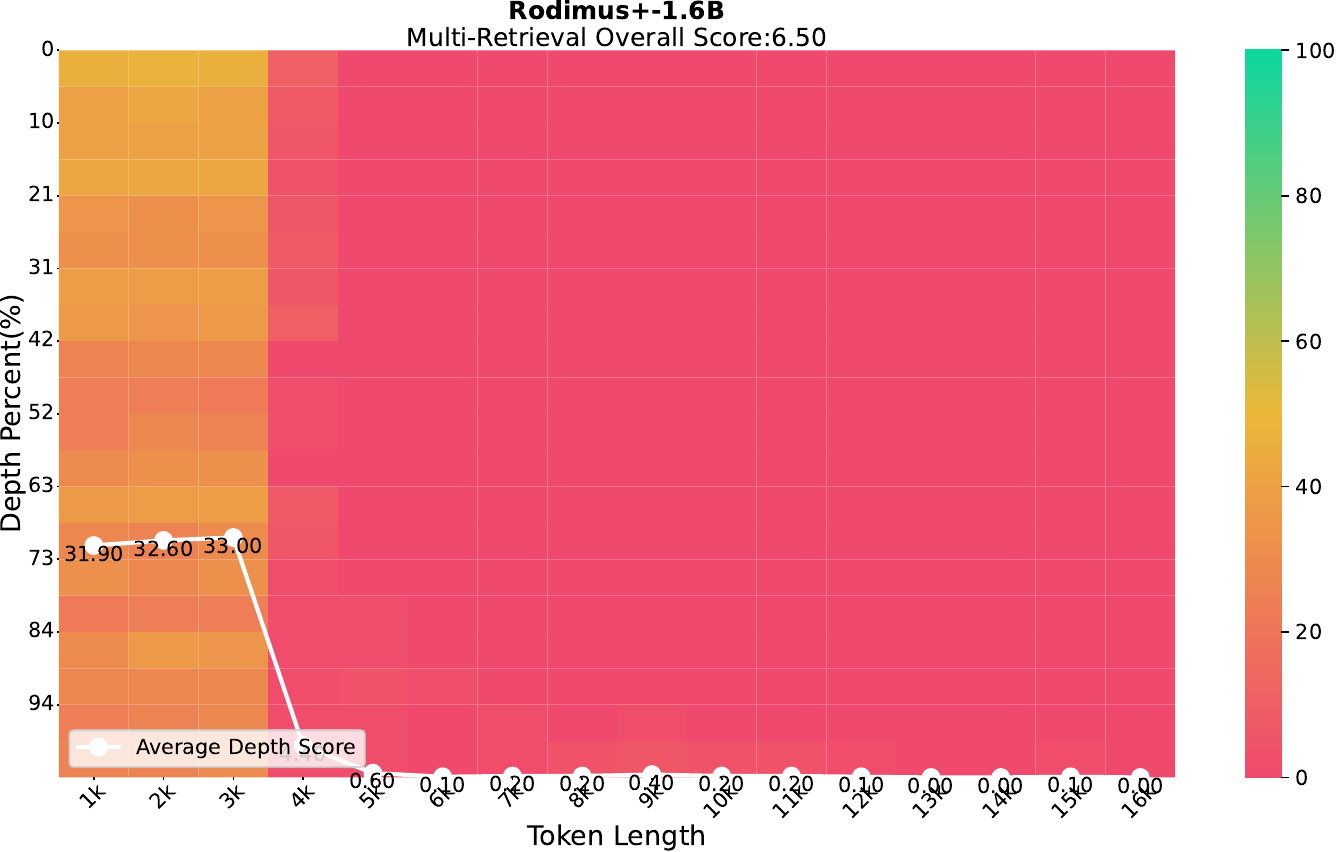}
     \end{subfigure}
    \\
    \centering
    \begin{subfigure}[t]{.48\textwidth}
         \centering         
         \includegraphics[width=\textwidth]{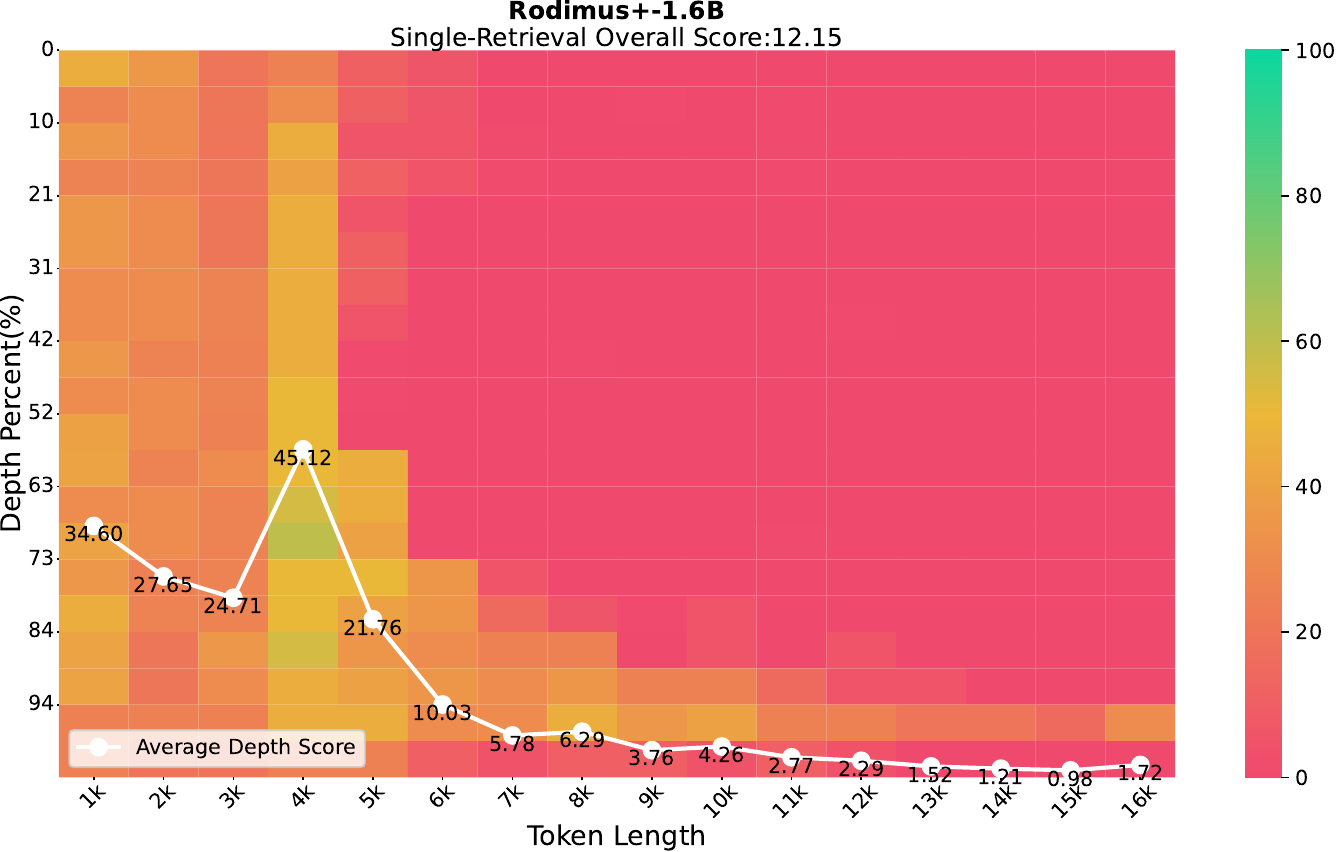}
     \end{subfigure}
     \hfill
     \begin{subfigure}[t]{.48\textwidth}
         \centering
         \includegraphics[width=\textwidth]{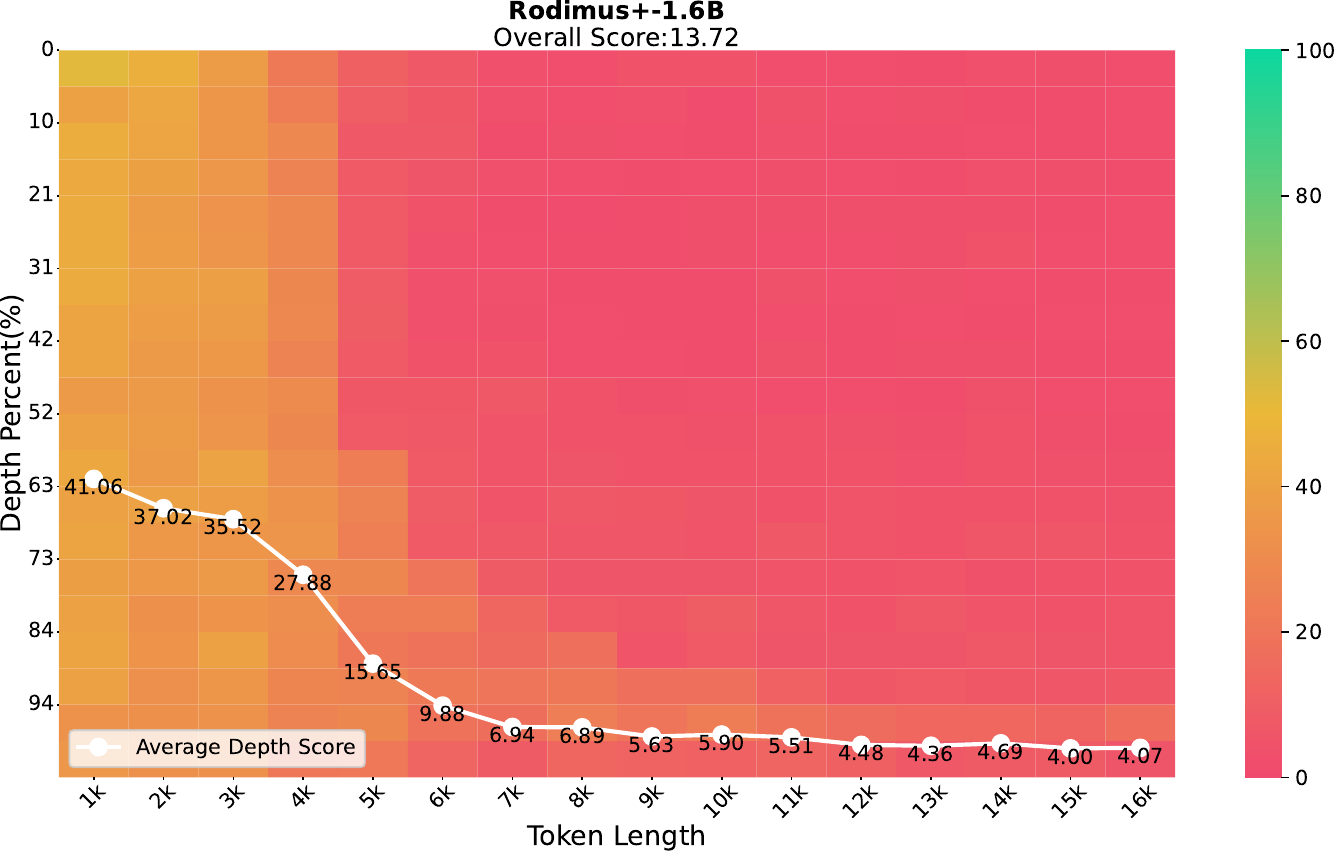}
     \end{subfigure}
    
    \vspace{-5pt}
    \caption{ Rodimus$+$-1.6B-2K
    }
    \label{fig:more_needlebench:rodimus_plus_2k}
\end{figure}

\begin{figure}[htbp]
    \centering
    \begin{subfigure}[t]{.48\textwidth}
         \centering         
         \includegraphics[width=\textwidth]{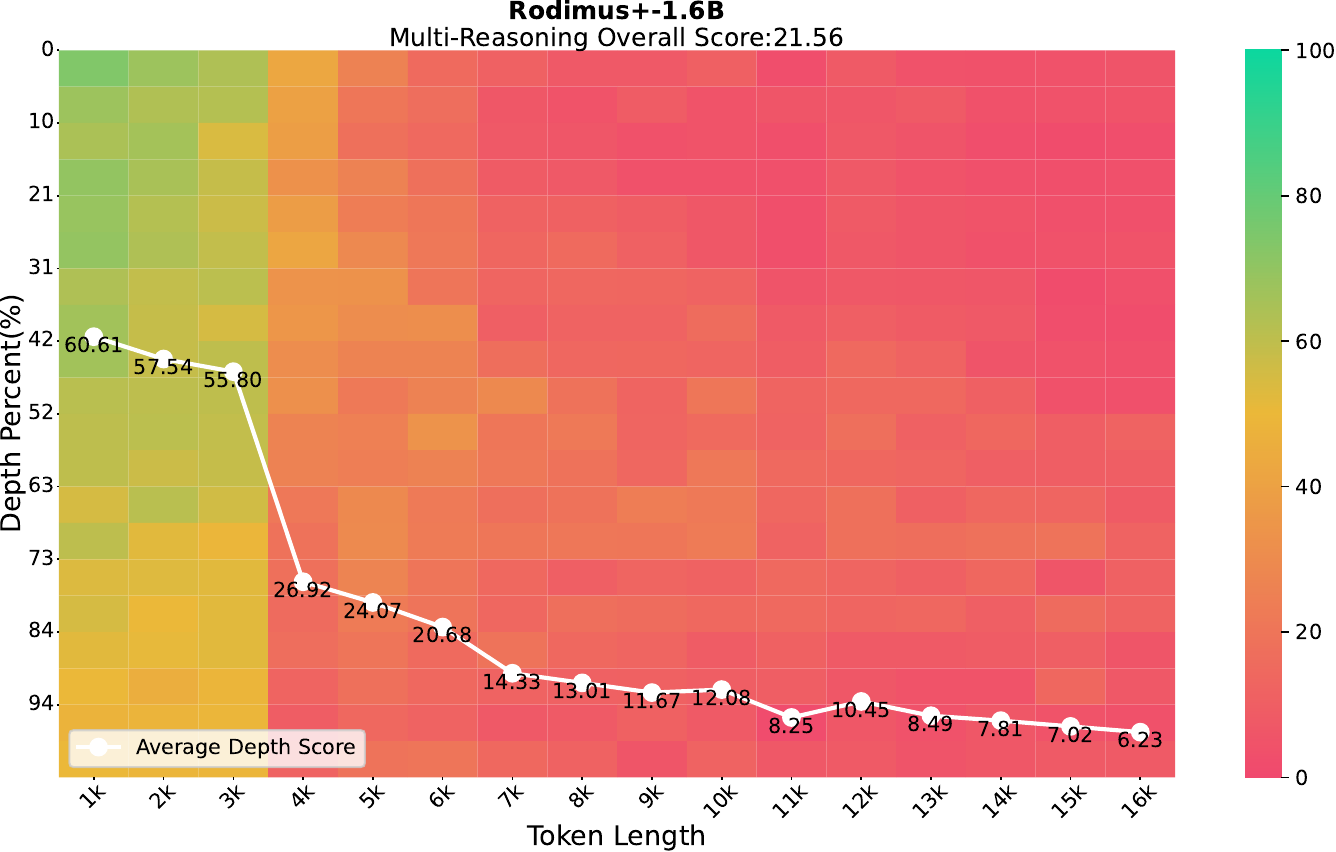}
     \end{subfigure}
     \hfill
     \begin{subfigure}[t]{.48\textwidth}
         \centering
         \includegraphics[width=\textwidth]{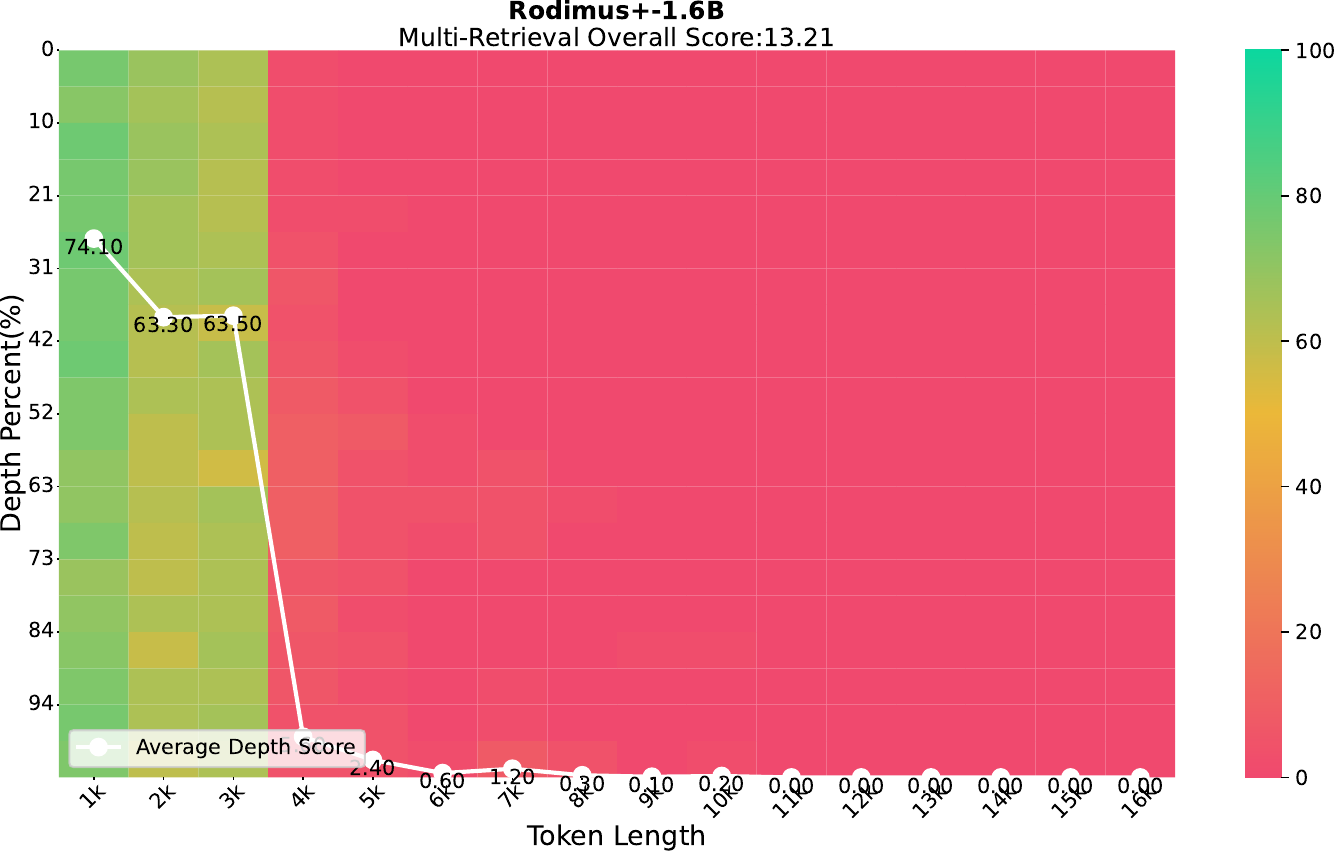}
     \end{subfigure}
    \\
    \centering
    \begin{subfigure}[t]{.48\textwidth}
         \centering         
         \includegraphics[width=\textwidth]{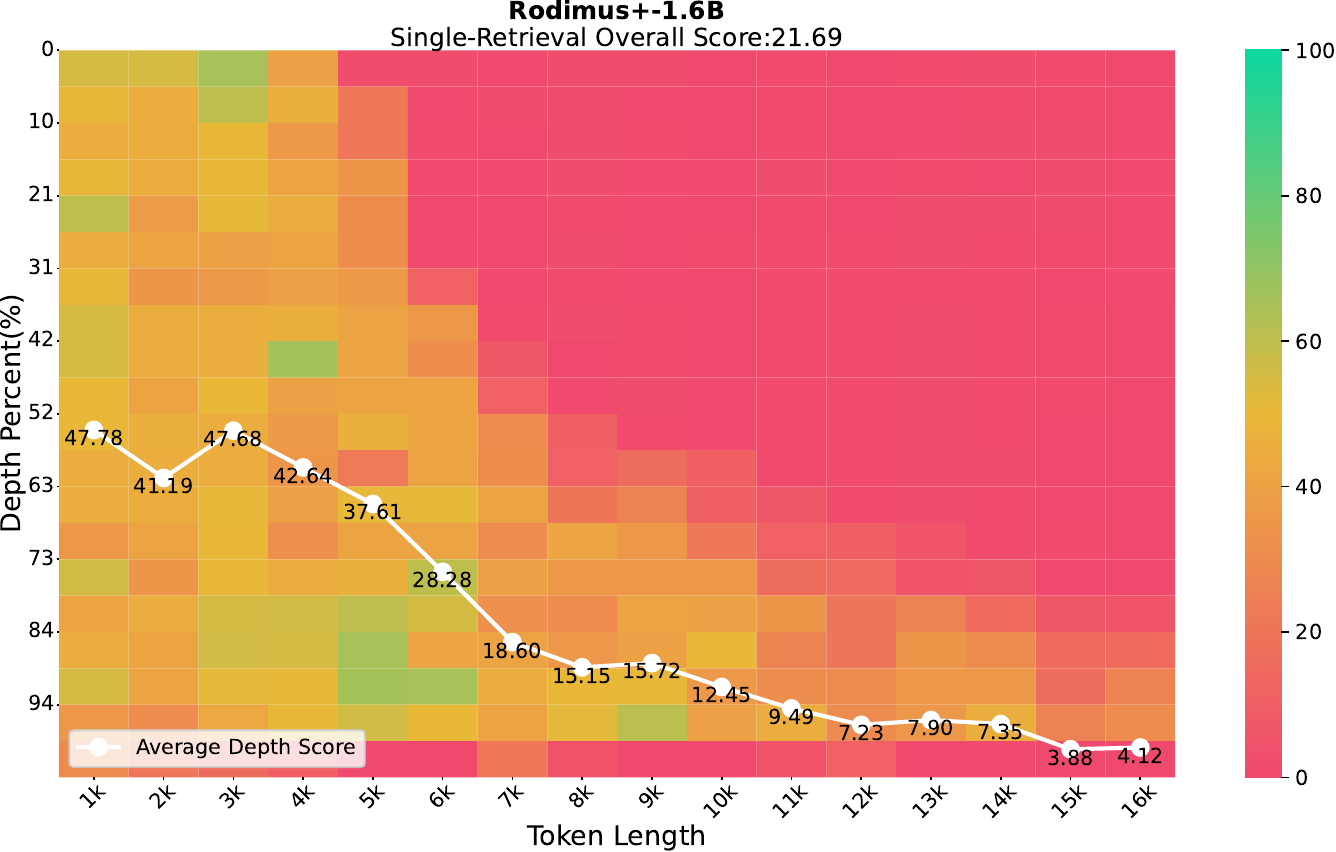}
     \end{subfigure}
     \hfill
     \begin{subfigure}[t]{.48\textwidth}
         \centering
         \includegraphics[width=\textwidth]{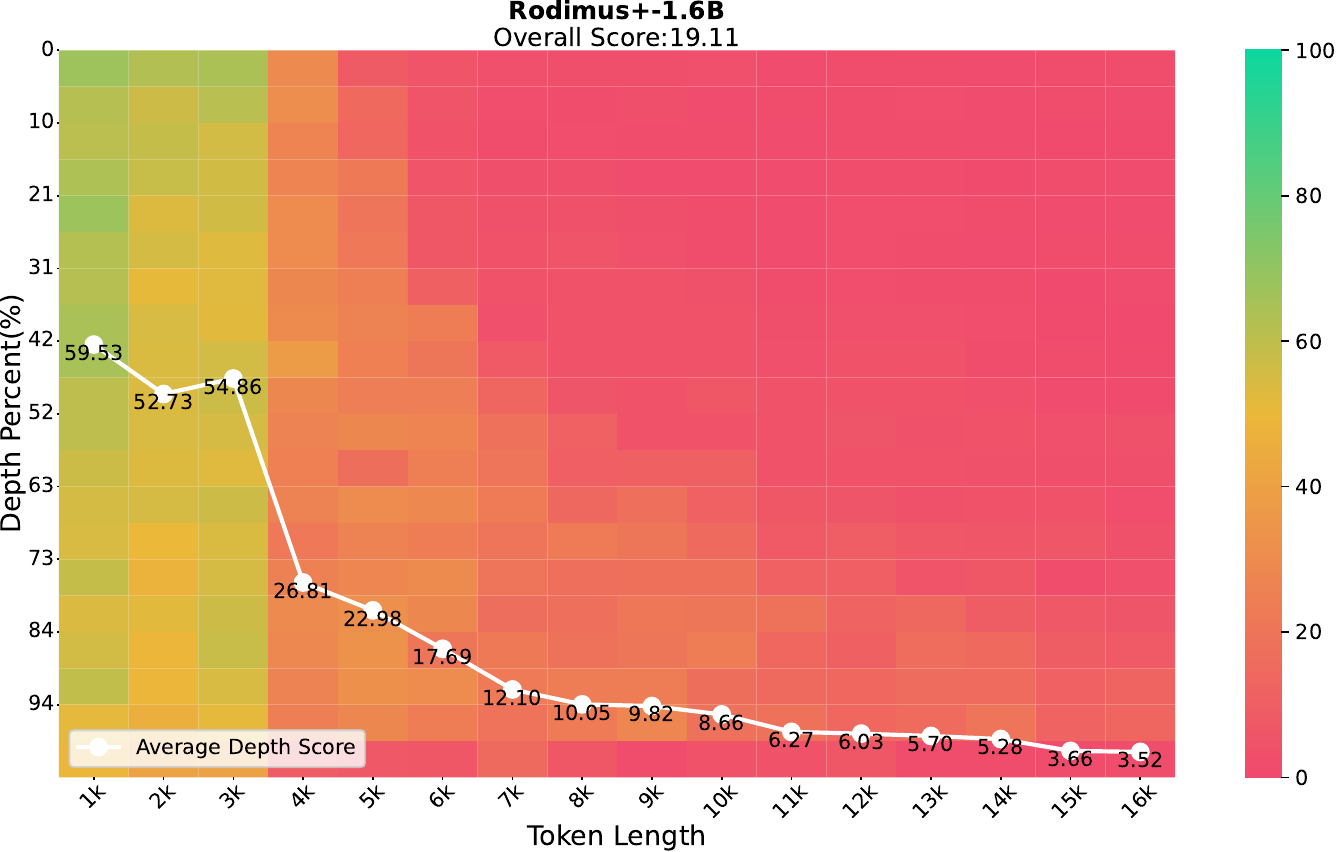}
     \end{subfigure}
    
    \vspace{-5pt}
    \caption{ Rodimus$+$-1.6B-4K
    }
    \label{fig:more_needlebench:rodimus_plus_4k}
\end{figure}

\end{document}